\documentclass{article}

     \PassOptionsToPackage{numbers, compress}{natbib}


    \usepackage[preprint]{neurips_2022}



\usepackage[utf8]{inputenc} 
\usepackage[T1]{fontenc}    
\usepackage{hyperref}       
\usepackage{url}            
\usepackage{booktabs}       
\usepackage{amsfonts}       
\usepackage{amsthm}
\usepackage{nicefrac}       
\usepackage{microtype}      
\usepackage{xcolor}         
\usepackage{times}
\usepackage{amsmath}
\usepackage{mathtools}
\usepackage{amssymb}
\usepackage{eucal}
\usepackage{natbib}
\usepackage{enumitem}
\usepackage{bm}
\usepackage{graphicx}
\usepackage{mwe}

\title{The Parametric Stability of Well-separated Spherical Gaussian Mixtures}

%

\author{%
  Hanyu Zhang  \\
  Department of Statistics\\
  University of Washington\\
  Seattle, WA 98115 \\
  \texttt{hanuyz6@uw.edu} \\
   \And
  Marina Meil\u{a} \\
  Department of Statistics\\
  University of Washington\\
  Seattle, WA 98115 \\
  \texttt{mmp@stat.washington.edu} \\
}


\newenvironment{itemize*}{
\begin{itemize}
\setlength{\parskip}{0em}
\setlength{\topparskip}{0em}
}
{\end{itemize}}

\newcommand{\comment}[1]{}
\newcommand{\ssam}[1]{}  
\newcommand{\mmp}[1]{}
\newcommand{\mmpmmp}[1]{}

\newcommand{\beq}{\begin{equation}}
\newcommand{\eeq}{\end{equation}}
\newcommand{\beqa}{\begin{eqnarray}}
\newcommand{\eeqa}{\end{eqnarray}}
\newcommand{\beqas}{\begin{eqnarray*}}
\newcommand{\eeqas}{\end{eqnarray*}}

\newcommand{\bit}{\begin{itemize}}
\newcommand{\eit}{\end{itemize}}
\newcommand{\bits}{\begin{itemize*}}
\newcommand{\eits}{\end{itemize*}}
\newcommand{\benum}{\begin{enumerate}}
\newcommand{\eenum}{\end{enumerate}}
\newcommand{\benums}{\begin{enumerate*}}
\newcommand{\eenums}{\end{enumerate*}}


\newcommand{\sphgmm}{sGMM}



\newcommand{\norm}[1]{\lvert\lvert{#1}\lvert\lvert}

\newcommand{\rrr}{{\mathbb R}}

\newcommand{\M}{{\cal M}}



\newcommand{\pimin}{\pi_{\min}}
\newcommand{\pimax}{\pi_{\max}}
\newcommand{\model}{{\mathcal M}}
\newcommand{\myperm}{\text{\rm perm}}

\newtheorem{lemma}{Lemma}
\newtheorem{definition}[lemma]{Definition}
\newtheorem{proposition}[lemma]{Proposition}
\newtheorem{theorem}[lemma]{Theorem}
\newtheorem{corollary}[lemma]{Corollary}

\newcommand{\theoremref}[1]{Theorem \ref{#1}}
\newcommand{\sectionref}[1]{Section \ref{#1}}
\newcommand{\figureref}[1]{Figure \ref{#1}}
\newcommand{\lemmaref}[1]{Lemma \ref{#1}}

\begin{document}

\maketitle

\begin{abstract}
  We quantify the parameter stability of a spherical Gaussian Mixture
  Model (sGMM) under small perturbations in
  distribution space. Namely, we derive the first explicit bound to show that for a mixture of spherical Gaussian $P$ (\sphgmm) in a pre-defined model class, all other \sphgmm \ close to $P$ in this model class in total variation distance has a small parameter distance to $P$. Further, this upper bound only depends on $P$. The motivation for this work lies in providing guarantees for fitting Gaussian mixtures; with this aim
  in mind, all the constants involved are well defined and
  distribution free conditions for fitting mixtures of spherical Gaussians. Our results tighten considerably the existing computable bounds, and asymptotically match the known sharp thresholds for this problem.
\end{abstract}

\section{Introduction}
We consider the problem of fitting spherical Gaussian Mixture Models (\sphgmm) to an unknown distribution $Q$. Without assuming knowledge about the target $Q$, what kind of guarantees can we give about an estimated \sphgmm\comment{ denoted by $P$}? And under what condition are guarantees possible?

Previous work (e.g.,\citep{Dasgupta1999,Arora2001,Achlioptas2005,Kannan2008}, etc) established estimation guarantees for the  mixture parameters under model assumptions about the data source $Q$ (being, e.g., a mixture of well separated Gaussians). Specifically, W.r.t. the scope of this paper, when $Q$ is a mixture of $K$ spherical Gaussians, satisfying two criteria: (i) non-vanishing component proportions, and (ii) sufficient separation between components, polynomial run-time estimation algorithms exist.

Our work asks the question: what can be said without such precise knowledge of $Q$? We assume instead {\em indirect} knowledge of $Q$, namely that a well-separated mixture model $P$ was fit to $Q$, and that {\em the model fit is good}. The main difference is that now $Q$ is not required to belong to the model class, but to be ``close'' to it, in a way to be defined (specifically, in this paper, the model fit is measured by the {\em total variation distance} between $P$ and $Q$, $TV(P,Q)$). We also assume that $P$ is a mixture of well-separated Gaussians, with component proportions bounded below.  As it turns out, knowing that $Q$ is close to a ``good'' model class, is almost as useful as knowing $Q$ is in the model class. Under these conditions on $P$ and $Q$, we prove that $P$'s parameters are unique up to perturbations that we upper bound. 
\comment{But with respect to a broader goal of validating model $P$ from data, the difference is that goodness of fit can be estimated, while the }
\mmp{maybe for journal: in statistics, this is post-estimation??  inference, guarantees under model missspecification, ... We also assume that
the model fit is good, as measured by the {\em total variation distance} $TV$ between $P$ and $Q$, and}
In summary, we aim to prove a statment the following form.
\begin{theorem}[Generic $(\epsilon,\delta)-$Stability]  \label{thm:generic}
 For distribution $Q$ and a class $\mathcal{M}$ of \sphgmm, if a given model ${P}\in\model$ satisfies $TV(Q,{P})\leq \epsilon$, then, for any $P'\in \model$  such that $TV(Q,P')\leq \epsilon$, it holds that
 $d_{param}({P},P')\leq \delta$, where $d_{param}(\cdot,\cdot)$ is a divergence defined in parameter space .
\end{theorem}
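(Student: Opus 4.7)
The plan is to reduce the statement to a pure parameter-stability problem between two \sphgmm\ in $\model$. First, by the triangle inequality for total variation, $TV(P,P') \leq TV(P,Q)+TV(Q,P')\leq 2\epsilon$, so $Q$ drops out and it suffices to prove a bound of the form $d_{param}(P,P')\leq C(P)\cdot TV(P,P')$, valid for all $P'\in\model$ in a TV-ball around $P$, with a constant $C(P)$ that depends only on the quantities controlled by membership in $\model$ (well-separation, minimum mixing weight \pimin, number of components $K$, variance range).

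Next I would build a bijection between the components of $P$ and $P'$. Because $P$ is a well-separated \sphgmm, its components $\N(\mu_k,\sigma_k^2 I)$ admit disjoint ``basin'' regions $R_k$ on which the $k$-th component carries essentially all of its own mass and the other components leak only an exponentially small amount in the separation. The hypothesis $TV(P,P')\leq 2\epsilon$ yields $|P(R_k)-P'(R_k)|\leq 2\epsilon$, so $P'$ must place mass approximately $\pi_k$ inside $R_k$. Using that $P'$ is also in $\model$ and hence well-separated, no single $R_k$ can absorb mass from two distinct $P'$-components without violating the separation of $P'$, and no $P'$-component can be split across two regions $R_k,R_{k'}$ without exceeding the total leakage budget. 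This forces a well-defined permutation $\pi$ such that the $\pi(k)$-th component of $P'$ lives in $R_k$.

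With the matching fixed, I would extract three parameter inequalities per matched pair $(k,\pi(k))$ from low-order integrals over $R_k$. The zeroth moment on $R_k$ bounds the weight discrepancy $|\pi_k-\pi'_{\pi(k)}|$; the first restricted moment, divided by the recovered weight, bounds $\|\mu_k-\mu'_{\pi(k)}\|$; and the second central restricted moment bounds $|\sigma_k-\sigma'_{\pi(k)}|$. Each of these inherits an additive error of order $\epsilon/\pimin$ from the mass mismatch plus an exponentially small separation-dependent leakage term. Summing across components and taking the infimum over permutations produces the required $d_{param}$-bound $\delta=\delta(P,\epsilon)$, with all constants explicit functions of $\pimin$, the minimum pairwise separation, and $K$.

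The main obstacle is the simultaneous control of the matching step and the leakage: the basins $R_k$ must be chosen so that both (a) each individual Gaussian of $P$ concentrates on its own basin and (b) no Gaussian of $P'$ with shifted mean and slightly different variance can spread across multiple basins. Making these constraints compatible, while keeping $C(P)$ from exploding as $\pimin$ or the separation approaches the identifiability threshold, is the delicate part. This is also where the promised ``asymptotic match to known sharp thresholds'' should appear, since the scaling of the leakage bounds in the separation parameter determines exactly when $C(P)$ stays finite.
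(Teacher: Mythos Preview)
Your first move, reducing to $TV(P,P')\leq 2\epsilon$ by the triangle inequality, matches the paper (which phrases it as $Q=(P+P')/2$). After that the routes diverge. The paper does \emph{not} use basin regions and restricted moments. Instead it (i) proves sharp lower bounds on $TV$ between two spherical Gaussians in terms of $\|\mu_1-\mu_2\|/\max\{\sigma_1,\sigma_2\}$ and of the variance ratio separately (Lemma~\ref{lem:GaussianTV} and its refinements); (ii) uses a set-based component-comparison lemma (Lemma~\ref{lem:component-wise-tv-P}) to force, for each $P'_j$, some $P_i$ with small $TV(P_i,P'_j)$; (iii) upgrades this to a bijection under the separation condition $c>c_0\eta_0$ (Theorem~\ref{thm:unique-matching}); and then (iv) runs an \emph{iterative refinement}: the matched-pair $TV$ bound from Lemma~\ref{lem:rho1-rho2} is pushed back through the Gaussian-$TV$ inversion to tighten $(c_0,\eta_0)\to(c_1,\eta_1)\to\cdots\to(c^*,\eta^*)$. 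Proportions are handled last by a separating-hyperplane construction (Lemma~\ref{lem:twopairs}, Lemma~\ref{lem:diff-proportion}). What your approach buys is conceptual simplicity; what the paper's buys is explicit constants and the refinement mechanism that drives the bounds toward the $\sqrt{\log K}$ threshold.

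Two points in your sketch need work. First, the matching step ignores variance heterogeneity: if $\sigma'_j$ is much larger than the $\sigma_i$'s, a single well-separated $P'$-component can genuinely spread mass across several of your basins $R_k$, so ``no $P'$-component can be split across two regions'' is not automatic. The paper devotes most of Theorem~\ref{thm:unique-matching} to exactly this issue, introducing the variance-ratio constant $\eta_0$ and a careful removal argument; your outline has no analogue. Second, your target inequality $d_{param}(P,P')\leq C(P)\cdot TV(P,P')$ is strictly stronger than what the paper establishes: the paper explicitly notes that its final bounds $c^*,\eta^*$ do \emph{not} vanish as $TV(P,P')\to 0$, because of an irreducible separation-leakage term, and leaves a computable bound linear in $TV$ as open. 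For the generic statement you only need \emph{some} $\delta$, so this overclaim is not fatal, but you should expect your moment argument to produce an additive bound of the form $C_1\epsilon/\pimin + C_2(\text{leakage})$ rather than a purely multiplicative one.
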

This type of statement was proposed earlier by \citep{Meila2006} in the context of loss-based clustering, but to date it has not been instantiated for any parametric model fitting problem.  
\mmp{Later: --related}

A distribution ${P}$ that satisfies \theoremref{thm:generic} will be called stable. Obviously, such a distribution is generally not unique or distinguished in $\model$, hence stability should be defined as a property of (a subset of) $\model$. This also follows from setting $Q=P^*\in\model$. Therefore, we define parametric stability of a model class as follows.
\begin{definition}\label{def:2eps-generic} Let $\model$ be a class of spherical Gaussian Mixtures and $d_{param}(\cdot,\cdot)$ be a divergence between parameters; for sufficiently small $\epsilon > 0$, $P$ is $(\epsilon,\delta)-$stable in $\M$ if any model $P'\in\M$ such that $d_{TV}(P,P')\leq 2\epsilon$ satisfies that $d_{param}(P,P')\leq \delta$.
\end{definition}
The technical contribution of our paper is to formulate conditions on $\model$ and establish upper bounds  $\delta(\epsilon,\model)$ for any $P\in\model$, in the population regime. These are given in \theoremref{thm:main1}.  The upper bounds on $\delta(\epsilon,\model)$ we obtain are {\em tractable}, with explicit constants depending on the model class only. As a statistical procedure, the bound $\delta(\epsilon,\model)$ provides quantitative post-estimation evaluation or diagnostic analysis for fitting a Gaussian Mixture Model, without any prior knowledge. 
For example, for an arbitrary population $Q$ (with density $q$) on $\rrr^d$, let $\hat{P}=\sum_{i=1}^{K}\hat{\pi}_iN_d(\hat{\mu}_k,\hat{\sigma}_k^2I_d)$ be a learned \sphgmm. \figureref{fig:example1} shows an example of stable and unstable distribution \sphgmm s, and further for the stable \sphgmm~it constructs good-fit region such that all \sphgmm~within small total variation distance to the population must have parameters located in these regions. These regions can be reported as the usual confidence regions are reported in a regular parametric estimation problem. 

\begin{figure}[!t]
 \centering
 \includegraphics[width=0.9\textwidth]{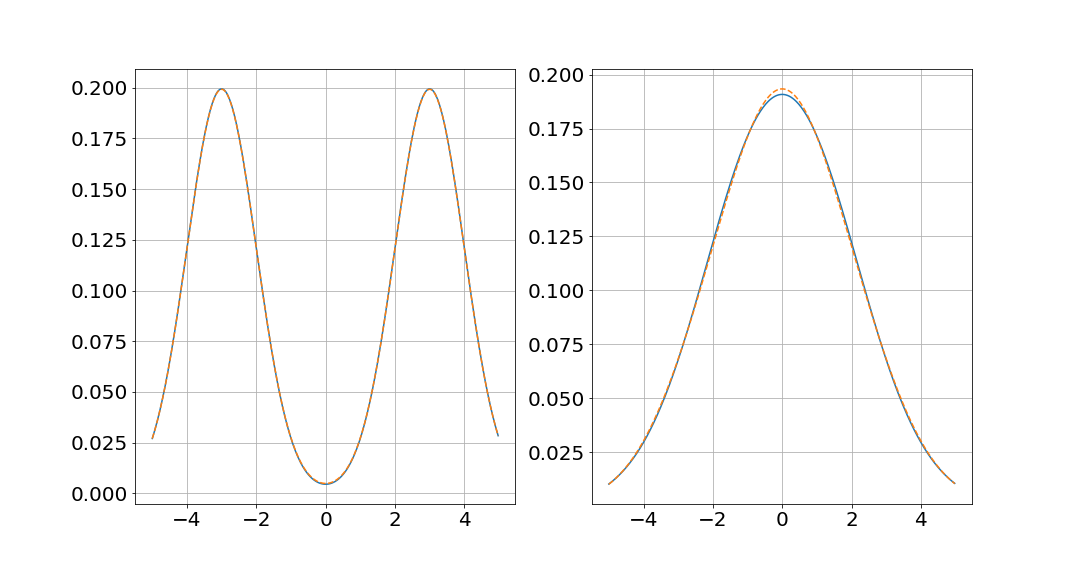}
 \comment{How about this: P_1 = 0.5 N(-3,1) + 0.5 (3,1); P_2 = 0.4995 N(-3,1) + 0.4995 N(3,1) + 0.001 N(0,1)
   TV(P_1,P_2) <= 0.001, and our bounds show that any good 2-mixture fit to P_2, within total variation distance 0.001,  is close to P_1: each component is within 0.0100 max{\sigma_i,\sigma_i'} and max{sigma_i/\sigma_i', \sigma_i'/\sigma} <= 1.0055, and |pi_i-pi_i'|<= 0.02 density plotted here. It in fact has two densities, one dash line (P_2) and one solid line (P_1) but they are covering each other.}
 \comment{P_1 = 0.0625 N(-3,2.25) + 0.4375 N(-1,2.25) + 0.4375 N(1, 2.25) + 0.0625 N(3, 2.25), variances all = 2.25, mean = [-3,-1,1,3].P_2 = 0.0078125 N(-4,2.25) + 0.21875 N(-2,2.25)  + 0.546875 N(0,2.25) + 0.21875 N(2,2.25) + 0.0078125 N(4,\sigma^2), variances all = 2.25, mean = [-4,-2,0,2,4]}
\caption{Stable and unstable spherical Gaussian Mixtures in 1D. {\bf
     Left:} A well separated mixture of $K=2$ Gaussians, $P=
   0.5N(-3,1) + 0.5N(3,1)$, with separation $c=3$ (as defined in
   Section \ref{sec:setup}, A3), superimposed on a distribution $Q$
   such that $TV(P,Q)\leq 0.001$. Our \theoremref{thm:main1} in
   \sectionref{sec:setup} guarantees that any mixture $P'$ with $K=2$ components, minimal weights 0.45, and separation constants 3 that fits $Q$ equally well must have parameters close to
   $P$; namely (see \sectionref{sec:setup} for the parameter
   definitions) $\mu'_{i}$ within $0.0200$ of $-3$ and $3$,
   $\max\{1/\sigma_i'^2, \sigma_i'^2\} \leq 1.034$, and $|0.5-\pi_i'|\leq
   0.004$. {\bf Right:} Two spherical Gaussian mixtures $P,P'$ which
   are unstable, in the sense that while they are close in TV
   distance, they have very different parameters; $P= 0.0625
   N(-3,\sigma^2) + 0.4375 N(-1,\sigma^2) + 0.4375 N(1, \sigma^2) +
   0.0625 N(3, \sigma^2)$, $P'=0.0078125 N(-4,\sigma^2) + 0.21875
   N(-2,\sigma^2) + 0.546875 N(0,\sigma^2) + 0.21875 N(2,\sigma^2)
   + 0.0078125 N(4,\sigma^2)$, with $\sigma^2=2.25$. In this example, the
   mixture components are less separated, and some of the mixture
   proportions are small as well.}
 \label{fig:example1}
\end{figure}
Furthermore, Definition \ref{def:2eps-generic} concerns only the model class $\M$, a subset of spherical Gaussian mixtures. Hence, one can obtain distribution free stability results generically, by studying the stability of distributions inside a model class, such as that of \sphgmm. While this remark is nearly obvious, suprisingly little work has attended to the possiblity of obtaining distribution free guarantees as a side effect of consistency or identifiability proofs. We hope that one contribution of this paper be at the conceptual level, in drawing attention to this possibility, which remains the primary motivator for this work.

In \sectionref{sec:setup} we define model classes $\model$ of
interest, and instantiate $d_{param}(\cdot,\cdot)$ and all other parameters, then
state our main stability result in
\theoremref{thm:main1}. \sectionref{sec:num} illustrates the result with some
numeric examples. In the supplementary materials, \sectionref{sec:proof-main} provides the detailed proof
of our main result \theoremref{thm:main1}, with additional lemmas proved in \sectionref{sec:proof-aux}.

\section{Problem Formulation and Main Results}
\label{sec:setup}
We first pose the problem in formal terms. Then, we state the main result in \theoremref{thm:main1}, namely that if  two well-separated mixtures of spherical Gaussians are close as distributions, in total variation distance, their parameters are also close. 

\subsection{Problem Setup}
\label{sec:setup-1}
In the problem formulation, we identify three key components: (i)
 model class $\mathcal{M}$, (ii)  distance or divergence between the models in parameter space $d_{param}(P,P')$, and (iii) goodness of fit measure.

\paragraph{Model Class} A spherical Mixture of Gaussians $P$ over $\rrr^d$ can be written in the form 
\begin{equation}
 P=\sum_{k=1}^K \pi_k N_d(\mu_k,\sigma_k^2 I_d),\quad\text{with } \sum_{k=1}^K \pi_k = 1,\quad\pi_k\geq 0.
\end{equation}
In the above, we have adopted the standard notation, whereby $N_d(\mu_k,\sigma_k^2I_d)$, called {\em mixture components}, are normal distributions with means $\mu_{1:K}\in\rrr^d$ and diagonal covariance matrices $\sigma_{1:K}^2I_d\in\rrr^{d\times d}$, while $\pi_{1:K}$ are called {\em mixture proportions}.

Further on, we assume the number of components $K$ is fixed, and we add restrictions on the smallest component proportion and the component separation. Thus the model classes we consider are denoted $\mathcal{M}(K,\pi_{\min},\pimax,c)$.
\begin{definition}An \sphgmm~$P$ is in model class $\mathcal{M}(K,\pi_{\min},\pi_{\max},c)$ iff the following holds:
  \begin{itemize}\setlength{\parskip}{0em}\setlength{\topskip}{0em}
    \item[A1] $K\geq 2$
    \item[A2] $\min_{k\in[K]}\pi_k \geq \pi_{\min}$, $\max_{k\in[K]}\pi_k \leq \pi_{\max}$
    \item[A3] $\norm{\mu_i-\mu_j}> c(\sigma_i+\sigma_j)$ for any $i,j\in [K]$ and $i\neq j$
  \end{itemize}
\end{definition}
\mmp{something about conditions -- how standard -- if there is time}
For $P\in\mathcal{M}(K,\pi_{\min},\pi_{\max},c)$, the maximal proportion is always less than or equal to $1-(K-1)\pi_{\min}$. Therefore, we abbreviate the model class notation to be $\mathcal{M}(K,\pi_{\min},c)$ when $\pi_{\max}= 1-(K-1)\pi_{\min}$. The separation constant $c$ is necessary. Theorem 3.1 in \citep{Regev2017} points out that as $K$ increase, when separation constant $c$ is $o(\sqrt{\log K})$, it is possible to find two \sphgmm s so that their parameters are $\Omega(1)$ different but their total variation distance is superpolynomially small in $K$. We rule out this difficulty by deriving assumptions on the separation constant $c$, which regularizes the model class.
\paragraph{The divergence of model parameters} between any two models $P,P'$ with $K$ components is
\begin{align} \label{eq:dparameter}
 d_{param}(P,P') = \min_{\myperm \in \mathbb{S}_K}\max_{k\in[K]} \frac{|\pi_i-\pi_{\myperm(i)}'|}{\min\{\pi_i,\pi_{\myperm(i)}'\}} + \frac{\norm{\mu_i-\mu_{\myperm(i)}'}}{\max\{\sigma_{i},\sigma_{\myperm(i)}'\}} + \frac{|\sigma_i^2-\sigma_{\myperm(i)}'^2|}{\min\{\sigma_i^2,\sigma_{\myperm(i)}'^2\}}
\end{align}
\noindent In the above, $\myperm \in \mathbb{S}_K$ is a permutation of the set $[K]$; $d_{param}$ is not necessarily a metric\footnote{$d_{param}$ is also not a divergence in the information geometric sense.}. \theoremref{thm:main1} presented below provides upper bounds on each of the three terms of \eqref{eq:dparameter} separately. 

Consider $G=\sum_{k=1}^K\pi_k\delta_{(\mu_k,\sigma_k^2)}$ with $\delta$ being the point mass. One can also view each \sphgmm \ distribution $P_G=\sum_{k=1}^K\pi_k N(\mu_k,\sigma_k^2I_d)$ as a smoothed distribution of discrete probability on the parameter space. Let $G,G'$ be two such distributions on the parameter space, then Wasserstein-1 distance defined by 
\begin{equation}
  W_1(G,G')=\inf_{\Pi}\int d((\mu_k,\sigma_k^2),(\mu_k',\sigma_k'^2))\Pi(G,G')
\end{equation}
is considered. This is shown to be a natural parameter divergence in several previous papers (e.g.,\citep{HoNguyen2016,Heinrich2018,Yihong2020,doss2020optimal}). Our $d_{param}$ does not capture the same topology as the $W_1$ distance on parameters. Consider two sequences $P_n=0.5N(-1,(n+1)^2)+0.5N(1,(n+1)^2)$ and $Q_n=0.5N(-1,n^2)+0.5N(1,n^2)$. Then, as $n\rightarrow \infty$, \mmpmmp{$TV(P_n,Q_n)$ and} $d_{param}(P_n,Q_n)$ converges to zero, but $W_1(n^2,(n+1)^2)$ diverges. On the other hand it can also be shown that for a fixed $G$ and corresponding $P_G$, when $W_1(G,G')\rightarrow 0$, it holds that
\begin{equation}
  d_{param}(P_G,P_{G'})\asymp W_1(G,G')\asymp \min_{\myperm\in\mathbb{S}^K}\sum_{i=1}^K |\pi_i-\pi_{\myperm(i)}|+\norm{\mu_i-\mu_{\myperm(i)}'}+|\sigma_i^2- \sigma_{\myperm(i)}'^2|\;.
\end{equation}
 Hence our definition of $d_{param}$ is in better agreement with the model fit criterion, does not need the assumption that the parameter space is compact \citep{HoNguyen2016} and there will be a direct way to construct good-fit regions or confidence regions since we provide upper bounds on the terms in $d_{param}$ separately. Also,  $d_{param}$ is invariant w.r.t. rescaling, and this will be useful in, e.g. model-based clustering.

\paragraph{Goodness of fit} can be measured by various
criteria (e.g. likelihood or KL divergence, $W_2$ distance), but in
this paper total variation distance \citep{Gibbs2002probDist} is used for its convenience. For
two probability distributions $P,Q$ on $\mathbb{R}^d$, \mmpmmp{having densities
$p,q$ respectively w.r.t. Lebesgue measure $\lambda$, }the {\em total
  variation distance} is given by
\begin{equation}
 TV(P,Q)=\sup_{A\in\mathcal{B}}|P(A)-Q(A)|\;,
 \label{def:tv}
\end{equation}
where $\mathcal{B}$ is all Borel sets on $\rrr^d$.\mmpmmp{It is easy to see that by selecting the set $A=\{x:p(x)\geq q(x)\}$, it holds that $P(A)-Q(A)=TV(P,Q)$. This fact will be useful for our proofs.}

\subsection{Preparation for main result: defining the constants}
\mmp{better title needed} Before we formulate \theoremref{thm:generic}
for the specific case of mixtures of spherical Gaussian, we introduce
several constants depending on the model class $\model(K,\pimin,c)$
and $\epsilon$, but independent of the actual distributions
considered. These parametrize the upper bounds on the terms of
$d_{param}$ \eqref{eq:dparameter} we are about present. The
constants $\eta_0,\eta^*$, to be defined below, represent ratios between standard deviations; $\eta^*>1$ will bound the perturbation in $\sigma_i$, and the aim is to bring it as close to 1 as possible. The constant $c_0$, together with $\eta_0$ gives the minimum sufficient value for the separation $c$, while $c^*$ bounds the perturbation of the components' means. 

Denote by $\Phi(x)$ the CDF of the standard normal distribution in one
dimension and by $F_d$ the CDF of Gamma($\frac{d}{2},\frac{d}{2}$)
distribution.\mmpmmp{inline?}
\begin{align}
 \Phi(x) = \int_{-\infty}^x \frac{1}{\sqrt{2\pi}}\exp\left(-\frac{1}{2}t^2\right)dt\;,\quad
 F_d(x) = \int_0^x \frac{(\frac{d}{2})^{\frac{d}{2}}}{\Gamma(\frac{d}{2})}t^{\frac{d}{2}-1}\exp\left(-\frac{d}{2}t\right) dt\;.
\end{align}
Constants $c_0,\eta_0$ determined by $\pi_{\min},\epsilon, \pi_{\max}$(or $K$) are defined by
\begin{align}
 c_0 = 2\Phi^{-1}(1-\frac{\pi_{\min}-2\epsilon}{2})\;;
 \label{eq:c0}
\end{align}
$\eta_0$ satisfies $\eta_0\geq 1$ and 
\begin{equation}
  1-\frac{\pi_{\min}-2\epsilon}{\pi_{\max}}+\frac{2(1-\pi_{\max})}{\pi_{\max}}\Phi(-\frac{1}{2}\eta_0 c_0) = F_d(\frac{2\eta_0^2\log \eta_0}{\eta_0^2-1}) - F_d(\frac{2\log \eta_0}{\eta_0^2-1})\;.
  \label{eq:eta0-diff}
 \end{equation}
\lemmaref{lem:est-eta-function} (Supplement) will show that the right hand side of \eqref{eq:eta0-diff} is in fact $TV(N(0,I_d),N(0,\eta^2I_d))$\comment{the total variation distance between two $d-$dimensional spherical Gaussians with same center and standard deviation ratio being $\eta$} and is lower bounded by $1-(2\eta/(\eta^2+1))^{d/2}$.

With the values $c_0,\eta_0$ one can establish upper bounds for the three terms of equation \eqref{eq:dparameter}; in particular  $c_0\max\{\sigma_i,\sigma'_{\myperm(i)}\}$ bounds the first term, representing the means' variation. This is shown in detail in Section \ref{sec:initial}. We call $c_0,\eta_0$ {\em initial bounds} because, in fact, they can be further reduced, by a technique that we present in \sectionref{sec:refined}. Thus, we obtain {\em refined} parameter distance upper bounds depending on constants $c^*,\eta^*$ which satisfy $0 \leq c^* \leq c_0$, $1\leq \eta^*\leq \eta_0$ and the following conditions
\begin{align}
 1-2\Phi(-\frac{c^*}{2})&=\frac{2\epsilon}{\pi_{\min}}+\frac{2(1-\pi_{\min})}{\pi_{\min}}\Phi(-\frac{1}{2}[c(1+\frac{1}{\eta^*})-c^*])\;,
 \label{eq:cstar} \\
 F_d(\frac{2(\eta^{*})^2\log \eta^{*}}{(\eta^{*})^2-1}) - F_d(\frac{2\log \eta^*}{(\eta^{*})^2-1})&=\frac{2\epsilon}{\pi_{\min}}+\frac{2(1-\pi_{\min})}{\pi_{\min}}\Phi(-\frac{1}{2}[c(1+\frac{1}{\eta^*})-c^*])\;. 
 \label{eq:etastar}
\end{align}
Note that through equations \eqref{eq:cstar} and \eqref{eq:etastar}, $c^*,\eta^*$ are also determined by $\pi_{\min},\epsilon$, $c$ and $\pi_{\max}$. The following section shows that the four constants $c_0,\eta_0,c^*,\eta^*$ exist and are unique. \comment{Further it provides a rought estimates of the four constants under the asymptotic regime that $K\rightarrow \infty$, $\pi_{\min}-2\epsilon$ decreases polynomially with $1/K$.}

\mmp{but only the constant, not the $\sqrt{K}$ part. shall we put this outside the remark?}
\subsection{Main Theorem: Good Fits of \sphgmm \ Can Only Be Close to a Good Fit.}
\label{sec:setup-main1}
\mmp{In particular, a good model fit in TV can be used to perform model selection}
Now we are ready to state our main result. 
\begin{theorem}
Let $P\in\mathcal{M}(K,\pi_{\min},\pi_{\max},c)$. Suppose $P'$ is any model in $\mathcal{M}(K',\pi_{\min},\pi_{\max},c)$ such that $TV(P,P')\leq 2\epsilon$ where $\max\{K,K'\}\leq 1/\pi_{\min}$, $\pi_{\max}\leq 1-(\min\{K,K'\}-1)\pi_{\min}$. Let $c_0,\eta_0$ be defined as in \eqref{eq:c0} and \eqref{eq:eta0-diff}. Then, if $c\geq c_0\eta_0$ and
 $\pi_{\min}>2\epsilon$, we have $K=K'$ and further, there exists a permutation $\myperm\in\mathbb{S}_K$ and constants $c^*\in[0,c_0],  \eta^*\in [1,\eta_0]$ satisfying
 \eqref{eq:cstar} and \eqref{eq:etastar}, such that
 for each $i\in [K]$,
 \begin{align}
 \norm{\mu_i-\mu_{\myperm(i)}'} &\leq c^*\eta^*\sigma_i\\
  \max\{{\sigma_i}/{\sigma_{\myperm(i)}'}, {\sigma_{\myperm(i)}'}{\sigma_i}\}&\leq \eta^* \\
 |\pi_i-\pi_{\myperm(i)}'|&\leq {2\epsilon}+ (1-\pi_{\min}+\pi_{\max}) \Phi(-C(c,c^*,\eta^*))\;,\label{eq:pibound}
 \end{align}
 where $C(c,c^*,\eta^*)$ is defined by 
 \begin{equation}
 C(c,c^*,\eta^*):= \sqrt{\frac{c^2}{2(\eta^*)^2}+\frac{1}{2\eta^*}(c-\frac{c^*}{2})^2-\frac{(c^*)^2(1+\eta^*)^2}{16(\eta^*)^2}}-\frac{c^*}{2}\;.
 \label{eq:dist-delta-pi}
 \end{equation}
 \label{thm:main1}
\end{theorem}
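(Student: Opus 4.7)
The plan is to proceed in three stages: first, use the TV bound to construct a bijective matching $\myperm$ between the components of $P$ and those of $P'$, which in particular forces $K=K'$; second, derive coarse bounds governed by the pair $(c_0,\eta_0)$ on the individual perturbations of means and variances; third, bootstrap these into the refined bounds governed by $(c^*,\eta^*)$ and extract the mixture-weight perturbation from the refined geometry. For the matching, I would choose for each $i\in[K]$ a probe set $A_i\subset\rrr^d$ localized near $\mu_i$, e.g.\ a Euclidean ball of radius a small multiple of $c\sigma_i$, so that (by A3) the contribution of components $j\neq i$ to $P(A_i)$ is of order $\Phi(-c/2)$, while component $i$ deposits mass close to $\pi_i$ in $A_i$. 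The TV inequality then yields $P'(A_i)\geq \pi_i-2\epsilon-\bigOO(\Phi(-c/2))\geq \pimin-2\epsilon>0$. Since $P'$ also satisfies A3 with constant $c$, at most one of its components can deposit mass above that positive floor in $A_i$; that component's index defines $\myperm(i)$. Running the same argument with the roles of $P$ and $P'$ swapped shows $\myperm$ is a bijection, so $K\leq K'\leq K$.

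For the initial bounds I would argue by contradiction with hyperplane and concentric-ball test sets. If $\norm{\mu_i-\mu'_{\myperm(i)}}>c_0\max\{\sigma_i,\sigma'_{\myperm(i)}\}$, the hyperplane bisecting $[\mu_i,\mu'_{\myperm(i)}]$ cuts $\rrr^d$ into a half-space $H$ on which the projected 1D Gaussian of component $i$ has mass at least $1-\Phi(-c_0/2)$, whereas component $\myperm(i)$ has mass at most $\Phi(-c_0/2)$; by \eqref{eq:c0}, $2\Phi(-c_0/2)=\pimin-2\epsilon$, so after absorbing the $\Phi(-c/2)$ leakage from the remaining components of both mixtures the discrepancy $|P(H)-P'(H)|$ exceeds $2\epsilon$, a contradiction. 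An analogous argument with $H$ replaced by a ball concentric with $\mu_i$ gives the variance ratio bound $\eta_0$: by \lemmaref{lem:est-eta-function}, the right-hand side of \eqref{eq:eta0-diff} is precisely $TV(N(0,I_d),N(0,\eta^2I_d))$, and the equation marks the ratio at which this TV distance, net of leakage, first saturates the $2\epsilon$ budget.

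The refined bounds are obtained by a one-step bootstrap. Once we know the means are within $c_0\max\{\sigma_i,\sigma'_{\myperm(i)}\}$ and the variance ratios within $\eta_0$, the well-separation of $P'$ can be re-invoked more sharply: the leakage from unmatched components of $P'$ shrinks from $\Phi(-c/2)$ to $\Phi(-[c(1+1/\eta^*)-c^*]/2)$, which is precisely the term appearing on the right of \eqref{eq:cstar} and \eqref{eq:etastar}. Re-running the hyperplane and concentric-ball tests with this sharper leakage yields the implicit equations defining $c^*,\eta^*$; existence and uniqueness of $c^*\in[0,c_0]$ and $\eta^*\in[1,\eta_0]$ would follow from monotonicity in the free variable of each side.

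Finally, for \eqref{eq:pibound} I would choose a probe set isolating the matched pair $(i,\myperm(i))$ at a scale dictated by the refined bounds; the effective separation from any unmatched component is exactly the quantity $C(c,c^*,\eta^*)$ in \eqref{eq:dist-delta-pi}, obtained by subtracting the worst-case radial excursion $c^*/2$ from the well-separation distance $c$, and the TV inequality on this probe set translates directly into the stated weight perturbation. The main obstacle, in my view, is closing the bootstrap in stage three: verifying that the test-set and leakage estimates genuinely contract inside $[0,c_0]\times[1,\eta_0]$ so that $c^*,\eta^*$ exist, are unique, and yield strictly tighter bounds than $c_0,\eta_0$. A secondary hurdle is the dimension dependence entering through $F_d$ in the variance equation; controlling it via the $1-(2\eta/(\eta^2+1))^{d/2}$ estimate noted after \eqref{eq:eta0-diff} is what keeps the final constants tractable in the regime of the theorem.
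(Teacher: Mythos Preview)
Your three-stage outline mirrors the paper's architecture, but two of the stages have substantive gaps relative to what the paper actually does.

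\textbf{Matching and initial bounds.} Your probe-set argument for establishing the bijection $\myperm$ is too optimistic. You claim that because $P'$ satisfies A3, ``at most one of its components can deposit mass above that positive floor in $A_i$''. But the separation in $P'$ is in units of $\sigma'_j+\sigma'_k$, not $\sigma_i$: if several components of $P'$ have very small variances, they could all sit inside your ball $A_i$ while still being well separated relative to their own scales. Conversely, a single component of $P'$ with huge variance could spread mass across several $A_i$'s. So injectivity and surjectivity of $\myperm$ are not immediate from the ball argument alone. The paper handles this via a careful ``removing procedure'' (Theorem~\ref{thm:unique-matching}): it first matches pairs where the correspondence is uniquely determined in both directions, bounds the variance ratio for those removed pairs by $\eta_0$ using \lemmaref{lem:component-wise-tv-P} and the defining equation \eqref{eq:eta0-diff}, and then argues by contradiction (picking the unremoved component of smallest variance) that no components can remain. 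The role of $\eta_0$ is thus entangled with the existence of the bijection itself, not derived after it; your presentation reverses this dependency.

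\textbf{Refinement is an iteration, not a one-step bootstrap.} You describe the passage from $(c_0,\eta_0)$ to $(c^*,\eta^*)$ as a ``one-step bootstrap'' yielding the implicit equations \eqref{eq:cstar}--\eqref{eq:etastar} directly. In the paper the mechanism is genuinely iterative: one step from $(c_0,\eta_0)$ produces strictly smaller constants $(c_1,\eta_1)$ via $UB(c_0,\eta_0)$ and Lemmas~\ref{lem:rho1-rho2}, \ref{lem:GaussianTV}; repeating yields a decreasing sequence $(c_t,\eta_t)$ whose limit is the fixed point $(c^*,\eta^*)$ of \eqref{eq:cstar}--\eqref{eq:etastar}. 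Monotonicity of each side in the free variable gives uniqueness of the fixed point, but you still need the iteration (or an equivalent contraction argument) to show that the actual perturbation bounds reach it. You flag this as ``the main obstacle'', which is accurate, but it is the heart of the refinement stage rather than a residual detail.

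The proportion bound \eqref{eq:pibound} in the paper comes from a separating-hyperplane construction between two matched pairs (\lemmaref{lem:twopairs}), with the constant $C(c,c^*,\eta^*)$ arising from a weighted-centroid distance identity (\lemmaref{lem:mass-center}); your sketch gestures at this but does not indicate where the specific form \eqref{eq:dist-delta-pi} comes from.
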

This theorem extends the usual identifiability result. See \sectionref{sec:related} for more discussion with previous works. Furthermore, this upper bound is tractable since all the constants are explicit or computable and it does not assume prior knowledge on parameters. Computability opens up the possibility of applying our result to finite samples. 

The following proposition gives an estimate of the constants $c_0,\eta_0,c^*,\eta^*$ in the asymptotic regime that $K\rightarrow \infty$.
\begin{proposition}
  Given $\pi_{\min}>2\epsilon > 0$, then $c_0,\eta_0,c^*,\eta^*$ defined in equations \eqref{eq:c0}--\eqref{eq:cstar} exist and each is unique. Further suppose there are nonnegative constants $\alpha,\beta$ and sufficiently large positive constant $\gamma$ such that $\beta \leq 1+\alpha$, $\pi_{\min}=\Omega(K^{-(1+\alpha)})$ ,$\pi_{\max}/\pi_{\min}=O(K^\beta)$ and $2\epsilon/\pi_{\min}\leq K^{-\gamma}$. When $K\rightarrow \infty$, we have the initial separation condition $c_0=O(\sqrt{\log K})$, $\eta_0 = O(K^{2\beta/d})$. With any separation $c>c_0\eta_0$, the ultimate upper bound $c^*=O(K^{-\gamma}+K \left[{\sqrt{\log K}}/{K}\right]^{4})$ and $\eta^*=O(K^{-\gamma}+K \left[{\sqrt{\log K}}/{K}\right]^{4})$.
  \label{lem:constantsExist}
 \end{proposition}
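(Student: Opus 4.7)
The plan is to handle the four constants in order—$c_0,\eta_0$ by direct monotonicity arguments, and the coupled pair $(c^*,\eta^*)$ by reducing the system to a single scalar fixed-point equation via a compatibility observation—then feed the asymptotic scalings of $\pi_{\min},\pi_{\max},\epsilon$ through standard Gaussian and Gamma tail estimates. Existence and uniqueness of $c_0$ is immediate because $\pi_{\min}>2\epsilon$ places $1-(\pi_{\min}-2\epsilon)/2$ strictly in $(1/2,1)$, where $\Phi^{-1}$ is finite and strictly monotone. For $\eta_0$, define $L(\eta):=1-(\pi_{\min}-2\epsilon)/\pi_{\max}+2(1-\pi_{\max})/\pi_{\max}\cdot\Phi(-\eta c_0/2)$ and $R(\eta):=F_d(2\eta^2\log\eta/(\eta^2-1))-F_d(2\log\eta/(\eta^2-1))$. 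By \lemmaref{lem:est-eta-function}, $R(\eta)=TV(N(0,I_d),N(0,\eta^2 I_d))$ is strictly increasing on $[1,\infty)$ from $R(1)=0$ to $\lim_{\eta\to\infty}R(\eta)=1$, while $L$ is strictly decreasing from $L(1)=1-(\pi_{\min}-2\epsilon)$ to $L(\infty)=1-(\pi_{\min}-2\epsilon)/\pi_{\max}<1$; the intermediate value theorem gives a unique $\eta_0\geq 1$.

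The coupled pair $(c^*,\eta^*)$ is the main work. The key observation is that \eqref{eq:cstar} and \eqref{eq:etastar} share a common right-hand side, so writing $g_1(c^*):=1-2\Phi(-c^*/2)$ and $g_2(\eta^*):=F_d(2(\eta^*)^2\log\eta^*/((\eta^*)^2-1))-F_d(2\log\eta^*/((\eta^*)^2-1))$, any solution satisfies the compatibility identity $g_1(c^*)=g_2(\eta^*)$. Both $g_1,g_2$ are continuous, strictly increasing bijections onto $[0,1)$ from $[0,\infty)$ and $[1,\infty)$ respectively, so I would parametrize by the common value $t:=g_1(c^*)=g_2(\eta^*)$, write $c^*(t)=g_1^{-1}(t)$ and $\eta^*(t)=g_2^{-1}(t)$, and collapse the system to the scalar equation
\[
  t \;=\; H(t) \;:=\; \tfrac{2\epsilon}{\pi_{\min}}+\tfrac{2(1-\pi_{\min})}{\pi_{\min}}\,\Phi\!\left(-\tfrac12\bigl[c(1+1/g_2^{-1}(t))-g_1^{-1}(t)\bigr]\right).
\]
Under the separation hypothesis $c\geq c_0\eta_0$, the $\Phi$-term is tiny at $t=0$, so $H(0)\approx 2\epsilon/\pi_{\min}>0$. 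Restricting attention to $t\leq\min(g_1(c_0),g_2(\eta_0))$—which is justified by the initial bounds $c^*\leq c_0$, $\eta^*\leq\eta_0$ established earlier in the proof of \theoremref{thm:main1}—the $\Phi$-argument stays bounded below by a positive constant, so $H(t)$ remains small while $t$ grows linearly. Existence follows from applying the intermediate value theorem to $H(t)-t$, and uniqueness from showing $H'(t)<1$ on that interval: the dominant term in $H'$ is $\phi$ at a large argument times $(1-\pi_{\min})/\pi_{\min}$, and the separation hypothesis keeps the product strictly below 1.

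For the asymptotics, the inverse-Gaussian-tail relation $\Phi^{-1}(1-u)=\sqrt{2\log(1/u)}(1+o(1))$ with $u=(\pi_{\min}-2\epsilon)/2=\Omega(K^{-(1+\alpha)})$ gives $c_0=O(\sqrt{\log K})$. For $\eta_0$, the lower bound $R(\eta)\geq 1-(2\eta/(\eta^2+1))^{d/2}$ from \lemmaref{lem:est-eta-function} combined with $L(\eta_0)\geq 1-(\pi_{\min}-2\epsilon)/\pi_{\max}$ forces $(2\eta_0/(\eta_0^2+1))^{d/2}=O(\pi_{\min}/\pi_{\max})=O(K^{-\beta})$ and hence $\eta_0=O(K^{2\beta/d})$. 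Finally, with $c\geq c_0\eta_0=O(\sqrt{\log K}\cdot K^{2\beta/d})$, the Mills-ratio bound $\Phi(-x)\leq e^{-x^2/2}/(x\sqrt{2\pi})$ makes the $\Phi$-term in $H$ super-polynomially small, so the fixed point $t^*$ of $H(t)=t$ is of order $2\epsilon/\pi_{\min}=O(K^{-\gamma})$ plus a polynomial tail contribution of order $K\cdot(\sqrt{\log K}/K)^4$; linearizing $g_1$ at $0$ (slope $1/\sqrt{2\pi}$) and $g_2$ at $1$ translates $t^*$ back into the claimed bounds on $c^*$ and $\eta^*-1$. The principal obstacle is the uniqueness claim for $(c^*,\eta^*)$, since establishing $H'(t)<1$ requires careful control of the Gaussian density $\phi$ at a moving argument, and this is precisely where the separation hypothesis $c\geq c_0\eta_0$ is indispensable; the tracking of polynomial factors through inverse Gaussian and Gamma functions, though tedious, is otherwise routine provided $\gamma$ is chosen sufficiently large.
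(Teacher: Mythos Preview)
Your existence/uniqueness argument for $c_0,\eta_0$ matches the paper exactly. For $(c^*,\eta^*)$ you take a genuinely different route: the paper's proof of this proposition never argues existence or uniqueness of $(c^*,\eta^*)$ directly---it relies implicitly on the iterative refinement in \sectionref{sec:refined}, where $(c^*,\eta^*)$ arises as the limit of the monotone decreasing sequences $(c_t,\eta_t)$ starting from $(c_0,\eta_0)$, and uniqueness is never explicitly established. Your scalar reduction via the compatibility $g_1(c^*)=g_2(\eta^*)$ is more self-contained and is the natural way to address uniqueness, though the contraction estimate $H'(t)<1$ does need the careful bookkeeping you already flag (the derivative $(g_2^{-1})'(t)$ near $t=0$ is not obviously bounded). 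The asymptotic handling of $c_0$ and of $(c^*,\eta^*)$ is the same in spirit as the paper's.

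There is, however, a real slip in your $\eta_0$ asymptotic. You write that $R(\eta_0)\geq 1-(2\eta_0/(\eta_0^2+1))^{d/2}$ together with $L(\eta_0)\geq 1-(\pi_{\min}-2\epsilon)/\pi_{\max}$ forces $(2\eta_0/(\eta_0^2+1))^{d/2}=O(K^{-\beta})$. But these are two \emph{lower} bounds on the common value $R(\eta_0)=L(\eta_0)$, and they cannot be combined to bound $\eta_0$ from above. You need an \emph{upper} bound on $L(\eta_0)$; the crude one $L(\eta_0)\leq L(1)=1-(\pi_{\min}-2\epsilon)$ only yields $\eta_0=O(K^{2(1+\alpha)/d})$, which is strictly weaker than the claimed $O(K^{2\beta/d})$ whenever $\beta<1+\alpha$. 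The paper avoids this by picking a trial value $\tilde\eta=2\bigl(2\pi_{\max}/(\pi_{\min}-2\epsilon)\bigr)^{2/d}$ and verifying $R(\tilde\eta)>L(\tilde\eta)$ directly, using the Mills-ratio estimate $\exp(-c_0^2/8)=O(\sqrt{\log K}/K)$ to show the $\Phi$-term in $L(\tilde\eta)$ is negligible against $(\pi_{\min}-2\epsilon)/\pi_{\max}$; monotonicity of $R-L$ then gives $\eta_0<\tilde\eta$. Your plan needs this sharper control of the $\Phi$-term to reach the stated exponent.
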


We make several remarks here. First, specifically, for balanced model classes where $\pi_{\max}/\pi_{\min}=1$, we conclude that the minimal separation condition is $c>c_0\eta_0=O(\sqrt{\log K})$. With the prior knowledge of ratios between standard deviations, our result matches the sharp separation threshold $O(\sqrt{\log K})$ established in \citep{Regev2017} for learning well-separated mixture of Gaussians. 
Second, \citep{Regev2017} also shows that for sufficiently large $C$ and $K > C^8$, there exist two mixtures $P,P'$ in the model class $\model(K,1/K,C^{-24}\sqrt{\log K})$ with unit variance for every components, such that their parameter distance is at least $C^{-24}\sqrt{\log K}$ but $d_{TV}(P,P')\leq K^{-C}$. However, with larger multipliers of $\sqrt{\log K}$ in the separation constants, our result further shows that the parameter distance is not diverging as $K\rightarrow \infty$. That is, the mixtures can be identified componentwisely no matter how many clusters there are under our separation conditions. 

Third, when $K$ is fixed, the separation lower bound $c_0\eta_0$ decreases in $d$. Informally, with the same $K$, $\pi_{\min},\pi_{\max}$, it is easier to distinguish two Gaussians  mixtures in higher dimension. This is an instance of the {\em blessing of dimensionality} for Gaussian Mixture Models \citep{Anderson2014TheMT}.

Fourth, constraints on maximal and minimal standard deviation of each components also provide a valid way of regularization. If the components of $P,P'$ satisfiy the relation $c > \rho_\sigma c_0$ where $\rho_\sigma = \max_{i,j}\{\sigma_i,\sigma_j'\}/\min_{i,j}\{\sigma_i,\sigma_j'\}\leq \eta_0$, then the one to one correspondance also holds. 
%
Finally, when \theoremref{thm:main1} applies, the two mixtures must have same number of components. \mmpmmp{move this to supplement?}This further leads to the following corollary showing that a well-separated Gaussian mixtures cannot be close in total variation distance to a single spherical Gaussian. In the proof in \sectionref{sec:proof-main}, we prove this corollary and specify the constants.
 \begin{corollary}
  For any $\epsilon <\pi_{\min}$, there is a positive constant $c_{\rm single}$ depending on $\pi_{\min}$ and $\epsilon$ such that when $c>c_{\rm single}$, no spherical Gaussians are within total variation distance $2\epsilon$ from any $P$ in  $P\in\mathcal{M}(K,\pi_{\min},c)$.
  \label{cor:cover-by-single}
 \end{corollary}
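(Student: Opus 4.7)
I proceed by contradiction. Assume that for some $P \in \mathcal{M}(K, \pi_{\min}, c)$ with $K \ge 2$, a spherical Gaussian $P' = N_d(\mu', \sigma'^2 I_d)$ satisfies $TV(P,P') \le 2\epsilon$. Set $c_{\rm single} := c_0 \eta_0$, where $c_0, \eta_0$ are defined by \eqref{eq:c0}--\eqref{eq:eta0-diff} at $\pi_{\max} = 1-(K-1)\pi_{\min}$; these are well-defined and finite whenever $\pi_{\min} > 2\epsilon$ by Proposition~\ref{lem:constantsExist}. The plan is to isolate the componentwise ``matching'' step embedded in the proof of Theorem~\ref{thm:main1}, which shows that every mean $\mu_k$ of $P$ must have some mean of $P'$ within a bounded distance.

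Concretely, for each $k \in [K]$ I take a witness set $B_k$ (a Euclidean ball around $\mu_k$ of radius $\rho\sigma_k$, with $\rho$ calibrated to $c_0$) on which the $k$-th component of $P$ concentrates mass at least $1-\tau_1$, while each other component $j \neq k$ leaks at most $\tau_2$ into $B_k$ by the separation $c \ge c_0\eta_0$ and standard Gaussian/gamma tail bounds --- indeed the definitions of $c_0$ and $\eta_0$ are designed precisely for this tradeoff. Then $P(B_k) \ge \pi_k(1-\tau_1) - (1-\pi_k)\tau_2$, and $TV(P,P') \le 2\epsilon$ forces $P'(B_k) \ge \pi_{\min}(1-\tau_1) - (1-\pi_{\min})\tau_2 - 2\epsilon > 0$ whenever $\pi_{\min} > 2\epsilon$. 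Since $P'$ is a single spherical Gaussian, this lower bound on $P'(B_k)$ can only hold if $\mu'$ is close to $\mu_k$ and $\sigma'/\sigma_k$ is controlled; quantifying this with the same $\Phi^{-1}, F_d$ bookkeeping yields
\[
 \norm{\mu' - \mu_k} \le c_0 \eta_0 \sigma_k \qquad \text{for every } k \in [K].
\]
Picking any $k_1 \neq k_2$ (possible since $K \ge 2$), the triangle inequality gives $\norm{\mu_{k_1}-\mu_{k_2}} \le c_0\eta_0(\sigma_{k_1}+\sigma_{k_2})$, while A3 requires $\norm{\mu_{k_1}-\mu_{k_2}} > c(\sigma_{k_1}+\sigma_{k_2})$. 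The hypothesis $c > c_{\rm single} = c_0\eta_0$ then produces a contradiction.

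The main obstacle is organizational rather than conceptual: Theorem~\ref{thm:main1} is stated for $K, K' \ge 2$ via A1, so it cannot be invoked with $K' = 1$ directly. The matching-step argument underlying Theorem~\ref{thm:main1}, however, produces the componentwise bound $\norm{\mu'_{\text{match}(k)} - \mu_k} \le c_0\eta_0 \sigma_k$ using only A1--A3 for $P$ and the TV condition, not A1 for $P'$. Isolating this step and reinterpreting ``matched component'' as the unique mean $\mu'$ of the single Gaussian delivers the all-$k$ bound above, after which the triangle inequality closes the argument. A secondary bookkeeping check is that $c_{\rm single}$ depends only on $\pi_{\min}$ and $\epsilon$ (and on $K, d$ through $\eta_0$), as claimed in the statement of the corollary.
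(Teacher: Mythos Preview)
Your approach is sound but genuinely different from the paper's. The paper does \emph{not} reopen the matching step; instead it augments both mixtures with a common far-away Gaussian $Q$, forming $\tilde P=\tfrac12 P+\tfrac12 Q\in\mathcal M(K{+}1,\pi_{\min}/2,c)$ and $\tilde P'=\tfrac12 P'+\tfrac12 Q\in\mathcal M(2,\pi_{\min}/2,c)$, so that $\tilde P'$ now satisfies A1 and Theorem~\ref{thm:unique-matching} applied black-box forces $K{+}1=2$, a contradiction. Your route---extract the symmetric form of Theorem~\ref{thm:step1-max-i-j} to get $TV(P_k,P')\le 1-\pi_{\min}+2\epsilon$ for every $k$, then use Lemma~\ref{lem:GaussianTV} and the triangle inequality against A3---is more direct and avoids the artificial auxiliary component; the paper's route is cleaner in that it reuses the main theorem wholesale rather than its internals.

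One bookkeeping caution: the bound $\|\mu'-\mu_k\|\le c_0\eta_0\sigma_k$ with $\eta_0$ from \eqref{eq:eta0-diff} does not fall out exactly as you claim. The $\eta_0$ of the main theorem is derived in the proof of Theorem~\ref{thm:unique-matching} via an argument that uses separation among the components of $P'$ (see the case analysis around \eqref{eq:dist-i-j-first}--\eqref{eq:removed-eta-bound2}), and that argument is vacuous when $K'=1$. What you actually obtain from $TV(P_k,P')\le 1-\pi_{\min}+2\epsilon$ and Lemma~\ref{lem:GaussianTV} is $\|\mu_k-\mu'\|\le c_0\max\{\sigma_k,\sigma'\}$ together with $\max\{\sigma'/\sigma_k,\sigma_k/\sigma'\}\le\tilde\eta_0$, where $\tilde\eta_0=\eta_0(\pi_{\min}-2\epsilon)$ in the notation of that lemma. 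This still yields a valid constant $c_{\rm single}=c_0\tilde\eta_0$, so the corollary goes through; but the constant is not the $\eta_0$ you named. Likewise, your ball-witness paragraph is looser than necessary: a single lower bound $P'(B_k)>0$ does not by itself control $\|\mu'-\mu_k\|$ without simultaneously controlling $\sigma'$, so you are better off citing the symmetric Theorem~\ref{thm:step1-max-i-j} directly rather than redoing it with balls.
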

 We shall point out that the upper bounds in \theoremref{thm:main1} do not tends to 0 as $d_{TV}(P,P') \rightarrow 0$. Ideally, for a fixed mixture distribution $P\in\mathcal{M}(K,\pi_{\min},c)$, it holds that $\lim\inf TV(P,P')/d_{param}(P,P') > 0$ as $d_{param}(P,P')\rightarrow 0$, due to a local Taylor expansion analysis proposed in \citep{HoNguyen2016}. On the other hand, our ultimate bound $c^*$ and $\eta^*$ have two parts: the first part coming from the total variation distance between two Gaussians and the second part coming from not-far-enough separation between components. According to the author's knowledge it is not known if one can obtain computable parameter distance bound between $P,P'$ which scales as $O(TV(P,P'))$ as the total variation distance $TV(P,P')$ tends to zero.

\mmp{Special case $\sigma$ fixed for all mixtures in $\model$}
\mmp{Bound on $\pi$'s can be $>\pimin$, right? However, when thm applies, it guarantees a 1/1 correspondence between components, with closeness for $\mu,\sigma$ even if the bound for $\pi_i$ can become uninformative.}
\mmp{Prove or look up dparam small implies d clustering small--for later}

\subsection{\theoremref{thm:main1} Proof Sketch}
The detailed proofs of all theorems and lemmas mentioned here are in the Supplement. There is an initial stage in the proof, where bounds $c_0,\eta_0$ are obtained,  followed by a refinement stage.

\underline{Initial stage} 
First, we show in \theoremref{thm:step1-max-i-j} that if $\pi_{\min}$ is large and
$\epsilon$ is small, then for each component $P_j'$ of $P'$, there
exists a ``matched'' component $P_i$ of $P$ such that $ \norm{\mu_j'-\mu_i}\leq
c_0\max\{\sigma_j',\sigma_i\} $, where $c_0$ is given by \eqref{eq:c0}.

Next in \theoremref{thm:unique-matching}, we show that if separation is sufficiently large ($c>\eta_0c_0$), for each component $P_j'$ of $P'$, the matched component $P_i$\comment{that satisfies $\norm{\mu_i-\mu_j'}\leq c_0\max\{\sigma_j',\sigma_i\}$}is unique. This implies that $K=K'$. Relabeling the components of $P_j'$ we obtain the initial bounds $(c_0,\eta_0)$ such that $\norm{\mu_i-\mu_i'}\leq c_0\max\{\sigma_i',\sigma_i\}$ and $\max\{\sigma_i/\sigma_i',\sigma_i'/\sigma_i\}\leq \eta_0$ hold for all $i$.

\underline{Refinement} We show that if $\norm{\mu_i-\mu_i'}\leq c_b\max\{\sigma_i',\sigma_i\}$ and $\max\{\sigma_i/\sigma_i',\sigma_i'/\sigma_i\}\leq \eta_b$ hold for some $(c_b,\eta_b)$ and for all corresponding pairs in two \sphgmm~from $\mathcal{M}(K,\pi_{\min},\epsilon)$, the separation lower bound of $\mu_i,\mu'_j$ for $j\neq i$ (i.e. with the components of $P'$ that are not matched with $P_i$) can be improved. With this, \lemmaref{lem:component-wise-tv-P} implies a new upper bound $UB(c_b,\eta_b)$ on the total variation distance between each pair $TV(P_i,P_i')$ of matched Gaussian components, where
\begin{equation}
  UB(c_b,\eta_b):=\frac{2\epsilon}{\pi_{\min}} + \frac{2(1-\pi_{\min})}{\pi_{\min}}\Phi\left(-(c+\frac{c}{\eta_b}-c_b)/2\right)
\end{equation} 
is defined for all $c_b\in [0,c_0]$ and $\eta_b\in[1,\eta_0]$. 

Therefore from $(c_0,\eta_0)$ we obtain a new pair of upper bound constants $(c_1,\eta_1)$ from \lemmaref{lem:rho1-rho2}, with $c_1<c_0,\eta_1 < \eta_0$. They let us upper bound the total variation distance $TV(P_i,P_i')$ by $UB(c_1,b_1) < UB(c_0,b_0)$, which produces tighter constants $(c_2,b_2)$, and so on. 

The sequence of upper bounds $(c_t,\eta_t)$, where both $\{c_t\}$ and $\{\eta_t\}$ are positive and decreasing in $t$, converges to $\eta^*,c^*$. These limits are in equation \eqref{eq:cstar}, and the proof is in Supplement \ref{sec:refined}. Together with the component-wise comparison of proportions \lemmaref{lem:diff-proportion}, the proof of \theoremref{thm:main1} can now be completed.

\section{Numerical Examples}
\label{sec:num}
The upper bounds as well as the conditions in \theoremref{thm:main1} are computable and here we evaluate them on a variety of examples. 

\paragraph{Minimum Separation}  Recall that for \theoremref{thm:main1} to apply, both Gaussian mixtures $P,P'$ need to have well separated components, with relative separation $c\geq c_0\eta_0$. Here we calculate the minimal separation $c_0\eta_0$ in the limit case $\epsilon=0$. This will give a view of the domain of applicability of the theorem. 

We consider mixtures of spherical Gaussians in $d=5,20,35$ dimensional
Euclidean spaces with the number of components $K$ from 2 to 40. For
each $d$ and $K$, the ratio $\eta_\pi = \pi_{\max}/\pi_{\min}$ is set
to 1,2,4,8, and 16. From $K$ and $\eta_{\pi}$, we set
$\pi_{\min},\pimax$ so that $\pimax\;=\;1-(K-1)\pimin$.
%
%
\figureref{fig:sepVSK} illustrates what the minimal separation
requirements are so that \theoremref{thm:main1} can be applied. We observe that the heterogeneity of
mixture proportions $\eta_\pi$ indeed has a large effect on the separation
requirement; with $\eta_{\pi}$ as large as 16, one may need separation
constant $c\geq 12$ to apply \theoremref{thm:main1} in dimension 5
even with 2 clusters. On the other hand, for balanced mixtures, the
requirement is not so severe. For mixtures in $\mathcal{M}(2,1/2,c)$
namely with two equal components, even for $d=1$ the lower bound for $c$ is 2.29. As expected \citep{Anderson2014TheMT}, when $d$ increases, this requirement decreases as low as $c=1.55$  when $d=35$. When $K$ increases, $c_0\eta_0$ increases at the rate
of approximatevely $\sqrt{\log K}$. 
\begin{figure}[!t]
 \centering
 \includegraphics[width = 0.9\textwidth]{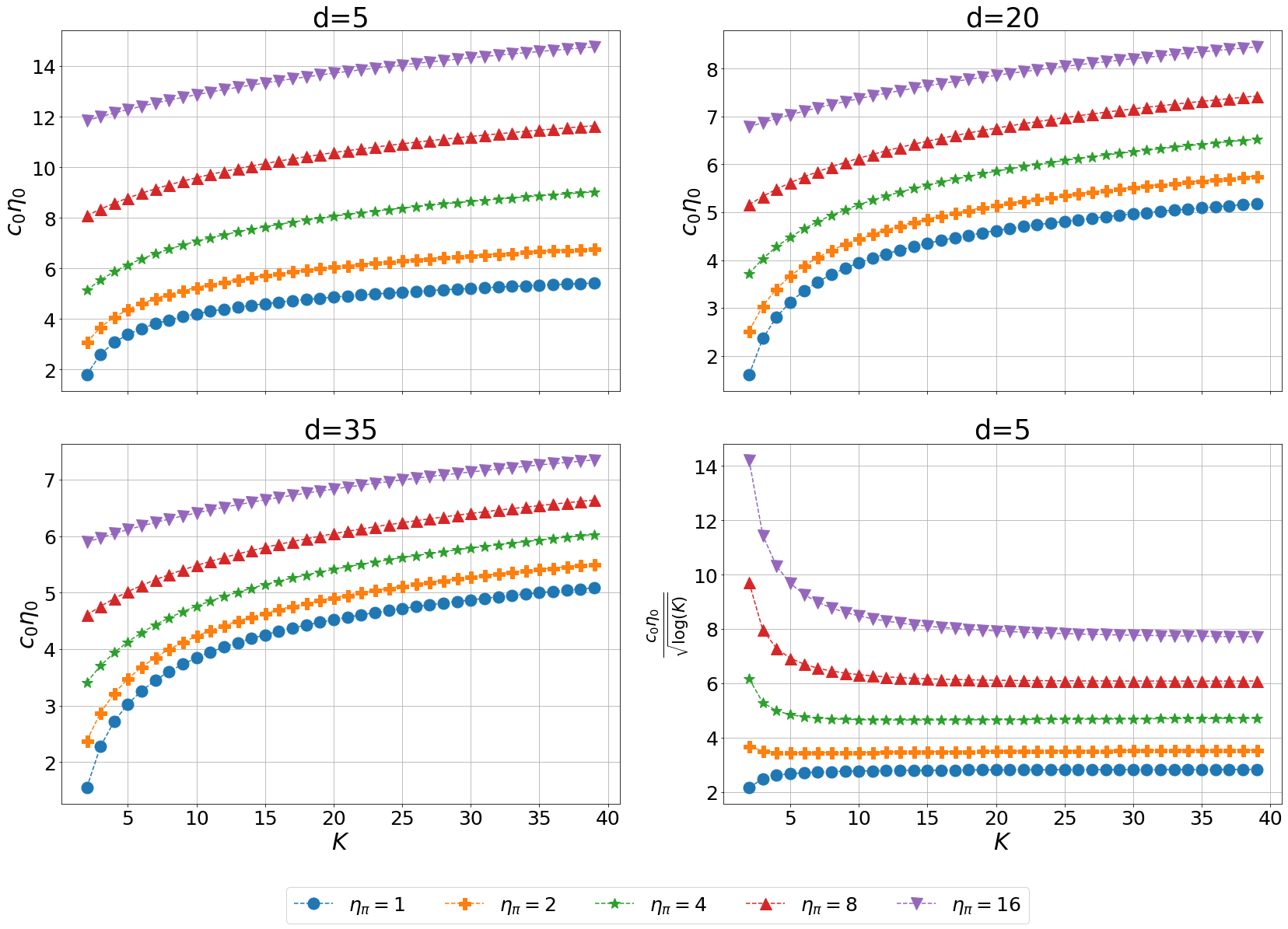}
 \label{fig:sepVSK}
 \caption{\textbf{Sufficient minimal separation} $c_0\eta_0$ in \theoremref{thm:main1}  under different settings. \textbf{Top left, Top right, Bottom Left} show the dependence of $c_0\eta_0$ on $K$ and $\eta_\pi=\pimax/\pimin$ in dimensions $d=5,20,35$, respectively. \textbf{Bottom right} shows that the dependence of $c_0\eta_0$ on $K$ asymptotes to $\sqrt{\log K}$.}
\end{figure}

\paragraph{Upper Bounds for parameter divergence} Here we present the numerical values of the upper bounds $c^*,\eta^*$ as well as of the upper bound
on the relative difference $|\pi_i-\pi'_{\myperm(i)}|/\pimin$ from
equation \eqref{eq:pibound}, for different levels of model fit
$\epsilon$.  We consider $P\in\mathcal{M}(K,1/(K+1),c)$  on $\mathbb{R}^{20}$, and vary
$K=2,5,10$, and $c=3,4,5,6$. \figureref{fig:iterativeUB} shows the
values of the three bounds in this scenario.  For
example, a mixture of $K=2$ components in $d=20$ dimensions requires a
separation $c\approx 3$ for $\epsilon\approx 0.01$ by \theoremref{thm:main1}. Once this condition holds, we have good guarantees: for each
pair of corresponding components,
\begin{equation}
 \norm{\mu_i-\mu_i'}\leq 0.151\max\{\sigma_i,\sigma_i'\},\quad \frac{\max\{\sigma_i,\sigma_i'\}}{\min\{\sigma_i,\sigma_i'\}} \leq 1.035,\quad |\pi_i-\pi_i'|\leq 0.02 \approx 0.06 \pi_{\min}\;.
\end{equation}
From \figureref{fig:iterativeUB} we see that when $\epsilon$ is small, all bounds are dominated by the separation $c$.  \mmp{Don't understand this: since \theoremref{thm:main} requires large for the separation condition is indeed large} As $\epsilon$ increases, we observe that all three bounds are dominated by $\epsilon$. Note the relation between these graphs and the orange curve $\eta_\pi=2$ from \figureref{fig:sepVSK}. \mmp{shall we expound?}
\begin{figure}[!t]
 \centering
 \includegraphics[width = \textwidth]{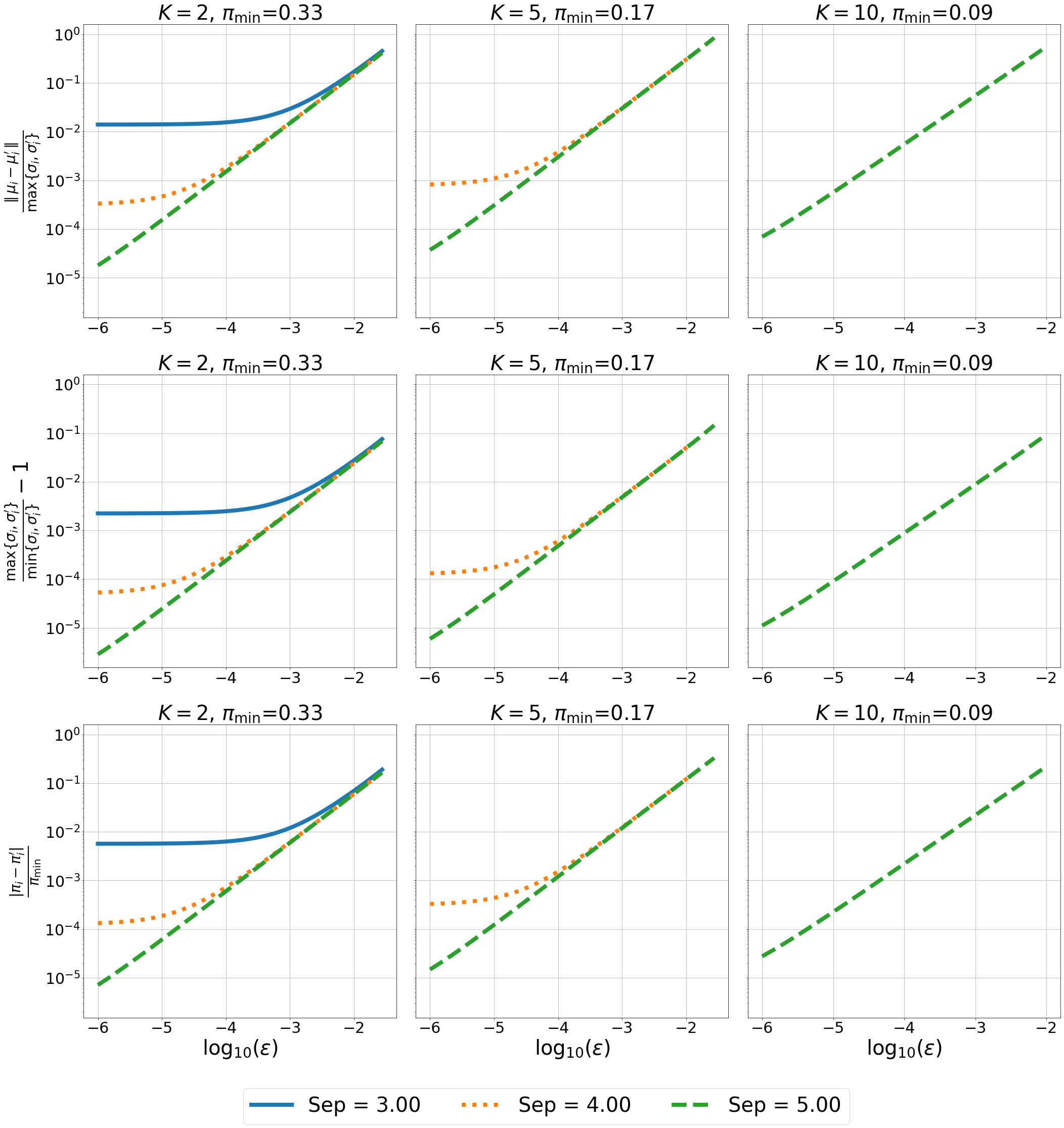}
 \caption{ \label{fig:iterativeUB}
   \textbf{Upper Bounds} on the distance between corresponding means $c^*$ (top row), ratio of standard deviations $\eta^*$ minus one (middle row) and the difference in mixture proportions measured in multiples of $\pi_{\min}$ (bottom row) from  \theoremref{thm:main1} for different values of $K$, and of separation  Sep$=c$, in $d=20$ dimensions. Some curves don't appear because the separation is not large enough to obtain the upper bound, as indicated by the Top, Left panel of Figure \ref{fig:sepVSK} (orange curve).\comment{ y-Axes of each row and x-axes of each column are shared. The first row shows how the centers differs under the units of maximal standard deviation; the second row shows the ratio of standard deviations minus one, and the third row shows the difference in proportions,. Since in our setting $\pi_{\max}$ can be as large as $2\pi_{\min}$, all of these difference measured in $\pi_{\min}$ are less than one, which shows that we also obtain interesting bounds on the difference of proportions.}}
\end{figure}

\section{Discussion}
\label{sec:related}

\paragraph{Comparisons with recent robust identifiability results}
\theoremref{thm:main1} is the first result to our knowledge to provide computable \emph{global} parameter stability upper bounds of mixtures of well-separated \sphgmm s, \mmpmmp{which filter out all parameters such that the total variation distance is large.} This result can also be seen as a {\em robust identifiability} result for $\model(K,\pimin,c)$.

When $TV(P,P')\rightarrow 0$, some identifiability result exists using $W_1$ distance of parameters: \citep{HoNguyen2016},\citep{Heinrich2018}, etc, show that for two sets of parameter distributions $G,G'$ with $W_1(G,G')$ small, $TV(P_G,P'_{G'})$ is  asymptotically greater than the $W_1(G,G')$ distance (without assuming separation). This result is almost the converse of our \theoremref{thm:main1}. 

The closest related works are the parameter identifiability theorems in robust learning of Gaussian mixtures, namely Theorem B.1 in \citep{Diakonikolas2017}, Theorem 8.1 in \citep{Allen2021}, Theorem 9.1 in \citep{ABakshi2020}, which are based on newly developed methods of moments proof techniques.

Consider two mixtures of general Gaussians $P,P'$ with maximal number of components being $K$. The assumptions made are that $TV(P,P')\leq\epsilon$, minimal weights of $P,P'$ are greater than $\epsilon^{b_1}$ and different components of $P,P'$ have separation in total variation distance greater than $\epsilon^{b_2}$, for sufficiently small constants $b_1,b_2>0$ depending only on $K$. The key observation used in the proofs is that the parameters distances of $P_k,P'_{\myperm( k)}$ are bounded by $poly(\epsilon)$ if the Hermite moment polynomials is bounded by $poly(\epsilon)$, which is guaranteed by the assumption that the total variation distance between $P,P'$ is upper bounded by $\epsilon$.
Their proof technique can be directly modified into proving that $P$ is $(\epsilon,\tilde{\delta}(\epsilon))$ stable with $\tilde{\delta}(\epsilon)=O_k(\epsilon^{b_3^K})$, where $b_3$ is another sufficiently small positive constant. 

When $P$ is fixed (the case of interest both for us and for robust identifiability in general), if $\epsilon \rightarrow 0$ these results are tighter than ours as the bound $\tilde{\delta}\rightarrow 0$. In this case, the assumptions on $P$ become more relaxed than ours. \comment{ and minimal weights requirements on $P$ and $P'$ than ours.}However, it is not known how to determine the unspecified constants in $\tilde{\delta}$ from their proof techniques, and the dependence on $K$ and exponent of $\epsilon$ are not optimal (\cite{Regev2017}, respectively in the discussion in Section \ref{sec:setup}).

For larger, more realistic $\epsilon$, the rate $\tilde{\delta}(\epsilon)\sim \epsilon^{b_3^K}$ is much slower than what our numerical simulations achieve, which appears as $\delta(\epsilon,\model)\sim \epsilon$. In  this case, the assumptions on $\pimin$ and separation of \theoremref{thm:main1} remain fixed, while the assumptions in \cite{Allen2021,ABakshi2020} become more restrictive (or may even not apply) with larger $\epsilon$.  Hence, our results are more useful for robust recovery, where extending stability to larger $\epsilon$ is desired, while \citep{Allen2021,ABakshi2020} are useful in the limit $\epsilon\rightarrow 0$ and for {\em algorithmic} stability (the main goal of these works). This growth of  $\delta(\epsilon,\model)\sim \epsilon$ matches the sharp threshold of \citep{HoNguyen2016}, which is obtained asymptotically for $\epsilon\rightarrow 0$, suggesting that \citep{HoNguyen2016} may hold for larger $\epsilon$ and that our worst case bound may match it at least under certain conditions. 

Regarding the dependence on $K$, a sharp threshold for \sphgmm~was obtained by \citep{Regev2017}, who show that $\Omega(\sqrt{\log K})$ separation is necessary and sufficient to recover this class of GMM in polynomial sample size and time. Our \theoremref{thm:main1} is approximately matching this threshold. 

\paragraph{Results on other related problems}

Solving for exact MLE in Spherical Gaussian Mixture Models has been proved to be
NP-hard \citep{ToshDasgupta2018}. The current mainstream approach to
fitting GMM is the \emph{Expectation-Maximization} (EM) algorithm
\citep{EM1977};\comment{ which iteratively increases the value of the
log-likelihood. \citep{BinEM2017} prove the consistency of
population-level first-order EM when it is initialized within a
neighborhood of the true parameter. } One cannot
prove that EM will converge to the global maximum of log-likelihood
function without further assumptions, and in fact,  EM
can converge to bad local maxima with high probability
\citep{Chi2016}, even for well-separated Gaussians; the same paper
also confirms the existence of local maxima in the population
likelihood function. 
This negative result makes it the more necessary
to have a post-processing validation stage, in which to be able to
reject bad local optima. The results in our paper can fulfill this
role, in the population sense.

A second approach to estimating GMM parameters with consistency guarantees relies on \emph{Methods
of Moments} (MOM) \citep{Pearson1894}, and is exemplified in
\citep{mom-shamkakade,Yihong2020}. \citep{mom-shamkakade} used moments up to the third order to learn an \sphgmm. They don't require separation conditions, but need the means of different components to be linear independent. Therefore, this method cannot be applied to low $d$ and large $K$ scenario. \citep{Yihong2020} used an semi-definite programming based denoising procedure of moments up to order $O(K)$. \mmpmmp{The guarantees usually cannot be applied in large $K$,low $d$ scenario. Require $K\leq d$??}

When the cluster components are well-separated, a different line of
research proposed in the seminal paper \citep{Dasgupta1999} and
refined in \citep{Arora2001,Achlioptas2005,Kannan2008},etc, and \citep{DasguptaSchulman2007}, provides an accurate estimation of GMM parameters with high
probability. All these results are predicated on the data being
sampled from a mixture of well separated Gaussians.\mmpmmp{from the model class}

Also, previously very few results have been established on evaluating the result of a Gaussian Mixture fit. In these works, e.g. \citep{drtonPlummer:2017}, \citep{Huang2017}), the main target is model selection, especially consistently estimating the number of components in a Gaussian Mixture population.

\comment{
One can also use heuristic scores including rand index \citep{Rand1971}, silhouette coefficients \citep{Rousseeuw1987}, normalized mutual information \citep{Vinh2010} etc and their variants to evaluate the result of the associated soft clustering. These methods are heuristic and do not have a theoretical guarantee. Further, they also cannot answer the question of whether this fitted model (or model class) is meaningful to this particular dataset.}
\comment{{\bf Model free guarantees for unsupervised learning}
Previous model free validation results have been successfully established on non-probabilistic clustering problems, such as  loss-based hard clustering \citep{Meila2018}. The only model free guarantees in a model-based clustering framework that we are aware of are obtained by \cite{Yali2016} which focus on graph partitioning under a general family of network block models\comment{ \citep{MWan:nips15}}, including the Stochastic Block Model \citep{holland1983stochastic}, and the Degree Corrected Stochatic Block Model \citep{qinRohe:13}. The current paper does not focus on the validation of clustering but on the parameter learning problem.}
\comment{
In \citep{HM2016} a different framework of stability guarantees for general unsupervised learning is proposed. Similar to our proposal, they also need a hypothesis class and the assumption that the hypothesis class fits the data well. However, the guarantee of \citep{HM2016} are with respect to performance measured by a loss function, instead of the latent structure or parameters themselves. For instance, under their framework it is possible to obtain stability guarantees for non-identifiable models.
}

{\bf Bootstrap stability} A series of papers
\citep{lange:04,Rakhlin2006,Shai2007-kmeans,BenDavidBoundary,OhadTishby2009,Shamir2010-MLJournal}
discuss the properties of "bootstrap stability" of K-means and
Gaussian Mixtures, focusing on the stability of a clustering under
resampling of the data, or perturbations in the clustering
algorithm. The goal in these work was to use bootstrap stability
in selecting the number of clusters $K$.
\comment{But they do not provide a guarantee on a particular dataset and a know
estimation result. The stability proposed in these works is measuring
different uncertainty compared to ours. Their uncertainty mostly comes
from sampling errors. In the Model Free Validation framework used in
this paper, stability is a concept associated with both the dataset
and the model class. It provides a tool to measure and evaluate
intractable problems quantitatively.}


In conclusion, this paper obtains the first {\em computable} robust identifiability bounds for spherical Gaussian mixtures, in the population setting. Our bounds can be extended to  Gaussian mixtures with full covariance matrices $\Sigma_k$, and bounded excentricity. The bounds $\delta(\epsilon,\model)$ match the known sharp threshold w.r.t $K$ from \cite{Regev2017}, and are uniform over the model class $\model(K,\pimin,c)$. 

In the proof we introduce an iterative approach for tightening
the  bounds which is original, to our knowledge. Several
other results of our Lemmas can be of independent interest, such as a tighter bounds
on parameter variation for a single Gaussian, that remain informative
for larger perturbations in $TV$ distance than the previous bound of \cite{devroye2020total}.

\section*{Acknowledgment}

The author acknowledges support from NSF DMS award 1810975.

\bibliographystyle{plainnat}
\bibliography{mixture}

\begin{thebibliography}{31}
\providecommand{\natexlab}[1]{#1}
\providecommand{\url}[1]{\texttt{#1}}
\expandafter\ifx\csname urlstyle\endcsname\relax
  \providecommand{\doi}[1]{doi: #1}\else
  \providecommand{\doi}{doi: \begingroup \urlstyle{rm}\Url}\fi

\bibitem[Achlioptas and McSherry(2005)]{Achlioptas2005}
Dimitris Achlioptas and Frank McSherry.
\newblock On spectral learning of mixtures of distributions.
\newblock In \emph{Proceedings of the 18th Annual Conference on Learning
  Theory}, COLT'05, page 458–469, Berlin, Heidelberg, 2005. Springer-Verlag.
\newblock ISBN 3540265562.
\newblock \doi{10.1007/11503415_31}.
\newblock URL \url{https://doi.org/10.1007/11503415_31}.

\bibitem[Anderson et~al.(2014)Anderson, Belkin, Goyal, Rademacher, and
  Voss]{Anderson2014TheMT}
Joseph Anderson, Mikhail Belkin, Navin Goyal, Luis Rademacher, and James~R.
  Voss.
\newblock The more, the merrier: the blessing of dimensionality for learning
  large gaussian mixtures.
\newblock In \emph{COLT}, 2014.

\bibitem[Bakshi et~al.(2020)Bakshi, Diakonikolas, Jia, Kane, Kothari, and
  Vempala]{ABakshi2020}
Ainesh Bakshi, Ilias Diakonikolas, He~Jia, Daniel~M. Kane, Pravesh~K. Kothari,
  and Santosh~S. Vempala.
\newblock Robustly learning mixtures of k arbitrary gaussians.
\newblock \emph{CoRR}, abs/2012.02119, 2020.
\newblock URL \url{https://arxiv.org/abs/2012.02119}.

\bibitem[Ben-David and von Luxburg(2008)]{BenDavidBoundary}
S.~Ben-David and U.~von Luxburg.
\newblock Relating clustering stability to properties of cluster boundaries.
\newblock In \emph{COLT 2008}, pages 379--390, Madison, WI, USA, July 2008.
  Max-Planck-Gesellschaft, Omnipress.

\bibitem[Ben-David et~al.(2007)Ben-David, P{\'a}l, and Simon]{Shai2007-kmeans}
Shai Ben-David, D{\'a}vid P{\'a}l, and Hans~Ulrich Simon.
\newblock Stability of k-means clustering.
\newblock In Nader~H. Bshouty and Claudio Gentile, editors, \emph{Learning
  Theory}, pages 20--34, Berlin, Heidelberg, 2007. Springer Berlin Heidelberg.

\bibitem[Dasgupta(1999)]{Dasgupta1999}
Sanjoy Dasgupta.
\newblock Learning mixtures of gaussians.
\newblock In \emph{Proceedings of the 40th Annual Symposium on Foundations of
  Computer Science}, FOCS '99, page 634, USA, 1999. IEEE Computer Society.
\newblock ISBN 0769504094.

\bibitem[Dasgupta and Schulman(2007)]{DasguptaSchulman2007}
Sanjoy Dasgupta and Leonard Schulman.
\newblock A probabilistic analysis of em for mixtures of separated, spherical
  gaussians.
\newblock \emph{Journal of Machine Learning Research}, 8\penalty0 (7):\penalty0
  203--226, 2007.
\newblock URL \url{http://jmlr.org/papers/v8/dasgupta07a.html}.

\bibitem[Dempster et~al.(1977)Dempster, Laird, and Rubin]{EM1977}
A.~P. Dempster, N.~M. Laird, and D.~B. Rubin.
\newblock Maximum likelihood from incomplete data via the em algorithm.
\newblock \emph{Journal of the Royal Statistical Society: Series B
  (Methodological)}, 39\penalty0 (1):\penalty0 1--22, 1977.
\newblock \doi{https://doi.org/10.1111/j.2517-6161.1977.tb01600.x}.
\newblock URL
  \url{https://rss.onlinelibrary.wiley.com/doi/abs/10.1111/j.2517-6161.1977.tb01600.x}.

\bibitem[Devroye et~al.(2020)Devroye, Mehrabian, and Reddad]{devroye2020total}
Luc Devroye, Abbas Mehrabian, and Tommy Reddad.
\newblock The total variation distance between high-dimensional gaussians,
  2020.

\bibitem[Diakonikolas et~al.(2017)Diakonikolas, Kane, and
  Stewart]{Diakonikolas2017}
Ilias Diakonikolas, Daniel~M. Kane, and Alistair Stewart.
\newblock Statistical query lower bounds for robust estimation of
  high-dimensional gaussians and gaussian mixtures.
\newblock In \emph{2017 IEEE 58th Annual Symposium on Foundations of Computer
  Science (FOCS)}, pages 73--84, 2017.
\newblock \doi{10.1109/FOCS.2017.16}.

\bibitem[Doss et~al.(2020)Doss, Wu, Yang, and Zhou]{doss2020optimal}
Natalie Doss, Yihong Wu, Pengkun Yang, and Harrison~H. Zhou.
\newblock Optimal estimation of high-dimensional gaussian mixtures.
\newblock 2020.

\bibitem[Drton and Plummer(2017)]{drtonPlummer:2017}
Mathias Drton and Martyn Plummer.
\newblock A bayesian information criterion for singular models.
\newblock \emph{Journal of the Royal Statistical Society: Series B (Statistical
  Methodology)}, 79\penalty0 (2):\penalty0 323--380, Feb 2017.
\newblock \doi{10.1111/rssb.12187}.

\bibitem[Flanders(1973)]{Differential1973}
Harley Flanders.
\newblock Differentiation under the integral sign.
\newblock \emph{The American Mathematical Monthly}, 80\penalty0 (6):\penalty0
  615--627, 1973.
\newblock ISSN 00029890, 19300972.
\newblock URL \url{http://www.jstor.org/stable/2319163}.

\bibitem[Gibbs and Su(2002)]{Gibbs2002probDist}
Alison~L. Gibbs and Francis~Edward Su.
\newblock On choosing and bounding probability metrics.
\newblock \emph{International Statistical Review / Revue Internationale de
  Statistique}, 70\penalty0 (3):\penalty0 419--435, 2002.
\newblock ISSN 03067734, 17515823.
\newblock URL \url{http://www.jstor.org/stable/1403865}.

\bibitem[Heinrich and Kahn(2018)]{Heinrich2018}
Philippe Heinrich and Jonas Kahn.
\newblock {Strong identifiability and optimal minimax rates for finite mixture
  estimation}.
\newblock \emph{The Annals of Statistics}, 46\penalty0 (6A):\penalty0 2844 --
  2870, 2018.
\newblock \doi{10.1214/17-AOS1641}.
\newblock URL \url{https://doi.org/10.1214/17-AOS1641}.

\bibitem[Ho and Nguyen(2016)]{HoNguyen2016}
Nhat Ho and XuanLong Nguyen.
\newblock {Convergence rates of parameter estimation for some weakly
  identifiable finite mixtures}.
\newblock \emph{The Annals of Statistics}, 44\penalty0 (6):\penalty0 2726 --
  2755, 2016.
\newblock \doi{10.1214/16-AOS1444}.
\newblock URL \url{https://doi.org/10.1214/16-AOS1444}.

\bibitem[Hsu and Kakade(2013)]{mom-shamkakade}
Daniel Hsu and Sham~M. Kakade.
\newblock Learning mixtures of spherical gaussians: Moment methods and spectral
  decompositions.
\newblock In \emph{Proceedings of the 4th Conference on Innovations in
  Theoretical Computer Science}, ITCS '13, page 11–20, New York, NY, USA,
  2013. Association for Computing Machinery.
\newblock ISBN 9781450318594.
\newblock \doi{10.1145/2422436.2422439}.
\newblock URL \url{https://doi.org/10.1145/2422436.2422439}.

\bibitem[Huang et~al.(2017)Huang, Peng, and Zhang]{Huang2017}
Tao Huang, Heng Peng, and Kun Zhang.
\newblock Model selection for gaussian mixture models.
\newblock \emph{Statistica Sinica}, 27\penalty0 (1):\penalty0 147--169, 2017.
\newblock ISSN 10170405, 19968507.
\newblock URL \url{http://www.jstor.org/stable/44114365}.

\bibitem[Jin et~al.(2016)Jin, Zhang, Balakrishnan, Wainwright, and
  Jordan]{Chi2016}
Chi Jin, Yuchen Zhang, Sivaraman Balakrishnan, Martin~J. Wainwright, and
  Michael~I. Jordan.
\newblock Local maxima in the likelihood of gaussian mixture models: Structural
  results and algorithmic consequences.
\newblock In \emph{Proceedings of the 30th International Conference on Neural
  Information Processing Systems}, NIPS'16, page 4123–4131, Red Hook, NY,
  USA, 2016. Curran Associates Inc.
\newblock ISBN 9781510838819.

\bibitem[Kannan et~al.(2008)Kannan, Salmasian, and Vempala]{Kannan2008}
Ravindran Kannan, Hadi Salmasian, and Santosh Vempala.
\newblock The spectral method for general mixture models.
\newblock \emph{SIAM Journal on Computing}, 38\penalty0 (3):\penalty0
  1141--1156, 2008.
\newblock \doi{10.1137/S0097539704445925}.

\bibitem[Lange et~al.(2004)Lange, Roth, Braun, and Buhmann]{lange:04}
Tilman Lange, Volker Roth, Mikio~L. Braun, and Joachim~M. Buhmann.
\newblock Stability-based validation of clustering solutions.
\newblock \emph{Neural Comput.}, 16\penalty0 (6):\penalty0 1299--1323, 2004.
\newblock ISSN 0899-7667.
\newblock \doi{http://dx.doi.org/10.1162/089976604773717621}.

\bibitem[Liu and Moitra(2021)]{Allen2021}
Allen Liu and Ankur Moitra.
\newblock Settling the robust learnability of mixtures of gaussians.
\newblock In \emph{Proceedings of the 53rd Annual ACM SIGACT Symposium on
  Theory of Computing}, STOC 2021, page 518–531, New York, NY, USA, 2021.
  Association for Computing Machinery.
\newblock ISBN 9781450380539.
\newblock \doi{10.1145/3406325.3451084}.
\newblock URL \url{https://doi.org/10.1145/3406325.3451084}.

\bibitem[Meil\u{a}(2006)]{Meila2006}
Marina Meil\u{a}.
\newblock The uniqueness of a good optimum for k-means.
\newblock In \emph{Proceedings of the 23rd International Conference on Machine
  Learning}, ICML '06, page 625–632, New York, NY, USA, 2006. Association for
  Computing Machinery.
\newblock ISBN 1595933832.
\newblock \doi{10.1145/1143844.1143923}.
\newblock URL \url{https://doi.org/10.1145/1143844.1143923}.

\bibitem[Pearson(1894)]{Pearson1894}
Karl Pearson.
\newblock Iii. contributions to the mathematical theory of evolution.
\newblock \emph{Philosophical Transactions of the Royal Society of London.
  (A.)}, 185:\penalty0 71--110, 1894.

\bibitem[Rakhlin and Caponnetto(2006)]{Rakhlin2006}
Alexander Rakhlin and Andrea Caponnetto.
\newblock Stability of k-means clustering.
\newblock In \emph{Proceedings of the 19th International Conference on Neural
  Information Processing Systems}, NIPS'06, pages 1121--1128, Cambridge, MA,
  USA, 2006. MIT Press.

\bibitem[Regev and Vijayaraghavan(2017)]{Regev2017}
Oded Regev and Aravindan Vijayaraghavan.
\newblock On learning mixtures of well-separated gaussians.
\newblock \emph{CoRR}, abs/1710.11592, 2017.
\newblock URL \url{http://arxiv.org/abs/1710.11592}.

\bibitem[Sanjeev and Kannan(2001)]{Arora2001}
Arora Sanjeev and Ravi Kannan.
\newblock Learning mixtures of arbitrary gaussians.
\newblock In \emph{Proceedings of the Thirty-Third Annual ACM Symposium on
  Theory of Computing}, STOC '01, page 247–257, New York, NY, USA, 2001.
  Association for Computing Machinery.
\newblock ISBN 1581133499.
\newblock \doi{10.1145/380752.380808}.
\newblock URL \url{https://doi.org/10.1145/380752.380808}.

\bibitem[Shamir and Tishby(2009)]{OhadTishby2009}
Ohad Shamir and Naftali Tishby.
\newblock On the reliability of clustering stability in the large sample
  regime.
\newblock In D.~Koller, D.~Schuurmans, Y.~Bengio, and L.~Bottou, editors,
  \emph{Advances in Neural Information Processing Systems 21}, pages
  1465--1472. Curran Associates, Inc., 2009.

\bibitem[Shamir and Tishby(2010)]{Shamir2010-MLJournal}
Ohad Shamir and Naftali Tishby.
\newblock Stability and model selection in k-means clustering.
\newblock \emph{Machine Learning}, 80\penalty0 (2):\penalty0 213--243, Sep
  2010.

\bibitem[Tosh and Dasgupta(2018)]{ToshDasgupta2018}
Christopher Tosh and Sanjoy Dasgupta.
\newblock Maximum likelihood estimation for mixtures of spherical gaussians is
  np-hard.
\newblock \emph{Journal of Machine Learning Research}, 18\penalty0
  (175):\penalty0 1--11, 2018.
\newblock URL \url{http://jmlr.org/papers/v18/16-657.html}.

\bibitem[Wu and Yang(2020)]{Yihong2020}
Yihong Wu and Pengkun Yang.
\newblock {Optimal estimation of Gaussian mixtures via denoised method of
  moments}.
\newblock \emph{The Annals of Statistics}, 48\penalty0 (4):\penalty0 1981 --
  2007, 2020.
\newblock \doi{10.1214/19-AOS1873}.
\newblock URL \url{https://doi.org/10.1214/19-AOS1873}.

\end{thebibliography}


\newpage
\appendix
\section{Proof of Main Results}
\label{sec:proof-main}

\subsection{Proof of Proposition \ref{lem:constantsExist}}
\begin{proof}[Proof of Proposition \ref{lem:constantsExist}]
    First $c_0$ exists and is unique from \eqref{eq:c0}. The definition of 
    $c_0$ shows that 
    \begin{equation}
    \Phi(-c_0/2)=1-\Phi(c_0/2) = 1-\Phi(\Phi^{-1}(1-\frac{\pi_{\min}-2\epsilon}{2}))=\frac{1}{2}(\pi_{\min}-2\epsilon)\;.
    \end{equation}
    
    Consider $R(\eta)$ to be a function defined by the the R.H.S. of \eqref{eq:eta0-diff} tending to zero, i.e.
    \begin{equation}
        R(\eta) = F_d\left(\frac{2\eta^2\log \eta}{\eta^2-1} \right) - F_d\left(\frac{2\log\eta}{\eta^2-1}\right)
    \end{equation}
   and $R(\eta)\rightarrow 0$ when $\eta\rightarrow 1^+$.
   According to \lemmaref{lem:mono-eta-function}, $R(\eta)$ is increasing in $\eta$ and tends to one when $\eta\rightarrow \infty$. Denote 
    \begin{align}
    L(\eta) &= 1- \frac{\pi_{\min}-2\epsilon}{\pi} + \frac{1-\pi}{\pi}2\Phi(-\frac{\eta_0c_0}{2}) \;.
    \end{align}
    Then the L.H.S. of \eqref{eq:eta0-diff} can both be written as $L(\eta)$. Since
    \begin{align}
   L(1) &= 1- \frac{\pi_{\min}-2\epsilon}{\pi} + \frac{1-\pi}{\pi}(\pi_{\min}-2\epsilon) \\
    &= 1-(\pi_{\min}-2\epsilon)\left(-\frac{1}{\pi}+\frac{1}{\pi}-1\right) \\
    &= 1-\pi_{\min}+2\epsilon > 0
    \end{align}
    and when $\eta\rightarrow \infty$, $L(\eta)$ tends to $1-(\pi_{\min}-2\epsilon)/\pi <1$. Therefore since $L(\eta)$ is decreasing in $\eta$ and $R(\eta)$ is increasing, there exists a unique $\eta_0$ that satisfies equation \eqref{eq:eta0-diff}. 
   
    For the second part of this lemma, we now consider the case $\pi_{\min}=\Omega(K^{-(1+\alpha)})$ for some $\alpha \geq 0$,$\pi_{\max}/\pi_{\min}=O(K^\beta)$ for some $0\leq \beta \leq 1+\alpha$ and $\epsilon$ small enough such that $\pi_{\min}-2\epsilon=\Omega(K^{-(1+\alpha)})$. By \eqref{eq:c0}, it holds that 
    \begin{equation}
     \Phi(-\frac{1}{2}c_0) = \Omega\left(\frac{1}{2K^{1+\alpha}}\right)\;.
    \end{equation}
    Note that for any constant $\zeta>0$,
    \begin{equation}
    \Phi(-\frac{1}{2}2\sqrt{2\log(2K^{1+\alpha}/\zeta)})\leq \exp\left(-\frac{1}{8}(2\sqrt{2\log(2K^{1+\alpha}/\zeta)})^2\right) = \frac{\zeta}{2K^{1+\alpha}} \lesssim \Phi(-\frac{1}{2}c_0)
    \end{equation}
    where $a \lesssim b$ means there exists a constant $A>0$ such that $a \leq A\cdot b$. Hence there exists some $\zeta_0>0$ such that $$c_0 \leq 2\sqrt{2\log(\frac{2K^{1+\alpha}}{\zeta_0})} = O(\sqrt{\log K})$$. 
    
    Now we upper bound $\eta_0$. Note that for any $t > 0$, let 
    \begin{equation}
        g(t)=1-\Phi(t)-\frac{1}{\sqrt{2\pi}}\frac{t}{t^2+1}\exp(-\frac{1}{2}t^2).
    \end{equation}
    Since $g'(t)=-2e^{-t^2/2}/(t^2+1)^2$, $g(t)$ is strictly decreasing. Since $\lim_{t\rightarrow \infty}g(t) = 0$, we have  $g(t)\geq 0$, i.e., 
    \begin{equation}
       1-\Phi(t)\geq \frac{1}{\sqrt{2\pi}}\frac{t}{t^2+1}\exp(-\frac{1}{2}t^2) > \frac{1}{5\sqrt{2\pi} t}\exp(-\frac{1}{2}t^2)
    \end{equation}
    holds for all $t\geq 1/2$. Therefore since $\Phi(-c_0/2)\leq 1/2K\leq 1/4$ or $c_0/2 \geq 0.67 > 0.5$
    \begin{equation}
        \frac{1}{2K}\geq \Phi(-\frac{1}{2}c_0)\geq \frac{1}{5\sqrt{2\pi} \frac{1}{2}c_0}\exp(-\frac{1}{8}c_0^2)\;,
    \end{equation}
    we conclude that $\exp(-\frac{1}{8}c_0^2)= O(\sqrt{\log K}/K)$. Therefore, 
    \begin{align}
        L(\eta) &= 1-\frac{\pi_{\min}-2\epsilon}{\pi_{\max}}+\frac{1-\pi_{\max}}{\pi_{\max}}2\Phi(-\frac{1}{2}\eta c_0) \\
        &\leq 1-\frac{\pi_{\min}-2\epsilon}{\pi_{\max}}+ 2K\exp\left(-\frac{1}{2}\eta^2 c_0^2\right) \\
        &= 1-\frac{\pi_{\min}-2\epsilon}{\pi_{\max}}+ O\left(K \left[\frac{\sqrt{\log K}}{K}\right]^{4\eta^2}\right)\;.
    \end{align}
   
    On the other hand the R.H.S. of \eqref{eq:eta0-diff} is lower bounded by 
    \begin{equation}
    R(\eta) > 1 - \left(\frac{2\eta}{\eta^2+1}\right)^{\frac{d}{2}} > 1 - \left(\frac{2}{\eta}\right)^{\frac{d}{2}}.
    \end{equation}
   Hence take $\tilde{\eta}=2(2\pi_{\max}/(\pi_{\min}-2\epsilon))^{2/d}=O(K^{2\beta/d})$, it holds that
   \begin{equation}
       R(\tilde{\eta})-L(\tilde{\eta}) > \frac{1}{2}\frac{\pi_{\min}-2\epsilon}{\pi_{\max}}-\Omega\left(K \left[\frac{\sqrt{\log K}}{K}\right]^{4\tilde{\eta}^2}\right) \;.
   \end{equation}
    However by the selection of $\tilde{\eta}$, $R(\tilde{\eta})-L(\tilde{\eta})$ has to be positive for sufficiently large $K$. Therefore $\eta_0 < \tilde{\eta}$ and we conclude that for sufficiently large $K$ $\eta_0=O(K^{2\beta/d})$.
   
    Finally we will consider the estimates of $c^*$ and $\eta^*$. To start with, the R.H.S. of \eqref{eq:cstar} is always upper bounded by
   
    \begin{align}
       \frac{2\epsilon}{\pi_{\min}}+\frac{2(1-\pi_{\min})}{\pi_{\min}}\Phi(-\frac{1}{2}[c(1+\frac{1}{\eta^*})-c^*])&\leq \gamma + 2K\exp\left(-\frac{1}{2}c_0^2\eta_0^2\right) \\
       &= \frac{1}{K^\gamma} + O\left(K\left[\frac{\sqrt{\log K}}{K}\right]^{4\eta_0^2}\right)\;.
   \end{align}
   The R.H.S. of \eqref{eq:cstar} and \eqref{eq:etastar} is $O(1)$ and smaller than 1 for sufficiently large $K$. Note that for two spherical Gaussians $P_1=N_d(\mu_1,\sigma_1^2I_d),P_2=N_d(\mu_2,\sigma_2^2 I_d)$, from \lemmaref{lem:GaussianTV} and conclude the bounds on $c^*$ and $\eta^*$.
   \end{proof}

\subsection{Technical Tools for Proving \theoremref{thm:main1}}
Instead of directly diving into the proof of \theoremref{thm:main1}, we start by some observations in mixture models and analysis on total variation distances between spherical Gaussians. It is worthwhile noticing that in the lemmas in this subsection, we don't require the two mixture models $P,P'$ to have the same number of components.

\subsubsection{Total Variation Distance Between Spherical Gaussians}
We develop a lemma upper bounding parameter distances between
spherical Gaussians given their total variation distance. For this, one can use
Hellinger distance, which includes an anlytical expression as a
natural lower bound \citet{Gibbs2002probDist}, but they are usually not
tight in constants and therefore they cannot separate the bound in
mean parameter and variance parameter. \citet{devroye2020total}
proposes another lower bound starting from the definitions but their results are
only meaningful when the total variation distance is sufficiently
small.
\begin{lemma}
  \label{lem:GaussianTV}
  Suppose $P_1 = N_d(\mu_1,\sigma_1^2 I_d)$ and $P_2 = N_d(\mu_2,\sigma_2^2 I_d)$. Let $C_0(\rho)=2\Phi^{-1}(1-\frac{\rho}{2})$ and $\eta_0(\rho)$ be the solution of
   \begin{equation}
    1-\rho = F_d\left(\frac{2\eta^2 \log \eta}{\eta^2-1}\right) - F_d\left(\frac{2 \log \eta}{\eta^2-1}\right)\;.
  \end{equation}
  Then the following holds:
  \begin{itemize}
    \item If $\norm{\mu_1-\mu_2}\geq C_0(\rho)(\sigma_1+\sigma_2)/2$, then $TV(P_1,P_2)\geq 1-\rho$. Equality holds iff $\sigma_1 = \sigma_2$ and $\norm{\mu_1-\mu_2} = C_0(\rho)(\sigma_1+\sigma_2)/2$.
    \item If $\max\{\sigma_1/\sigma_2,\sigma_2/\sigma_1\}\geq \eta_0(\rho)$, then $TV(P_1,P_2)\geq 1-\rho$. Equality holds iff $\mu_1 = \mu_2$ and $\max\{\sigma_1/\sigma_2,\sigma_2/\sigma_1\} = \eta_0(\rho)$.
  \end{itemize}
\end{lemma}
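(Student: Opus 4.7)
The plan is to prove both claims by producing an explicit test set $A$ and using the variational characterization $TV(P_1,P_2) = \sup_A |P_1(A) - P_2(A)|$, matching the defining equations of $C_0(\rho)$ and $\eta_0(\rho)$ at the end. In each case the test set is chosen so that the one-sided gap $|P_1(A)-P_2(A)|$ is already $1-\rho$ at the threshold configuration.

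For the mean-separation claim I would take a half-space $A = \{x: \langle x, v\rangle \geq t\}$ oriented along $v = (\mu_2 - \mu_1)/\norm{\mu_2-\mu_1}$. Projecting onto $v$ reduces the problem to two one-dimensional Gaussians $N(\langle \mu_i, v\rangle, \sigma_i^2)$, and the variance-weighted interpolation $t = (\sigma_2 \langle \mu_1, v\rangle + \sigma_1 \langle \mu_2, v\rangle)/(\sigma_1+\sigma_2)$ symmetrizes the two standardized thresholds to $\pm \norm{\mu_1-\mu_2}/(\sigma_1+\sigma_2)$. This yields
\begin{equation*}
TV(P_1, P_2) \;\geq\; 1 - 2\Phi\!\left(-\frac{\norm{\mu_1-\mu_2}}{\sigma_1+\sigma_2}\right),
\end{equation*}
and the hypothesis $\norm{\mu_1-\mu_2} \geq C_0(\rho)(\sigma_1+\sigma_2)/2$ together with $C_0(\rho) = 2\Phi^{-1}(1-\rho/2)$ immediately yields the $1-\rho$ lower bound.

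For the variance-ratio claim I would assume without loss of generality $\sigma_2 > \sigma_1$, set $\eta = \sigma_2/\sigma_1$, and pick $A = B_r(\mu_1)$, a ball centered at $\mu_1$ of radius $r$ to be chosen. Then $P_1(A) = F_d(r^2/(d\sigma_1^2))$ is direct. For $P_2(A) = \Pr_{Y \sim N(0, \sigma_2^2 I_d)}(Y \in B_r(\mu_1-\mu_2))$, Anderson's inequality for symmetric unimodal densities gives $P_2(A) \leq \Pr(Y \in B_r(0)) = F_d(r^2/(d\sigma_2^2))$. Selecting $r^2 = 2d\sigma_1^2\eta^2\log\eta/(\eta^2-1)$, the radius at which the two spherical Gaussian densities cross in the aligned case, produces the difference $F_d(2\eta^2\log\eta/(\eta^2-1)) - F_d(2\log\eta/(\eta^2-1))$ that matches the defining equation of $\eta_0(\rho)$; monotonicity of this quantity in $\eta$ (as provided by \lemmaref{lem:mono-eta-function}) closes the $1-\rho$ bound whenever $\eta \geq \eta_0(\rho)$.

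The equality clauses will be read off from when the constructed test sets are Bayes-optimal: a half-space is optimal iff the two densities differ across a hyperplane, which forces $\sigma_1 = \sigma_2$; Anderson's inequality is strict unless the ball is centered at the mean of $Y$, which forces $\mu_1 = \mu_2$; in each case the relevant threshold must then be saturated. The step I expect to be most delicate is the appeal to Anderson's inequality, since this is what lets us sidestep proving monotonicity of $TV$ in $\norm{\mu_1-\mu_2}$ for unequal variances; that monotonicity would be the natural alternative route but is less straightforward than it first looks.
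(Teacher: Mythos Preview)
Your proposal is correct and takes a genuinely different, and considerably shorter, route than the paper.

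For the mean-separation part, the paper works with the \emph{optimal} test set $\{p_1\ge p_2\}$, which in one dimension (with $\sigma_1\neq\sigma_2$) is an interval; it then writes $h(C,\eta)=P_1(A)-P_2(A)$ explicitly and differentiates in $\eta$ to show $h(C,\eta)\ge h(C,1^+)=1-2\Phi(-C)$. Your $\sigma$-weighted half-space threshold $t=(\sigma_2 m_1+\sigma_1 m_2)/(\sigma_1+\sigma_2)$ bypasses all of this: it equalises the two standardised offsets to $\pm\norm{\mu_1-\mu_2}/(\sigma_1+\sigma_2)$ in one line and gives the same lower bound without any calculus. The trade-off is that the paper's computation delivers the exact $TV$ as a function of $(C,\eta)$, whereas you only get a lower bound; but for this lemma the lower bound is all that is needed, and your argument for strictness when $\sigma_1\neq\sigma_2$ (the half-space is not the likelihood-ratio set, hence strictly suboptimal) is the right one.

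For the variance-ratio part, the contrast is sharper. The paper again takes the optimal set, which is now a ball centred at a point \emph{between} $\mu_1$ and $\mu_2$, and proves monotonicity of $TV$ in $C=\norm{\mu_1-\mu_2}$ via a Reynolds-type transport theorem on time-varying domains, complete with exterior forms and interior products. Your use of Anderson's inequality on the ball $B_r(\mu_1)$ replaces that entire argument: translating a symmetric convex set away from the mean of a centred Gaussian can only lose mass, so $P_2(B_r(\mu_1))\le P_2(B_r(\mu_2))=F_d(r^2/(d\sigma_2^2))$, with strict inequality when $\mu_1\neq\mu_2$. Combined with \lemmaref{lem:mono-eta-function} (which gives monotonicity of the two $F_d$ arguments, hence of their difference), this is a clean two-line substitute for several pages. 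The paper's route has the advantage of yielding monotonicity of the \emph{exact} $TV$ in the mean separation, a statement of independent interest; your route gives only the lower bound needed for the lemma, but does so far more economically and with a standard tool.
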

We separate the Gaussian total variation lower bound into three different lemmas and prove them.
\begin{lemma}
    Suppose $P_1 = N_d(\mu_1,\sigma_1^2I_d)$ and $P_2= N_d(\mu_2,\sigma_2^2 I_d)$. Let $\Phi$ be the CDF of standard normal distribution and define for each $0 < \rho < 1$, 
    \begin{equation}
    C_0(\rho) = 2\Phi^{-1}(1-\frac{\rho}{2})
    \end{equation}
    If $C=\norm{\mu_1-\mu_2}/\max\{\sigma_1,\sigma_2\} \geq C_0(\rho)$, then there exists a set $A$ such that $P_1(A)-P_2(A) \geq 1-\rho$. Equality holds iff $\sigma_1 = \sigma_2$ and $C=C_0(\rho)$.
    \label{lem:tv-mono}
   \end{lemma}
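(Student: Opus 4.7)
The strategy is to reduce the $d$-dimensional problem to one dimension by projecting onto the line through the two means, and to exhibit an explicit half-space that witnesses the bound. Using the identity $P_1(A) - P_2(A) = P_2(A^c) - P_1(A^c)$, I may assume without loss of generality that $\sigma_1 \geq \sigma_2$, so that $\max\{\sigma_1,\sigma_2\} = \sigma_1$. Write $D = \|\mu_1 - \mu_2\|$, so the hypothesis reads $D \geq C_0(\rho)\sigma_1$. With $e = (\mu_2 - \mu_1)/D$, I would take the half-space
\[
A = \bigl\{ x \in \mathbb{R}^d : \langle x - \mu_1, e \rangle \leq D/2 \bigr\}.
\]

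Because each $P_j$ is spherical, the scalar $\langle x - \mu_1, e\rangle$ is univariate Gaussian under both measures: mean $0$ and variance $\sigma_1^2$ under $P_1$, and mean $D$ and variance $\sigma_2^2$ under $P_2$. Hence
\[
P_1(A) - P_2(A) \;=\; \Phi\bigl(D/(2\sigma_1)\bigr) \;-\; \Phi\bigl(-D/(2\sigma_2)\bigr).
\]
The hypothesis gives $D/(2\sigma_1) \geq C_0(\rho)/2 = \Phi^{-1}(1-\rho/2)$, so $\Phi\bigl(D/(2\sigma_1)\bigr) \geq 1 - \rho/2$; and $\sigma_2 \leq \sigma_1$ forces $D/(2\sigma_2) \geq D/(2\sigma_1)$, whence $\Phi\bigl(-D/(2\sigma_2)\bigr) \leq \rho/2$. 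Subtracting yields $P_1(A) - P_2(A) \geq 1 - \rho$, which is the desired inequality.

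For the equality clause, I would argue that equality in the chain above forces both of the two estimates to be tight simultaneously. The first estimate forces $D/\sigma_1 = C_0(\rho)$, i.e.\ $C = C_0(\rho)$; the second then additionally requires $D/(2\sigma_2) = D/(2\sigma_1)$, i.e.\ $\sigma_1 = \sigma_2$. Conversely, under these two conditions the above choice of $A$ achieves equality, as both $\Phi$ values are attained exactly. I do not foresee any serious obstacle; the argument is a standard Neyman--Pearson style calculation on the one-dimensional projection, and the only mild care required is the initial WLOG reduction, chosen so that the larger variance supplies the lower bound on $\Phi\bigl(D/(2\sigma_1)\bigr)$ while the smaller one controls $\Phi\bigl(-D/(2\sigma_2)\bigr)$.
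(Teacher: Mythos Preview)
Your proof is correct and takes a genuinely different, more elementary route than the paper's.

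The paper works with the Neyman--Pearson optimal set $A=\{p_1\ge p_2\}$, which for unequal variances is an interval (a ball in $d$ dimensions) rather than a half-space. It then writes $P_1(A)-P_2(A)=h(C,\eta)$ as an explicit function of the separation $C$ and the variance ratio $\eta=\sigma_2/\sigma_1$, and proves by differentiation that $\partial h/\partial\eta>0$, so the exact total variation is strictly increasing in $\eta$; taking the limit $\eta\to 1$ recovers $1-2\Phi(-C/2)$. Your argument instead uses the midpoint half-space, which is \emph{not} the TV-optimal set when $\sigma_1\neq\sigma_2$, but you observe that it is already good enough: the two tail probabilities $\Phi(D/(2\sigma_1))$ and $\Phi(-D/(2\sigma_2))$ can be bounded separately by $1-\rho/2$ and $\rho/2$ under the hypothesis and the ordering $\sigma_1\ge\sigma_2$. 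This avoids all the calculus.

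What each approach buys: yours is shorter and needs no monotonicity lemma. The paper's approach does more work but yields the exact TV as a function of $(C,\eta)$ together with its monotonicity, and that machinery is reused in the companion result on variance ratios (their Lemma on $\eta_0(\rho)$), where a half-space would not suffice.

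One small point on the equality clause: your contrapositive handles the ``only if'' direction cleanly (if either $\sigma_1>\sigma_2$ or $C>C_0(\rho)$ strictly, your half-space already gives strict inequality, hence $TV>1-\rho$). For the ``if'' direction you show your $A$ attains $1-\rho$, but you should also note that when $\sigma_1=\sigma_2$ this half-space \emph{is} the likelihood-ratio set $\{p_1\ge p_2\}$, so no other set can do better and $TV(P_1,P_2)=1-\rho$ exactly. You gesture at Neyman--Pearson; making that one sentence explicit closes the case.
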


\begin{proof}[Proof of \lemmaref{lem:tv-mono}] Let $p_1,p_2$ be the density of $P_1,P_2$ and random variables $X_1,X_2$ follow $P_1,P_2$ respectively.\\

 WLOG assume that $\sigma_1 \leq \sigma_2$. If the statement holds under this case, then when $\sigma_1 > \sigma_2$, one can select $\overline{A}$ such that $P_2(\overline{A})-P_1(\overline{A}) > 1-\rho$. Then take $A=\overline{A}^\complement$ and the desired result holds.
 
 To start with, we consider the case $d=1$ and assume $\mu_2-\mu_1 = C\sigma_2=C\eta \sigma_1>0$. When $\sigma_2 > \sigma_1$, let $A$ be the set
 $\{x\in\mathbb{R}: x_{0,-} \leq x\leq x_{0,+} \}$
 where
 \begin{align}
 x_{0,\pm} &= \frac{\frac{\mu_1}{\sigma_1^2}-\frac{\mu_2}{\sigma_2^2} \pm \sqrt{\frac{(\mu_1-\mu_2)^2}{\sigma_1^2\sigma_2^2}+2(\frac{1}{\sigma_1^2}-\frac{1}{\sigma_2^2})\log \frac{\sigma_2}{\sigma_1}}}{\frac{1}{\sigma_1^2}-\frac{1}{\sigma_2^2}}
 \label{eq:x0pm}
 \end{align}
 $x_{0,\pm}$ are the two roots of the equation that the two normal densities equal, i.e. 
 \begin{equation}
 \frac{1}{\sqrt{2\pi}\sigma_1}\exp\{-\frac{(x-\mu_1)^2}{2\sigma_1^2}\}=\frac{1}{\sqrt{2\pi}\sigma_2}\exp\{-\frac{(x-\mu_2)^2}{2\sigma_2^2}\}
 \label{eq:prob-equal-mu}
 \end{equation}
 
 Denote $\eta=\sigma_2/\sigma_1 \geq 1$, when $\eta > 1$ it holds that
 \begin{align}
 x_{1,\pm} &= \frac{x_{0,\pm}-\mu_1}{\sigma_1} = \frac{-C\eta \pm \eta\sqrt{C^2\eta^2+2(\eta^2-1)\log \eta}}{\eta^2-1} \\
 x_{2,\pm} &= \frac{x_{0,\pm}-\mu_2}{\sigma_2} = \frac{-C\eta^2\pm \sqrt{C^2\eta^2+2(\eta^2-1)\log \eta} }{\eta^2-1}
 \end{align}
 Let $\varphi$ be the density function of standard normal distribution then $\varphi(x_{2,+})=\eta \varphi(x_{1,+}),\varphi(x_{2,-})=\eta \varphi(x_{1,-})$ by the selection of $x_{0,\pm}$. Since the desired expression 
 \begin{align}
 P_1(A)-P_2(A) &= Pr(x_{1,-}\leq \frac{x-\mu_1}{\sigma_1} \leq x_{1,+}) - Pr(x_{2,-}\leq \frac{x-\mu_2}{\sigma_2} \leq x_{2,+}) \\
 &= \Phi(x_{1,+}) - \Phi(x_{1,-}) - \Phi(x_{2,+}) + \Phi(x_{2,-})
 \end{align}
 is implicitly a function of $C,\eta$, we denote it as $h_+(C,\eta)$. Note that since
 \begin{align}
 x_{0,\pm}&=\frac{\left(\frac{\mu_1}{\sigma_1^2}-\frac{\mu_2}{\sigma_2^2}\right)^2 - \frac{(\mu_1-\mu_2)^2}{\sigma_1^2\sigma_2^2}-2(\frac{1}{\sigma_1^2}-\frac{1}{\sigma_2^2})\log \frac{\sigma_2}{\sigma_1}}{\left(\frac{1}{\sigma_1^2}-\frac{1}{\sigma_2^2}\right)\left(\frac{\mu_1}{\sigma_1^2}-\frac{\mu_2}{\sigma_2^2} \mp \sqrt{\frac{(\mu_1-\mu_2)^2}{\sigma_1^2\sigma_2^2}+2(\frac{1}{\sigma_1^2}-\frac{1}{\sigma_2^2})\log \frac{\sigma_2}{\sigma_1}}\right)}=\frac{\frac{\mu_1^2}{\sigma_1^2}-\frac{\mu_2^2}{\sigma_2^2}-2\log \frac{\sigma_2}{\sigma_1}}{\frac{\mu_1}{\sigma_1^2}-\frac{\mu_2}{\sigma_2^2} \mp \sqrt{\frac{(\mu_1-\mu_2)^2}{\sigma_1^2\sigma_2^2}+2(\frac{1}{\sigma_1^2}-\frac{1}{\sigma_2^2})\log \frac{\sigma_2}{\sigma_1}}}
 \end{align}
 Consider the limit $\sigma_2\rightarrow \sigma_1$ with $\sigma_1$ fixed and recall that $\mu_2-\mu_1 = C\sigma_2 > 0$, it holds that $\lim_{\sigma_2\rightarrow \sigma_1}x_{0,-}=-\infty$ and $\lim_{\sigma_2\rightarrow \sigma_1}x_{0,+}=(\mu_1+\mu_2)/2$. And further 
 \begin{align}
     \lim_{\sigma_2 \rightarrow \sigma_1^+} x_{1,+}&= \frac{\mu_2-\mu-1}{2\sigma_1} = \frac{C}{2},\quad \lim_{\sigma_2 \rightarrow \sigma_1^+} x_{1,-}= -\infty \\
     \lim_{\sigma_2 \rightarrow \sigma_1^+} x_{2,+}&= \frac{\mu_1-\mu_2}{2\sigma_2} = -\frac{C}{2},\quad \lim_{\sigma_2 \rightarrow \sigma_1^+} x_{2,-}= -\infty
 \end{align}
 which implies, since $\Phi$ is continuous,
 \begin{equation}
    \lim_{\eta\rightarrow 1^+} h_+(C,\eta) =1-2\Phi(-\frac{C}{2})
    \end{equation}
 
 Observe that $x_{1,\pm}-\eta x_{2,\pm} = C\eta$ or $x_{2,\pm} = x_{1,\pm}/\eta-C$. Then
 \begin{equation}
 \frac{\partial x_{1,\pm}}{\partial \eta}-\eta\frac{\partial x_{2,\pm}}{\partial \eta} = \frac{\partial x_{1,\pm}}{\partial \eta} - \eta\left[-\frac{1}{\eta^2}x_{1,\pm}+\frac{1}{\eta}\frac{\partial x_{1,\pm}}{\partial \eta}\right] = \frac{1}{\eta} x_{1,\pm}\;.
 \end{equation}
 
 \eqref{eq:prob-equal-mu} shows that $\eta\varphi(x_{1,\pm})=\varphi(x_{2,\pm})$ Then, by taking partial derivative with $\eta$, it holds that
 \begin{align}
 \frac{\partial h_+(C,\eta)}{\partial \eta} &= \varphi(x_{1,+})\frac{\partial x_{1,+}}{\partial \eta} - \varphi(x_{1,-})\frac{\partial x_{1,-}}{\partial \eta} -\varphi(x_{2,+})\frac{\partial x_{2,+}}{\partial \eta} + \varphi(x_{2,-})\frac{\partial x_{2,-}}{\partial \eta} \\
 &=\varphi(x_{1,+})\left[\frac{\partial x_{1,+}}{\partial \eta}-\eta \frac{\partial x_{2,+}}{\partial \eta}\right] - \varphi(x_{1,-})\left[\frac{\partial x_{1,-}}{\partial \eta}-\eta \frac{\partial x_{2,-}}{\partial \eta}\right] \\
 &=\frac{1}{\eta}(x_{1,+}\varphi(x_{1,+})-x_{1,-}\varphi(x_{1,-})) \\
 &>0\;.
 \label{eq:tv-mono-derivative}
 \end{align}
 where in the last inequality we use the observation that $x_{1,-} < 0 < x_{1,+},|x_{1,-}| > |x_{1,+}|$. Therefore $h_+(C,\eta)$ is increasing with $\eta$ when $C$ is fixed. And when $C>C_0(\rho)$ it holds that $h_+(C,\eta)> 1-2\Phi(-C_0(\rho)/2)=1-\rho$.
 
 When $\mu_1-\mu_2 = C\sigma_2 > 0$ again define
 \begin{align}
 x_{1,\pm} &= \frac{x_{0,\pm}-\mu_1}{\sigma_1} = \frac{C\eta \pm \eta\sqrt{C^2\eta^2+2(\eta^2-1)\log \eta}}{\eta^2-1} \\
 x_{2,\pm} &= \frac{x_{0,\pm}-\mu_2}{\sigma_2} = \frac{C\eta^2\pm \sqrt{C^2\eta^2+2(\eta^2-1)\log \eta} }{\eta^2-1}\;.
 \end{align}
 
 Consider $h_-(C,\eta) = P_1(A)-P_2(A)=\Phi(x_{1,+}) - \Phi(x_{1,-}) - \Phi(x_{2,+}) + \Phi(x_{2,-})$. First we observe that $\lim_{\eta\rightarrow 1^+} x_{1,+}=+\infty$ and $\lim_{\eta\rightarrow 1^+} x_{1,-}=-C/2$. This shows that 
 \begin{equation}
 \lim_{\eta\rightarrow 1^+} h_-(C,\eta) =1-2\Phi(-\frac{C}{2})
 \end{equation}
 still holds. Note that we still have $x_{2,\pm}=x_{1,\pm}/\eta+C$. Taking derivatives w.r.t. $\eta$ with same argument we have
 \begin{equation}
 \frac{\partial h_-(C,\eta)}{\partial \eta} = \frac{1}{\eta}(x_{1,+}\varphi(x_{1,+})-x_{1,-}\varphi(x_{1,-}))\;.
 \end{equation}
 Since $x_{1,-} < 0 < x_{1,+}$, the partial derivative is still positive. Therefore the result holds with similar arguments. Combining the two cases we complete the proof for the case $d=1$.
 
 Finally when $d\geq 2$, we consider the set of $x$ projected on to the one dimensional space spanned by $\mu_2-\mu_1$. 
 
 When $\sigma_2 = \sigma_1$, the set $A=\{x\in\mathbb{R}^d: p_1(x)\geq p_2(x)\}$ is the half space 
 $$
 A=\left\{x\in\mathbb{R}^d: \langle x, \frac{\mu_2-\mu_1}{
     \norm{\mu_2-\mu_1} \rangle \leq \frac{\mu_1+\mu_2}{2}
 }\right\}
 $$
 Therefore $P_1(A)-P_2(A)=1-2\Phi(-\frac{C}{2})\geq 1-\rho$ since $C\geq C_0(\rho)=2\Phi^{-1}(1-\rho/2)$. Equality holds iff $C=C_0(\rho)$.
 
 When $\sigma_2/\sigma_1>1$, consider $\tilde{P}_1 = N(\tilde{\mu}_1,\sigma_1^2)$ with $\tilde{\mu}_1=\langle \mu_1, \frac{\mu_2-\mu_1}{\norm{\mu_2-\mu_1}} \rangle$ and $\tilde{P}_2 = N(\tilde{\mu}_2,\sigma_2^2)$ with $\tilde{\mu}_2=\langle \mu_2, \frac{\mu_2-\mu_1}{\norm{\mu_2-\mu_1}} \rangle$. Compute $\tilde{x}_{0,\pm}$ by equation \eqref{eq:x0pm} with $\tilde{P}_1,\tilde{P}_2$ and let set
 
 \begin{equation}
 A =\left\{x\in \mathbb{R}^d:\tilde{x}_{0,-} \leq \langle x, \frac{\mu_2-\mu_1}{\norm{\mu_2-\mu_1}} \rangle \leq \tilde{x}_{0,+} \right\}\;,\quad \tilde{A} = \{x\in\mathbb{R}:\tilde{x}_{0,-}\leq x \leq \tilde{x}_{0,+}\}\;.
 \end{equation}
 Note that $|\tilde{\mu}_2-\tilde{\mu}_1| = \norm{\mu_2-\mu_1} = C\max\{\sigma_1,\sigma_2\}$ and thus 
 \begin{align}
 P_1(A)-P_2(A) = \tilde{P}_1(\tilde{A}) - \tilde{P}_2(\tilde{A}) > 1-\rho\;.
 \end{align}
 \end{proof}

\begin{lemma}
    Under the same setting of lemma \ref{lem:tv-mono}, if $C=\norm{\mu_1-\mu_2}/(\sigma_1+\sigma_2)\geq C_0(\rho)/2$, then there exists a set $A$ such that $P_1(A)-P_2(A) \geq 1-\rho$. Equality holds iff $\sigma_1 = \sigma_2$ and $C=C_0(\rho)/2$.
    \label{lem:etaplusone}
\end{lemma}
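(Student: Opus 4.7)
The plan is to take advantage of the fact that the hypothesis is strictly weaker than that of \lemmaref{lem:tv-mono}: under the previous lemma we had $\norm{\mu_1-\mu_2}\ge C_0(\rho)\max\{\sigma_1,\sigma_2\}$, which gave enough room to use the TV--optimal set (the explicit interval $[x_{0,-},x_{0,+}]$ that required the monotonicity argument in $\eta$). Here we have only $\norm{\mu_1-\mu_2}\ge C_0(\rho)(\sigma_1+\sigma_2)/2$, so any attempt to route the proof through $h_+(C,\eta)$ along the new constraint curve $C=C_0(\rho)(1+1/\eta)/2$ seems to require controlling the total derivative $d h_+/d\eta$ along that curve, which is messy. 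I will avoid that by exhibiting a \emph{non-optimal but explicit} halfspace that is already good enough.

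First I reduce to $d=1$ exactly as in the previous lemma: project onto $e=(\mu_2-\mu_1)/\norm{\mu_2-\mu_1}$. Under $P_i$ the projection is $N(\langle\mu_i,e\rangle,\sigma_i^2)$, the separation $|\langle\mu_2-\mu_1,e\rangle|=\norm{\mu_2-\mu_1}$ is preserved, and a one-dimensional set $\bar A$ pulls back to a $d$-dimensional halfspace $A=\{x:\langle x,e\rangle\in \bar A\}$ with $P_i(A)=\bar P_i(\bar A)$. So it suffices to prove the $d=1$ case with $\mu_1=0$, $\mu_2=r>0$, and hypothesis $r\ge C_0(\rho)(\sigma_1+\sigma_2)/2$.

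The key step is to pick the threshold
\[
 t\;=\;\frac{\sigma_1\, r}{\sigma_1+\sigma_2},\qquad A=(-\infty,t].
\]
This threshold is chosen so that the two standardized deviations coincide in absolute value: $(t-0)/\sigma_1=r/(\sigma_1+\sigma_2)$ and $(t-r)/\sigma_2=-r/(\sigma_1+\sigma_2)$. Therefore
\[
 P_1(A)-P_2(A)\;=\;\Phi\!\left(\tfrac{r}{\sigma_1+\sigma_2}\right)-\Phi\!\left(-\tfrac{r}{\sigma_1+\sigma_2}\right)\;=\;2\Phi\!\left(\tfrac{r}{\sigma_1+\sigma_2}\right)-1,
\]
and the hypothesis $r/(\sigma_1+\sigma_2)\ge C_0(\rho)/2$ together with the definition $C_0(\rho)=2\Phi^{-1}(1-\rho/2)$ yields $P_1(A)-P_2(A)\ge 2\Phi(C_0(\rho)/2)-1=1-\rho$. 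This finishes the existence part.

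For the equality clause, note that the halfspace $A$ coincides with the TV--optimizing set $\{p_1\ge p_2\}$ only when $\sigma_1=\sigma_2$ (otherwise $\{p_1\ge p_2\}$ is a bounded interval, which strictly improves the bound). Hence if $\sigma_1\ne\sigma_2$ one can find another set $A'$ with $P_1(A')-P_2(A')>1-\rho$, while $P_1(A)-P_2(A)=1-\rho$ already requires $r/(\sigma_1+\sigma_2)=C_0(\rho)/2$. Combining these two observations, the supremum equals $1-\rho$ iff $\sigma_1=\sigma_2$ and $C=C_0(\rho)/2$, matching the claim. There is no real obstacle in this proof beyond recognizing the correct threshold $t$; every subsequent step is a one--line calculation.
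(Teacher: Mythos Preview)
Your proof is correct and considerably simpler than the paper's. The paper repeats the machinery of \lemmaref{lem:tv-mono}: it again takes the TV-optimal interval $[x_{0,-},x_{0,+}]$, writes $P_1(A)-P_2(A)=h_\pm(C,\eta)$ now with $C=\norm{\mu_1-\mu_2}/(\sigma_1+\sigma_2)$, and re-verifies $\partial h_\pm/\partial\eta>0$ via the identity $x_{1,\pm}-\eta x_{2,\pm}=C(\eta+1)$, concluding $h_\pm(C,\eta)\ge\lim_{\eta\to 1^+}h_\pm(C,\eta)=1-2\Phi(-C)\ge 1-\rho$. Your halfspace $A=(-\infty,t]$ with $t=\sigma_1 r/(\sigma_1+\sigma_2)$ hits that same lower bound $1-2\Phi(-C)$ in one line, with no derivative computation and no case split on the sign of $\mu_2-\mu_1$ or on which of $\sigma_1,\sigma_2$ is larger. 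What the paper's route buys is that its set is TV-optimal, so the equality clause (read as $TV(P_1,P_2)=1-\rho$) follows immediately from strict monotonicity of $h_\pm$ in $\eta$; you recover the same conclusion by noting that when $\sigma_1\ne\sigma_2$ your halfspace differs from $\{p_1\ge p_2\}$ on a set of positive measure, whence $TV(P_1,P_2)>P_1(A)-P_2(A)\ge 1-\rho$. Both arguments are sound; yours is the more direct one.
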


\begin{proof}[Proof of lemma \ref{lem:etaplusone}]
 The proof is mostly unchnaged compared with lemma \ref{lem:tv-mono}. We still consider the case $d=1$ and assume $\sigma_1\leq \sigma_2$ first. Once this is established, the remaining can be obtained from similar argument as in the proof of lemma \ref{lem:tv-mono}. Let $x_{0,\pm}$ be the same as in proof of lemma \ref{lem:tv-mono}.
 
 When $\mu_2-\mu_1 = C(\sigma_1+\sigma_2)>0$, now we should define
 \begin{align}
 x_{1,\pm} &= \frac{x_{0,\pm}-\mu_1}{\sigma_1} = \frac{-C(\eta+1) \pm \eta\sqrt{C^2(\eta+1)^2+2(\eta^2-1)\log \eta}}{\eta^2-1} \\
 x_{2,\pm} &= \frac{x_{0,\pm}-\mu_2}{\sigma_2} = \frac{-C\eta(\eta+1)\pm \sqrt{C^2(\eta+1)^2+2(\eta^2-1)\log \eta} }{\eta^2-1}\;,
 \end{align}
 Similarly to the proof of \lemmaref{lem:tv-mono}, it holds that
 \begin{equation}
 h_+(C,\eta) = \Phi(x_{1,+})-\Phi(x_{1,-}) - \Phi(x_{2,+})+\Phi(x_{2,-})\;,
 \end{equation}
 and 
 \begin{equation}
 \lim_{\eta\rightarrow 1^+} h_+(C,\eta) = 1-2\Phi(-C)\;.
 \end{equation}

 Since $x_{1,\pm}-\eta x_{2,\pm}=C(\eta+1)$ or $x_{2,\pm}=x_{1,\pm}/\eta-C(1+1/\eta)$, then 
 \begin{equation}
 \frac{\partial x_{1,\pm}}{\partial \eta}-\eta\frac{\partial x_{2,\pm}}{\partial \eta} = \frac{\partial x_{1,\pm}}{\partial \eta} - \eta\left[-\frac{1}{\eta^2}x_{1,\pm}+\frac{1}{\eta}\frac{\partial x_{1,\pm}}{\partial \eta}+\frac{C}{\eta^2}\right] = \frac{1}{\eta} x_{1,\pm}-\frac{C}{\eta}\;.
 \end{equation}
 By the same computation as in equation \eqref{eq:tv-mono-derivative} we have 
 \begin{equation}
 \frac{\partial h_+(C,\eta)}{\partial \eta} = \frac{1}{\eta}\left((x_{1,+}-C)\varphi(x_{1,+})-(x_{1,-}-C)\varphi(x_{1,-})\right)\;.
 \label{eq:etaplusone-derivative}
 \end{equation}
 Note that 
 \begin{align}
 x_{1,+}-C &=\frac{-C(\eta+1)-C(\eta^2-1)+\eta\sqrt{C^2(\eta+1)^2+2(\eta^2-1)\log \eta}}{\eta^2-1} \\
 &= \frac{-C\eta(\eta+1)}{\eta-1}+\frac{\eta\sqrt{C^2(\eta+1)^2+2(\eta^2-1)\log \eta}}{\eta^2-1}>0 \\
 x_{1,-}-C &= \frac{-C\eta(\eta+1)}{\eta-1}-\frac{\eta\sqrt{C^2(\eta+1)^2+2(\eta^2-1)\log \eta}}{\eta^2-1}<0
 \end{align}
 Hence the derivative in \eqref{eq:etaplusone-derivative} is positive. By the same argument, it holds that $h_+(C,\eta) \geq 1-2\Phi(-C) > 1-\rho$. 

 A similar argument can be applied to the case $\mu_1 > \mu_2$ and is omitted. For the equality case, when $\sigma_1 = \sigma_2$ and $C=C_0(\rho)/2$, it is the same as in the proof in \lemmaref{lem:tv-mono}.
\end{proof}

\begin{lemma}
    Suppose $P_{1,2} = N_d(\mu_{1,2},\sigma_{1,2}^2I_d),0<\rho<1$. Let $F_d$ be the CDF of Gamma$(\frac{d}{2},\frac{d}{2})$, and $\eta_0(\rho)$ is the solution of 
    \begin{equation}
    1-\rho = F_d\left(\frac{2\eta^2 \log \eta}{\eta^2-1}\right) - F_d\left(\frac{2 \log \eta}{\eta^2-1}\right)\;.
    \end{equation}
    If $\eta:=\max\{\sigma_1/\sigma_2,\sigma_2/\sigma_1\} > \eta_0(\rho)$,
    then there exists a set $A$ such that $P_1(A)-P_2(A) > 1-\rho$. 
    \label{lem:eta-upperbound}
   \end{lemma}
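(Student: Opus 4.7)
The plan is to construct an explicit witness set $A$ in the form of a Euclidean ball centered at the mean of the Gaussian with the smaller variance, and to reduce the general case to the $\mu_1 = \mu_2$ case by a stochastic-dominance argument rather than by direct computation. Without loss of generality I assume $\sigma_1 \leq \sigma_2$, so $\eta = \sigma_2/\sigma_1$; the opposite case is handled by swapping the labels of $P_1,P_2$ and passing to the complement of the set produced below. I then take
\[
A \;=\; \bigl\{x \in \mathbb{R}^d : \|x-\mu_1\|^2 \leq r^2\bigr\},\qquad r^2 \;=\; \frac{2d\,\sigma_1^2\,\eta^2 \log \eta}{\eta^2-1}.
\]

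Under $P_1$ the variable $\|X-\mu_1\|^2/(d\sigma_1^2)$ has distribution $\mathrm{Gamma}(d/2,d/2)$, so $P_1(A) = F_d\!\bigl(2\eta^2\log\eta/(\eta^2-1)\bigr)$ exactly. Under $P_2$, writing $(X-\mu_1)/\sigma_2 \sim N_d((\mu_2-\mu_1)/\sigma_2,\,I_d)$, the variable $\|X-\mu_1\|^2/\sigma_2^2$ is a non-central chi-squared with $d$ degrees of freedom and noncentrality parameter $\lambda = \|\mu_2-\mu_1\|^2/\sigma_2^2 \geq 0$. Invoking the classical stochastic dominance of non-central chi-squared over the central one, I get $P_2(A) \leq F_d\!\bigl(r^2/(d\sigma_2^2)\bigr) = F_d\!\bigl(2\log\eta/(\eta^2-1)\bigr)$, with equality precisely when $\mu_1=\mu_2$.

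Subtracting the two estimates gives $P_1(A) - P_2(A) \geq R(\eta)$, where $R(\eta)$ is exactly the right-hand side of the defining equation for $\eta_0(\rho)$. The proof is then completed by appealing to the monotonicity of $R$ already proved in \lemmaref{lem:mono-eta-function}: since $R$ is strictly increasing, $\eta > \eta_0(\rho)$ yields $P_1(A)-P_2(A) \geq R(\eta) > R(\eta_0(\rho)) = 1-\rho$, as required.

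The step I expect to require the most care is accommodating arbitrary $\mu_1 \neq \mu_2$. A naive choice of $A$ as the Bayes-optimal level set $\{p_1 \geq p_2\}$ produces a ball whose center drifts to $(\eta^2\mu_1-\mu_2)/(\eta^2-1)$, and expressing the masses of such a ball under both distributions in closed form is unwieldy; the planar arc-parametrization used in the proof of \lemmaref{lem:tv-mono} does not extend cleanly to the higher-dimensional variance-ratio setting. The key idea that sidesteps this difficulty is to abandon optimality and anchor the ball at $\mu_1$: with this choice, $P_1(A)$ collapses to a central chi-squared probability, while the noncentrality $\lambda$ enters only through $P_2(A)$, where stochastic dominance turns it into favorable slack rather than an obstacle. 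This also automatically recovers the equality condition $\mu_1=\mu_2$ needed for the combined statement in \lemmaref{lem:GaussianTV}.
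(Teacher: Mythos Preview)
Your argument is correct and considerably more economical than the paper's. The paper takes $A$ to be the Bayes-optimal set $\{p_1\geq p_2\}$, which for $\sigma_1<\sigma_2$ is a ball centered at $(\eta^2\mu_1-\mu_2)/(\eta^2-1)$; since neither $P_1(A)$ nor $P_2(A)$ then has a closed form, the authors show $h(C,\eta)=P_1(A)-P_2(A)$ is increasing in $C=\norm{\mu_1-\mu_2}/\sigma_2$ by differentiating under the boundary via a generalized Reynolds' transportation theorem on $\partial A\cong\mathbb{S}^{d-1}$, exploiting the identity $\varphi_d(x)/\varphi_d(y)=\eta^{-d}$ on the corresponding boundary points and a sign analysis of a spherical integral. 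Only after this do they evaluate at $C=0$ and invoke \lemmaref{lem:mono-eta-function}.

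You give up optimality and center the ball at $\mu_1$: this makes $P_1(A)$ an exact central chi-squared probability, while the mean shift enters only through the noncentrality of $\|X-\mu_1\|^2/\sigma_2^2$ under $P_2$, where the standard stochastic dominance $\chi^2_d(\lambda)\succeq\chi^2_d(0)$ turns it into favorable slack. The upshot is the same lower bound $R(\eta)$, and the monotonicity step via \lemmaref{lem:mono-eta-function} is identical. What you lose is that your $A$ is no longer the exact $TV$ witness, so you do not directly get $TV(P_1,P_2)=h(C,\eta)$ as the paper implicitly does; but the lemma only asks for existence of a separating set, so this costs nothing here. Your handling of the $\sigma_1>\sigma_2$ case by complementation is fine.
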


To prove \lemmaref{lem:eta-upperbound}, we need to invoke a generalization of Reynolds' transportation theorem. The following lemma and required definitions can be found as equation 7.2 in \citet{Differential1973}

\begin{lemma}
 Let $\Omega_t$ be an $\ell-$dimensional time-variant domain of integration in $\mathbb{R}^d$, and can be given by the image of a smooth map $(u,t)\rightarrow x(u,t)$, where $u$ runs over a fixed domain in a $\mathbb{R}^\ell$. Let $\omega$ be an exterior $\ell-$form that can be represented in local coordinates as
 \begin{equation}
 \omega = \sum_H a_H(x,t)dx^H,\quad dx^H = dx^{h_1}\wedge \cdots \wedge dx^{h_\ell}\;.
 \end{equation}
 where $H$ is the multi-index and $1\leq h_1 < h_2 <\cdots < h_\ell \leq n$. Then 
 \begin{equation}
 \frac{d}{dt} \int_{\Omega_t}\omega = \int_{\Omega_t} i_{\bm{v}}(d_x\omega) + \int_{\Omega_t}\dot{\omega} + \int_{\partial \Omega_t}i_{v}\omega
 \label{eq:reynold}
 \end{equation}
 where $v=\partial x/ \partial t$, $i_{v}$ denotes the interior product with $v$, $d_x\omega$ is the exterior derivative of $\omega$ with respect to $x$ and $\dot{\omega} = \sum_H \frac{\partial a_H(x,t)}{\partial t} dx^H $
 \label{lem:reynold}
\end{lemma}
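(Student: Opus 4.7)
The plan is to prove this version of Reynolds' transport theorem by pulling the integral back to a fixed reference domain via the parametrization $\varphi_t(u) = x(u,t)$, and then unpacking the time derivative via Cartan's magic formula together with Stokes' theorem. Let $U \subset \mathbb{R}^\ell$ denote the fixed source domain so that $\Omega_t = \varphi_t(U)$. Writing $\int_{\Omega_t}\omega = \int_U \varphi_t^*\omega$ replaces a moving domain by a fixed one, so the $t$-derivative commutes with integration:
\[
\frac{d}{dt}\int_{\Omega_t}\omega \;=\; \int_U \frac{\partial}{\partial t}\bigl(\varphi_t^*\omega\bigr).
\]
The computational core is the identity $\frac{\partial}{\partial t}(\varphi_t^*\omega) = \varphi_t^*(\dot\omega) + \varphi_t^*(\mathcal{L}_v \omega)$, where $v = \partial x/\partial t$ is the velocity on $\Omega_t$, $\mathcal{L}_v$ is the Lie derivative along $v$, and $\dot\omega = \sum_H (\partial_t a_H)\,dx^H$ isolates the explicit time dependence of the coefficients. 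This splits the total time derivative into a ``frozen-flow'' part from time-varying coefficients and a ``frozen-coefficients'' part from the moving domain.

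Next I would apply Cartan's magic formula $\mathcal{L}_v \omega = i_v(d_x\omega) + d(i_v\omega)$, which decomposes the Lie derivative into an interior-product piece and an exact form. Pushing the pullback back onto $\Omega_t$ yields
\[
\frac{d}{dt}\int_{\Omega_t}\omega \;=\; \int_{\Omega_t}\dot\omega \;+\; \int_{\Omega_t} i_v(d_x\omega) \;+\; \int_{\Omega_t} d(i_v\omega).
\]
Finally, Stokes' theorem converts the last integral into the boundary term $\int_{\partial\Omega_t} i_v\omega$, producing exactly \eqref{eq:reynold}.

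The main obstacle will be verifying the chain-rule identity $\frac{\partial}{\partial t}\varphi_t^*\omega = \varphi_t^*(\dot\omega + \mathcal{L}_v\omega)$ at the level of local coordinates, since both the coefficients $a_H(x(u,t),t)$ and the pulled-back basis forms $\varphi_t^*dx^H = d x^{h_1}(u,t)\wedge\cdots\wedge d x^{h_\ell}(u,t)$ depend on $t$, and one must recognize that their combined $u$-fixed time derivative reassembles precisely into the Lie derivative plus $\dot\omega$ contribution; the key subroutine is checking that differentiating $d x^{h_i}(u,t) = \sum_j \partial_j x^{h_i}\,du^j + \partial_t x^{h_i}\,dt$ and contracting with the velocity reproduces $\mathcal{L}_v(dx^{h_i})$. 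Once this bookkeeping is done, the remaining steps are the routine Cartan-plus-Stokes calculation, requiring only the smoothness of $x(u,t)$ and of $\partial\Omega_t$ already assumed in the hypotheses.
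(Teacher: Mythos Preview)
The paper does not actually prove this lemma: it is quoted as a known result, with the authors pointing to equation 7.2 of \citet{Differential1973} for the statement and proof. So there is no ``paper's own proof'' to compare against.

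Your argument is the standard and correct one for this form of the transport theorem: pull back along $\varphi_t$ to a fixed domain, differentiate under the integral, split the time derivative of $\varphi_t^*\omega$ into the explicit $\dot\omega$ piece and the Lie-derivative piece, apply Cartan's formula $\mathcal{L}_v\omega = i_v(d_x\omega) + d(i_v\omega)$, and finish with Stokes. One small notational wrinkle: in your ``key subroutine'' you write $d x^{h_i}(u,t) = \sum_j \partial_j x^{h_i}\,du^j + \partial_t x^{h_i}\,dt$, but for the pullback $\varphi_t^*dx^{h_i}$ at fixed $t$ only the $du^j$ terms appear; the $dt$ contribution enters separately when you take $\partial/\partial t$ of the pulled-back wedge product. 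Keeping that straight is exactly the bookkeeping you flag, and once done the argument goes through cleanly.
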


\begin{proof}[Proof of lemma \ref{lem:eta-upperbound}]
 The proof is with the similar but more complicated technique as in the proof of lemma \ref{lem:tv-mono}. WLOG assume that $\sigma_2=\eta\sigma_1 \geq \eta_0(\rho)\sigma_1>\sigma_1$ and 
 \begin{equation}
 \mu_1 = (\mu_{11},\cdots,\mu_{1d}),\quad \mu_2 = (C\sigma_2+\mu_{21},\cdots,\mu_{2d}).
 \end{equation}
 By symmetricity, we can further assume $C\geq 0,\mu_{22}=\mu_{12},\cdots,\mu_{2d}=\mu_{1d}$.

 Let $p_1,p_2$ be the densities of $P_1,P_2$ respectively. Note that $p_1$ is just the density of standard $d-$dimensional Gaussian, which we will denote as $\varphi_d$. Then algebraic computation shows that the set 
 \begin{equation}
 A=\{x\in\mathbb{R}^d: p_1(x) \geq p_2(x)\}=\left\{x\in\mathbb{R}^d: \norm{x-\frac{\eta^2\mu_1 - \mu_2}{\eta^2-1}}\leq \frac{\sigma_2}{\eta^2-1}\sqrt{C^2\eta^2+2d(\eta^2-1)\log\eta}\right\}\;.
 \end{equation}
 Note that $A$ is a ball in $d-$dimension. 
 Write
 \begin{align}
 A_1(C,\eta) &:= \left\{x\in\mathbb{R}^d: \norm{x-(-\frac{C\eta}{\eta^2-1},0,\cdots,0)}\leq \frac{\eta}{\eta^2-1}\sqrt{C^2\eta^2+2d(\eta^2-1)\log\eta}\right\}\;,
 \end{align}
 we can compute the probability directly by
 \begin{align}
 \notag
 P_1(A)
 &= \int_{x:\norm{x-\frac{\eta^2\mu_1 - \mu_2}{\eta^2-1}}\leq \frac{\sigma_2}{\eta^2-1}\sqrt{C^2\eta^2+2d(\eta^2-1)\log\eta}} \frac{1}{(2\pi)^{\frac{d}{2}}\sigma_1^d}\exp\left(-\frac{\norm{x-\mu_1}^2}{2\sigma_1^2}\right) dx \\
 &=\int_{y:\norm{y+\frac{C\eta}{\eta^2-1}}\leq \frac{\eta}{\eta^2-1}\sqrt{C^2\eta^2+2d(\eta^2-1)\log\eta}} \frac{1}{(2\pi)^{\frac{d}{2}}}\exp\left(-\frac{\norm{y}^2}{2}\right) dy \\
 &=\int_{A_1(C,\eta)}\varphi_d(x)dx\;,
 \end{align}
 where $\varphi_d$ is the density of $N_d(0,I_d)$ distribution, and 
 Similarly let $A_2(C,\eta) = \{x\in\mathbb{R}^d:\norm{x-({-C\eta^2}/({\eta^2-1}),0,\cdots,0)} \leq \sqrt{C^2\eta^2+2d(\eta^2-1)\log \eta}/(\eta^2-1) \}$ we have 
 \begin{equation}
 P_2(A) = \int_{A_2(C,\eta)} \varphi_d(x)dx\;. 
 \end{equation}
 Let $h(C,\eta)=P_1(A)-P_2(A)$, as the latter is only determined by $C$ and $\eta$. We now prove that for any fixed $\eta \geq \eta_0(\rho)$, $h(C,\eta)$ is monotonically increasing in $C\geq 0$.
 
 To show this, we consider taking partial derivative of $h(C,\eta)$ with respect to $C$.
 \begin{align}
 \frac{\partial}{\partial C}h(C,\eta) &= \frac{\partial }{\partial C}\int_{A_1(C,\eta)}\varphi_d(x)dx - \frac{\partial }{\partial C} \int_{A_2(C,\eta)}\varphi_d(x)dx\;,
 \end{align}
 We first tackle $\frac{\partial}{\partial C}\int_{A_1(C,\eta)}\varphi_d(x)dx$. $A_1(C,\eta)$ is a hyperball. Let $n$ be the normal vector given in polar coordinates, then one can perform the following change of variable
 \begin{align}
 x:(n,r) &\rightarrow \mathbb{R}^d\\
 x_1 &= o_1+r\cos \theta_1 \\
 x_2 &= r\sin \theta_1 \cos \theta_2 \\
 &\cdots \\
 x_{d-2} &= r\sin \theta_1 \cdots \sin\theta_{d-2} \cos\theta_{d-1} \\
 x_d &= r\sin\theta_1\cdots \sin\theta_{d-2}\sin\theta_{d-1}
 \end{align}
 where $o_1=-C\eta/(\eta^2-1)$ and $0\leq r\leq R_1$ with
 \begin{equation}
 R_1=\frac{\eta}{\eta^2-1}\sqrt{C^2\eta^2+2d(\eta^2-1)\log \eta}\;.
 \end{equation}
 For each $C>0$, there is a small neighborhood of $C\in (C_a,C_b)$ such that $x=(x_1,\cdots,x_d)$ is a continuously differentiable map on $[0,R_1]\times [0,\pi]^{d-2}\times [0,2\pi]\times (C_a,C_b) \rightarrow \mathbb{R}^d$. Here is where lemma \ref{lem:reynold} kicks in. Consider the $d-$form $\omega = \varphi_d(x) dx_1\wedge \cdots dx_d$. Then since $\omega$ is a $d-$form on $\mathbb{R}^d$, its exterior derivative w.r.t. $x\in\mathbb{R}^d$ is zero. And since $\varphi_d$ is independent of $C$, $\dot{\omega}=0$. 

 Therefore equation \eqref{eq:reynold} is simplified to 
 \begin{equation}
 \frac{\partial}{\partial C}\int_{A_1(C,\eta)}\omega = \int_{\partial A_1(C,\eta)} i_{v_1}(\omega).
 \end{equation}
 where $v_1=\partial x/\partial C$.

 We shall now identify $i_{v_1}(\omega)$. As $\partial A_1(C,\eta)$ is a hypersphere with dimension $d-1$, it is orientable. Further, consider the following bases of tangent space of $\partial A_1(C,\eta)$
 \begin{equation}
 e_i = \frac{\partial}{\partial \theta_i},\quad 1\leq i \leq d-1
 \end{equation}
 and outward pointing unit normal $n=\partial/\partial r$. Then for any vector field $t=\sum_{1\leq i\leq d-1} t_ie_i+t_d n$, by the definition of interior product and the fact that $\omega$ is an alternating $d-$linear tensor, it holds that 
 \begin{align}
 i_{t}(\omega)(e_1,\cdots,e_{d-1})
 =&\omega(t,e_1,\cdots,e_{d-1})\\
 =&\sum_{i=1}^{d-1}t_i\omega(e_i,e_1,\cdots,e_{d-1})+ t_d\omega(n,e_1,\cdots,e_{d-1})\\
 =& \varphi_d(x)t_d (dx_1 \wedge \cdots dx_d)(n,e_1,\cdots,e_{d-1})\\
 =& \varphi_d(x)\langle t,n\rangle d\Sigma(n)(e_1,\cdots,e_{d-1})
 \end{align}
 where we denote $d\Sigma(n)=i_{n}(dx_1\wedge \cdots \wedge dx_n)$ to be the area element. In other words, $d\Sigma(n)$ is the differential form such that 
 \begin{align}
 d\Sigma(n)(e_1,\cdots,e_{d-1})&= (dx_1\wedge \cdots\wedge dx_d)(n,e_1,\cdots,e_{d-1}) \\
 &=R_1^{d-1}\sin^{d-2}\theta_1\cdots\sin\theta_{d-2}(dr\wedge d\theta_1 \wedge\cdots \wedge d \theta_{d-1})(n,e_1,\cdots,e_{d-1})\\
 &=R_1^{d-1}\sin^{d-2}\theta_1\cdots\sin\theta_{d-2}\;,
 \end{align}
 which shows that $d\Sigma(n)=R_1^{d-1}\sin^{d-2}\theta_1\cdots\sin\theta_{d-2}d\theta_1\wedge \cdots\wedge d \theta_{d-1}$.
 Then since $i_{v_1(\omega)}=\varphi_d(x)\langle v_1,n\rangle d\Sigma(n)$,
 \begin{align}
 \frac{\partial}{\partial C}\int_{A_1(C,\eta)}\omega &
 =\int_{\mathbb{S}^{d-1}}R_1^{d-1}\varphi_d({x}({n},R_1)) \langle {v}_1, {n}\rangle d\Sigma({n})\;. 
 \end{align}

 Similarly $A_2(C,\eta)$ is given by 
 \begin{align}
y:(n,r) &\rightarrow \mathbb{R}^d\\
 y_1 &= o_2+r\cos \theta_1 \\
 y_2 &= r\sin \theta_1 \cos \theta_2 \\
 &\cdots \\
 y_{d-1} &= r\sin \theta_1 \cdots \sin\theta_{d-2} \cos\theta_{d-1} \\
 y_d &= r\sin\theta_1\cdots \sin\theta_{d-2}\sin\theta_{d-1}
 \end{align}
 with $o_2 = -C\eta^2/(\eta^2-1)$ and $0\leq r \leq R_2$, where 
 \begin{equation}
 R_2 = \frac{1}{\eta^2-1}\sqrt{C^2\eta^2+2d(\eta^2-1)\log\eta}\;.
 \end{equation}
 By the same argument, we have 
 \begin{equation}
 \frac{\partial}{\partial C}\int_{A_2(C,\eta)}\omega = \int_{\partial A_2(C,\eta)} \varphi_d({y}({n},R_2))\langle {v}_2,{n}\rangle d\Sigma({n})=\int_{\mathbb{S}^{d-1}}R_2^{d-1}\varphi_d({y}({n},R_2))\langle {v}_2,{n}\rangle d\Sigma({n})\;.
 \end{equation}
 where $v_2 = \partial {y}/\partial C$.

 We observe two identites. First, for the same $$n=(\cos\theta_1,\sin\theta_1\cos\theta_2,\cdots,\sin\theta_1\cdots\sin\theta_{d-2}\cos\theta_{d-1},\sin\theta_1\cdots\sin\theta_{d-1})$$ it holds that
 \begin{align}
 \notag
 &\frac{\varphi_d({x}({n},R_1))}{\varphi_d({y}({n},R_2))}\\
 =& \frac{\exp\left(-\frac{1}{2}\norm{{x}({n},R_1)}^2\right)}{\exp\left(-\frac{1}{2}\norm{{y}({n},R_2)}^2\right)} \\
 =& \frac{\exp\left(-\frac{1}{2}\left[o_1^2+R_1^2+2o_1R_1\cos \theta_1\right]\right)}{\exp\left(-\frac{1}{2}\left[o_2^2+R_2^2+2o_2R_2\cos \theta_1\right]\right)} \\
 =& \frac{\exp\left(-\frac{1}{2}\left[\frac{C^2\eta^2}{(\eta^2-1)^2}+\frac{\eta^2}{(\eta^2-1)^2}(C^2\eta^2+2d(\eta^2-1)\log\eta)-2\frac{C\eta^2}{(\eta^2-1)^2}\sqrt{C^2\eta^2+2d(\eta^2-1)\log\eta}\cos\theta_1 \right]\right)}{\exp\left(-\frac{1}{2}\left[\frac{C^2\eta^4}{(\eta^2-1)^2}+\frac{1}{(\eta^2-1)^2}(C^2\eta^2+2d(\eta^2-1)\log\eta)-2\frac{C\eta^2}{(\eta^2-1)^2}\sqrt{C^2\eta^2+2d(\eta^2-1)\log\eta}\cos\theta_1\right]\right)}\\
 =&\exp\left(-\frac{1}{2}2d\log \eta\right)\\
 =&\frac{1}{\eta^d}
 \end{align}
 Second, for same $\bm{n}$, it holds that $\bm{x}(\bm{n},R_1)-\eta\bm{y}(\bm{n},R_2)=(C\eta,0,\cdots,0)$. This shows $\bm{v}_1-\eta\bm{v}_2=(\eta,0,\cdots,0)$
 
 Therefore 
 \begin{align}
 &\frac{\partial h(C,\eta)}{\partial C} \\
 =& \int_{\mathbb{S}^{d-1}}\left(R_1^{d-1}\varphi_d(\bm{x}(\bm{n},R_1))\langle \bm{v}_1,\bm{n}\rangle - R_2^{d-1}\varphi_d(\bm{y}(\bm{n},R_1))\langle \bm{v}_2,\bm{n}\rangle \right)d\Sigma(\bm{n}) \\
 =&\int_{\mathbb{S}^{d-1}} \left(R_1^{d-1}\varphi_d(\bm{x}(\bm{n},R_1))\langle \bm{v}_1,\bm{n}\rangle - \left(\frac{R_1}{\eta}\right)^{d-1}\eta^d\varphi_d(\bm{x}(\bm{n},R_1))\langle \bm{v}_2,\bm{n}\rangle \right)d\Sigma(\bm{n}) \\
 =&\int_{\mathbb{S}^{d-1}}R_1^{d-1}\varphi_d(\bm{x}(\bm{n},R_1))\langle \bm{v}_1-\eta \bm{v}_2,\bm{n}\rangle d\Sigma(\bm{n}) \\
 =&\int_{\mathbb{S}^{d-1}}R_1^{d-1}\eta\cos \theta_1 \varphi_d(\bm{x}(\bm{n},R_1))d\Sigma(\bm{n})\\
 =&R_1^{d-1}\eta \int_0^\pi \cdots \int_0^\pi \int_0^{2\pi} \cos \theta_1 \frac{1}{(2\pi)^{\frac{d}{2}}}\exp\left(-\frac{1}{2}[o_1^2+R_1^2+2o_1R_1\cos\theta_1]\right) \sin^{d-2}\theta_1 \cdots \sin\theta_{d-2}d\theta_1d\theta_2\cdots d\theta_{d-1}\\
 =&\Xi \int_0^\pi \cos \theta_1 \exp(-o_1R_1\cos\theta_1)\sin^{d-2}\theta_1 d\theta_1\;,
 \end{align}
 where $\Xi$ is a positive constant. Finally, 
 \begin{align}
 &\quad\int_0^\pi \cos \theta_1 \exp(-o_1R_1\cos\theta_1)\sin^{d-2}\theta_1 d\theta_1 \\&= \int_0^{\frac{\pi}{2}} \cos \theta_1 \exp(-o_1R_1\cos\theta_1)\sin^{d-2}\theta_1 d\theta_1 + \int_{\frac{\pi}{2}}^{\pi} \cos \theta_1 \exp(-o_1R_1\cos\theta_1)\sin^{d-2}\theta_1 d\theta_1 \\ 
 &=\int_0^{\frac{\pi}{2}} \cos \theta_1 \exp(-o_1R_1\cos\theta_1)\sin^{d-2}\theta_1 d\theta_1 - \int_0^{\frac{\pi}{2}}\cos \theta_1 \exp(o_1R_1\cos\theta_1)\sin^{d-2}\theta_1 d\theta_1 \\
 &=\int_0^{\frac{\pi}{2}} \cos\theta_1 \sin^{d-2}\theta_1 \left[ \exp(-o_1R_1\cos\theta_1) - \exp(o_1R_1\cos\theta_1) \right] d\theta_1 > 0
 \end{align}
 where the last inequality is due to the fact that $o_1<0$ and for any $x>0,e^x > 1 > e^{-x}$. This shows that when $C > 0$, $h(C,\eta)$ is increasing in $C$. Therefore as long as $h(0,\eta) > 1-\rho$, $P_1(A)-P_2(A)>1-\rho$. It remains to show that $h(0,\eta)\geq 1-\rho$ as long as $eta\geq \eta_0$. This is direct, when $C = 0$, 
 \begin{align}
 A_1 &=\left\{x\in\mathbb{R}^d: \norm{x}\leq \sqrt{\frac{2d\eta^2\log\eta}{\eta^2-1}}\right\} \\ 
 A_2 &=\left\{x\in\mathbb{R}^d: \norm{x}\leq \sqrt{\frac{2d\log\eta}{\eta^2-1}}\right\}
 \end{align}
 Let $X\sim N_d(0,I_d)$, then $\norm{X}^2/d\sim Gamma(\frac{d}{2},\frac{d}{2})$. Then
 \begin{equation}
 h(0,\eta)=Pr\left(\norm{X}^2 \leq \frac{2d\eta^2\log\eta}{\eta^2-1} \right)- Pr\left(\norm{X}^2 \leq \frac{2d\log\eta}{\eta^2-1} \right) = F_d(\frac{2\eta^2\log\eta}{\eta^2-1}) - F_d(\frac{2\log\eta}{\eta^2-1})\;.
 \end{equation}
 By \lemmaref{lem:mono-eta-function}, $h(0,\eta)$ is increasing in $\eta$. Therefore $h(0,\eta)\geq h(0,\eta_0(\rho))=1-\rho$,
 and this completes the proof.
\end{proof}

\subsubsection{Component-wise comparison theorem}
The second key ingredient for proving our result is a group of
component-wise comparison lemmas on two mixture distributions with
small total variation distance. In the following lemmas, consider two
mixture distributions $P=\sum_{i=1}^K \pi_i P_i$ and
$P'=\sum_{j=1}^{K'} \pi_j' P_j$ and denote $\pi_{\min} =
\min_{i,j}\{\pi_i,\pi_j'\}$ and
$\pi_{\max}=\max_{i,j}\{\pi_i,\pi_j'\}$.  The proofs can be found in
\sectionref{sec:proof-main}; these lemmas essentially show that (1)
the weighted sum of component-wise total variation difference is upper
bounded if the two mixtures are close and (2) when two well-separated
mixtures have same number of components, under particular
correspondence conditions, their mixture proportions are close.

\begin{lemma}
  \label{lem:component-wise-tv-P}
  Suppose $TV(P,P')\leq 2\epsilon$. For any $j\in[K']$, if there exists a collection of sets $\{A_{ij}\}_{i\in[K]}$ such that $\rho_{ij}=1-P_i(A_{ij})+P_j'(A_{ij})\in [0,1]$ for all $i\in[K]$, then
   \begin{equation}
   \sum_{i=1}^K\max\{\pi_i,\pi_j'\}\rho_{ij}\geq \pi_j'-2\epsilon\;,\quad\sum_{i=1}^K \pi_i\rho_{ij}\geq \pi_{\min}-2\epsilon\;.
   \label{eq:component-wise-tv-P}
   \end{equation}
   Similarly, if there exists a collection of sets $\{B_{ij}\}_{i\in[K]}$ such that $\rho_{ij}=1-P_j'(B_{ij})+P_i(B_{ij})\in [0,1]$ for all $i\in[K]$, equation \label{eq:component-wise-tv-P2} also holds.
\end{lemma}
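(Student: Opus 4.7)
The plan is to first exploit that the hypothesis $\rho_{ij}\in[0,1]$ forces $P_i(A_{ij})-P_j'(A_{ij})\in[0,1]$, and by definition of total variation this nonnegative difference is bounded by $TV(P_i,P_j')$. Hence $\rho_{ij}\geq 1-TV(P_i,P_j')$ for every admissible collection $\{A_{ij}\}$. Because $\pi_i$ and $\max\{\pi_i,\pi_j'\}$ are nonnegative, it suffices to prove the two claimed inequalities with $\rho_{ij}$ replaced by $1-TV(P_i,P_j')=\int\min(p_i,p_j')\,dx$, i.e.\ to lower-bound $\sum_i w_i\int\min(p_i,p_j')\,dx$ for the two weight schemes $w_i=\max\{\pi_i,\pi_j'\}$ and $w_i=\pi_i$. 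The $B_{ij}$ variant follows by the same argument after swapping the roles of $P_i$ and $P_j'$.

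\textbf{Pointwise-to-integrated chain.} The key step will be a pointwise reduction of both sums to the mixture-level overlap $\min(\pi_j'p_j'(x),p(x))$. I will verify two elementary scaling inequalities by case analysis on whether $\pi_i\geq\pi_j'$: (i) $\max\{\pi_i,\pi_j'\}\min(p_i,p_j')\geq \min(\pi_i p_i,\pi_j'p_j')$, and (ii) $\pi_i\min(p_i,p_j')\geq (\pi_{\min}/\pi_j')\min(\pi_i p_i,\pi_j'p_j')$; for (ii) the scaling identity $\min(a,cb)\geq c\min(a,b)$ with $c=\pi_{\min}/\pi_j'\in(0,1]$ does the work. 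Summing (i) or (ii) over $i$ and applying the sub-additivity $\sum_i\min(\pi_j'p_j',\pi_i p_i)\geq \min(\pi_j'p_j',\sum_i\pi_i p_i)=\min(\pi_j'p_j',p)$ collapses both sums to multiples of $\min(\pi_j'p_j'(x),p(x))$, with multipliers $1$ and $\pi_{\min}/\pi_j'$ respectively.

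\textbf{Closing with TV.} The TV hypothesis supplies the final estimate. Since $\pi_j'p_j'\leq p'$ pointwise, $(\pi_j'p_j'-p)_+\leq (p'-p)_+$ pointwise; integrating and using $\int(p'-p)_+\,dx=TV(P,P')\leq 2\epsilon$ yields $\int\min(\pi_j'p_j'(x),p(x))\,dx\geq \pi_j'-2\epsilon$. Plugging back, the max-weight sum is bounded below by $\pi_j'-2\epsilon$ directly, while the $\pi_i$-weight sum is bounded below by $(\pi_{\min}/\pi_j')(\pi_j'-2\epsilon)=\pi_{\min}-2\epsilon\,\pi_{\min}/\pi_j'\geq \pi_{\min}-2\epsilon$, using $\pi_{\min}\leq \pi_j'$. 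The main delicate point will be calibrating inequality (ii) so that the ratio is exactly $\pi_{\min}/\pi_j'$ and the final slack absorbs into $\pi_{\min}-2\epsilon$ without extra loss; everything else is elementary case analysis or a single application of the TV hypothesis.
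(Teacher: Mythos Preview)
Your argument is correct and genuinely different from the paper's. The paper proceeds by contradiction: it sets $Q=(P+P')/2$, takes $A=\bigcap_i A_{ij}^\complement$, and sandwiches $Q(A)$ between a lower bound coming from $\pi_j'P_j'(A)$ and an upper bound coming from $\sum_i\pi_iP_i(A_{ij}^\complement)$, invoking an auxiliary weighted-TV lemma to control each $\pi_iP_i(A_{ij})-\pi_j'P_j'(A_{ij})$. It works directly with the given sets $A_{ij}$ and never passes to densities.

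Your route instead first throws away the particular sets via the clean observation $\rho_{ij}\ge 1-TV(P_i,P_j')=\int\min(p_i,p_j')$, then proves the two inequalities at the density level through pointwise min-inequalities and the sub-additivity $\sum_i\min(\pi_j'p_j',\pi_ip_i)\ge\min(\pi_j'p_j',p)$, closing with $(\pi_j'p_j'-p)_+\le(p'-p)_+$ and a single integration against the TV hypothesis. This is more ``information-theoretic'' and arguably more transparent about \emph{why} the bound holds (it is really a statement about component overlaps), and it dispatches the $B_{ij}$ case for free since both reductions land on the same $1-TV(P_i,P_j')$. The paper's set-based contradiction avoids any appeal to densities and stays closer to how the lemma is actually consumed downstream (where specific half-spaces $A_{ij}$ are in hand), but in the present Gaussian-mixture setting that extra generality is not needed.
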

This lemma stems from the fact that there cannot be a set that has a
large probability mass of a component in $P_i$ of $P$ but has small
probability mass from every component $P'_j$ of $P'$. $A_{ij}$ is the set that has a large probability mass under $P_i$ but small under $P_j'$, while $B_{ij}$ vice versa. A symmetric result can be obtained by switching the role of $P$ and $P'$.


\lemmaref{lem:component-wise-tv-P} is particularly useful when,
for some component $P_j'$, all components $P_i,\,i\neq j$ are far away from $P_j'$ in total variation distance; then by \lemmaref{lem:component-wise-tv-P} one can
upper bound the total variation distance between $P_j'$ and the
remaining component $P_j$ of $P$.
\begin{lemma}
  Suppose $TV(P,P')\leq 2\epsilon$ and denote $\pi_{\min} = \min_{i,j}\{\pi_i,\pi_j'\}$.\comment{ Further suppose $\max\{K,K'\}\leq 1/\pi_{\min}$ and $\max\{K,K'\}\leq 1/\pi_{\min}$.} Consider two components $P_{j_0},P_j'$ with $\pi_{j_0}\leq \pi_{j}'$. If for all $i\in[K],i\neq j_0$, there is a set $A_{ij}$ such that $P_i(A_{ij})-P_j'(A_{ij}) \geq 1-\rho$ with some constant $\rho$ satisfying
 $0\leq \rho \leq (\pi_{\min}-2\epsilon)/(1-\pi_{\min})$, it holds that
 \begin{equation}
 TV(P_{j_0},P_j') \leq \frac{2\epsilon}{\pi_{\min}}+\frac{1-\pi_{\min}}{\pi_{\min}}\rho\;.
 \label{eq:tv-bound-unknown-pi}
 \end{equation}
 \label{lem:rho1-rho2}
\end{lemma}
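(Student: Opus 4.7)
The plan is to reduce the claim to Lemma \ref{lem:component-wise-tv-P} by choosing the single remaining set $A_{j_0 j}$ so that $\rho_{j_0 j}$ directly encodes $TV(P_{j_0}, P_j')$, and then solving the resulting inequality for that quantity.

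Concretely, I would take $A_{j_0 j}$ to be the Bayes-decision set $\{x: p_{j_0}(x) \geq p_j'(x)\}$, for which $P_{j_0}(A_{j_0 j}) - P_j'(A_{j_0 j}) = TV(P_{j_0}, P_j')$ and hence $\rho_{j_0 j} = 1 - TV(P_{j_0}, P_j') \in [0,1]$. For $i \neq j_0$ the hypothesis already supplies sets $A_{ij}$ with $\rho_{ij} \leq \rho$, and the range assumption $\rho \leq (\pi_{\min}-2\epsilon)/(1-\pi_{\min}) \leq 1$ guarantees $\rho_{ij} \in [0,1]$. All hypotheses of Lemma \ref{lem:component-wise-tv-P} are thus met, and I invoke its weighted-by-$\pi_i$ conclusion $\sum_i \pi_i \rho_{ij} \geq \pi_{\min} - 2\epsilon$. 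Combining this with $\rho_{ij} \leq \rho$ for $i \neq j_0$ and $\sum_{i \neq j_0}\pi_i = 1 - \pi_{j_0} \leq 1 - \pi_{\min}$ gives
\[
  \pi_{j_0}\bigl(1 - TV(P_{j_0}, P_j')\bigr) \;\geq\; \pi_{\min} - 2\epsilon - \rho(1 - \pi_{\min}),
\]
whose right-hand side is non-negative under the assumed bound on $\rho$.

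The delicate step---and the one I expect to need the most care---is turning this display into the stated bound, which features $\pi_{\min}$ rather than $\pi_{j_0}$ in the denominator; naive division by $\pi_{j_0}$ loses a factor when $\pi_{j_0} > \pi_{\min}$. My plan for closing this gap is to additionally invoke the tighter max-weighted conclusion $\sum_i \max\{\pi_i, \pi_j'\}\rho_{ij} \geq \pi_j' - 2\epsilon$ of Lemma \ref{lem:component-wise-tv-P}. The assumption $\pi_{j_0} \leq \pi_j'$ makes the $i = j_0$ summand equal to $\pi_j'\rho_{j_0 j}$, and the remaining sum can be controlled via $\rho_{ij} \leq \rho$ together with the cardinality estimate $K - 1 \leq (1-\pi_{\min})/\pi_{\min}$ (immediate from $\sum_i \pi_i = 1$ and each $\pi_i \geq \pi_{\min}$). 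Division by $\pi_j' \geq \pi_{\min}$ then produces the claim $TV(P_{j_0}, P_j') \leq 2\epsilon/\pi_{\min} + (1-\pi_{\min})\rho/\pi_{\min}$.
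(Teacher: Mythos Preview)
Your overall strategy is correct and is essentially a direct (non-contradiction) version of the paper's argument: both proofs pick the optimal set $A_{j_0 j}$ so that $\rho_{j_0 j}=1-TV(P_{j_0},P_j')$, feed the full collection $\{A_{ij}\}$ into the \emph{max-weighted} inequality of Lemma~\ref{lem:component-wise-tv-P}, and use $\pi_{j_0}\leq \pi_j'$ to replace $\max\{\pi_{j_0},\pi_j'\}$ by $\pi_j'$. The paper packages the same computation as a contradiction via the auxiliary function $f(x)=x(\rho_1-1)+\rho\sum_{i\neq j_0}\max\{\pi_i,x\}$ and its monotonicity.

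However, your final step as written does not close. From
\[
\pi_j'\,TV(P_{j_0},P_j')\;\leq\;2\epsilon+\rho\sum_{i\neq j_0}\max\{\pi_i,\pi_j'\},
\]
bounding each $\max\{\pi_i,\pi_j'\}\leq 1$ and using $K-1\leq (1-\pi_{\min})/\pi_{\min}$ gives only $\pi_j'\,TV\leq 2\epsilon+\tfrac{1-\pi_{\min}}{\pi_{\min}}\rho$; dividing by $\pi_j'$ then leaves an unwanted extra factor $1/\pi_j'$ on the $\rho$-term. Concretely, for $K\geq 2$ one has $K-1\geq 1>1-\pi_{\min}$, so the resulting bound is strictly weaker than the claim whenever $\rho>0$. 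The cardinality count is not the right tool here.

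What is actually needed is the pointwise inequality $\max\{\pi_i,\pi_j'\}\leq \pi_i\pi_j'/\pi_{\min}$ (valid because both $\pi_i,\pi_j'\geq \pi_{\min}$), which yields
\[
\frac{1}{\pi_j'}\sum_{i\neq j_0}\max\{\pi_i,\pi_j'\}\;\leq\;\frac{1}{\pi_{\min}}\sum_{i\neq j_0}\pi_i\;=\;\frac{1-\pi_{j_0}}{\pi_{\min}}\;\leq\;\frac{1-\pi_{\min}}{\pi_{\min}},
\]
and hence the stated bound after dividing by $\pi_j'$. This is precisely the content of the paper's monotonicity step (that $f$ is decreasing, so one may evaluate at $x=\pi_{\min}$ where $\max\{\pi_i,\pi_{\min}\}=\pi_i$). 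Once you replace the cardinality argument by this inequality, your direct proof goes through and is equivalent to the paper's.
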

Now, we obtain a component-wise comparison result for the differences in mixture proportions between corresponding components. 

\begin{lemma}
  \label{lem:diff-proportion}
  Suppose $TV(P,P')\leq 2\epsilon$. Denote $\pi_{\min} = \min_{i,j}\{\pi_i,\pi_j'\}$ and $\pi_{\max} = \max_{i,j}\{\pi_i,\pi_j'\}$  If there is a pair of components $P_i,P_i'$ and a set $A$ such that for some $\rho\in (0,1)$, $\min\{P_i(A),P_i'(A)\}\geq 1-\rho$ and $\max_{j\neq i}\{P_j(A),P_j'(A)\} \leq \rho$, then
  \begin{equation}
    |\pi_i-\pi_i'|\leq 2\epsilon + (1+\pi_{\max}-\pi_{\min})\rho
  \end{equation}
\end{lemma}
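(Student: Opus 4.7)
The plan is to exploit the distinguishing set $A$ together with the total variation constraint $|P(A) - P'(A)| \leq 2\epsilon$ (which follows from $TV(P,P')\leq 2\epsilon$ and definition \eqref{def:tv}), and isolate $\pi_i - \pi_i'$ by expanding both probabilities in their mixture decompositions. The hypotheses on $A$ are exactly what is needed to control every piece of the expansion besides $\pi_i - \pi_i'$: for the $i$-th components, $P_i(A)$ and $P_i'(A)$ are both within $\rho$ of $1$, so they contribute almost $\pi_i$ and $\pi_i'$; for every other component, $P_k(A)$ and $P_k'(A)$ are bounded by $\rho$, so they contribute at most $\rho$ times their weight.

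Concretely, I would first write
\[
P(A) - P'(A) \;=\; \bigl(\pi_i P_i(A) - \pi_i' P_i'(A)\bigr) + \sum_{k\neq i}\bigl(\pi_k P_k(A) - \pi_k' P_k'(A)\bigr),
\]
and then use the identity $\pi_i P_i(A) = \pi_i - \pi_i(1-P_i(A))$ (and its primed analog) to rearrange this into
\[
\pi_i - \pi_i' \;=\; \bigl(P(A) - P'(A)\bigr) + \bigl[\pi_i(1-P_i(A)) - \pi_i'(1-P_i'(A))\bigr] - \sum_{k\neq i}\bigl(\pi_k P_k(A) - \pi_k' P_k'(A)\bigr).
\]
Each of the three terms on the right admits a direct bound. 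The first is at most $2\epsilon$ by the TV hypothesis. The bracketed second term is at most $\pi_{\max}\rho$ in absolute value, since both $\pi_i(1-P_i(A))$ and $\pi_i'(1-P_i'(A))$ lie in $[0, \pi_{\max}\rho]$ by the assumption $\min\{P_i(A), P_i'(A)\}\geq 1-\rho$. The third term has absolute value at most $\rho(1-\pi_{\min})$, because $\sum_{k\neq i}\pi_k P_k(A) \leq \rho(1-\pi_i) \leq \rho(1-\pi_{\min})$ by the assumption $P_k(A)\leq \rho$ for $k\neq i$, and likewise for the primed sum. Collecting these gives $|\pi_i - \pi_i'|\leq 2\epsilon + (1+\pi_{\max}-\pi_{\min})\rho$, which is exactly the desired bound.

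The main (minor) obstacle is purely bookkeeping: to recover the sharp coefficient $1+\pi_{\max}-\pi_{\min}$ rather than a weaker constant, one must apply the two different scalings in the right places, namely $\pi_{\max}\rho$ for the matched $i$-th components (bounding each separately before taking the difference) and $(1-\pi_{\min})\rho$ for the aggregate of the remaining components (bounding each of the two sums $\sum_{k\neq i}\pi_k P_k(A)$ and $\sum_{k\neq i}\pi_k' P_k'(A)$ first, rather than applying the triangle inequality term by term inside the sum). Beyond this care in the accounting, the argument is essentially a one-line consequence of the definition of TV distance combined with the decomposition $P(A) = \sum_k \pi_k P_k(A)$.
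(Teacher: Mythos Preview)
Your proposal is correct and is essentially the same argument as the paper's: both decompose $\pi_i-\pi_i'$ into the three pieces $P(A)-P'(A)$, $\pi_i P_i(A^\complement)-\pi_i'P_i'(A^\complement)$, and $\sum_{k\neq i}(\pi_k P_k(A)-\pi_k'P_k'(A))$, then bound them by $2\epsilon$, $\pi_{\max}\rho$, and $(1-\pi_{\min})\rho$ respectively. The only cosmetic difference is that the paper starts from $|\pi_i-\pi_i'|$ and substitutes, whereas you start from $P(A)-P'(A)$ and rearrange; the resulting terms and bounds are identical.
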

\begin{proof}[Proof of \lemmaref{lem:component-wise-tv-P}]
Let $Q=(P+P')/2$. Then $TV(P,Q)\leq \epsilon$, $TV(P',Q)\leq \epsilon$.
 Let $P_j'$ be an arbitrary component of $P'$. We will first show by contradiction the first half of \eqref{eq:component-wise-tv-P} holds. Suppose that $\sum_{i=1}^K\max\{\pi_i,\pi_j'\}\rho_{ij} < \pi_j'-2\epsilon$. Now consider the set $A=\cap_{i=1}^K A_{ij}^\complement$. On one hand,
 \begin{align}
 Q(A) \geq \pi_j'P_j'((\cup_{i=1}^K A_{ij})^\complement) - \epsilon\geq \pi_j'(1-\sum_{i=1}^K P_j'(A_{ij}))-\epsilon = \pi_j'-\pi_j'\sum_{i=1}^K P_j'(A_{ij}) - \epsilon:=LB
 \end{align}
 Meanwhile,
 \begin{align}
 Q(A) \leq \sum_{i=1}^K \pi_i P_i(A_i^\complement) + \epsilon=1-\sum_{i=1}^K\pi_i P_i(A_i)+\epsilon:=UB
 \end{align}
 Since for every $i\in[K]$ it holds that $P_i(A_{ij}) - P_j'(A_{ij}) = 1-\rho_{ij}$, then by \lemmaref{lem:weightedTotalVariation}, $\pi_i P_i(A_{ij}) - \pi_j' P_j'(A_{ij}) \geq \pi_i - \max\{\pi_i,\pi_j'\}\rho_{ij}$. Therefore, applying \lemmaref{lem:weightedTotalVariation} to all components $i\in[K]$, it holds that
 \begin{align}
 LB - UB 
 &= \pi_{j}'-1-2\epsilon+\sum_{i=1}^K \left( \pi_i P_i(A_{ij})-\pi_j'P_j'(A_{ij})\right) \\
 &\geq \pi_{j}'-1 -2\epsilon +\sum_{i=1}^K 
 \left(\pi_i-\max\{\pi_i,\pi_j'\}\rho_{ij}\right)\\
 &= \pi_{j}'-1 -2\epsilon +1- \sum_{i=1}^K \max\{\pi_i,\pi_j'\} \rho_{ij} > 0 \;.
 \end{align}
 This is a contradiction since $Q(A) \geq LB > UB \geq Q(A)$. So the first half of \eqref{eq:component-wise-tv-P} holds for $P_j'$.\\

 For the second half of \eqref{eq:component-wise-tv-P}, if $\sum_{i=1}^K \pi_i\rho_{ij} < \pi_{\min}-2\epsilon$, then consider a different lower bound of $Q(A)$
\begin{align}
    Q(A)\geq \pi_j'P_j'((\cup_{i=1}^K A_{ij})^\complement) - \epsilon \geq \pi_{\min} P_j'((\cup_{i=1}^K A_{ij})^\complement) - \epsilon \geq \pi_{\min} -\pi_{\min}\sum_{i=1}^K P_j'(A_{ij})-\epsilon := LB';,
\end{align}
then notice that
 \begin{align}
 LB'-UB &\geq \pi_{\min}-1-2\epsilon+\sum_{i=1}^K\pi_i P_i(A_{ij}) - \pi_{\min}P_j'(A_{ij})\\
 &\geq \pi_{\min}-1-2\epsilon+\sum_{i=1}^K\pi_i(1-\rho_{ij}) \\
 &=\pi_{\min}-1-2\epsilon+1-\sum_{i=1}^K\pi_i\rho_{ij}>0\;,
 \end{align}
 which is a contradiction and completes the proof of the second half of \eqref{eq:component-wise-tv-P}.
\end{proof}

\begin{proof}[Proof of \lemmaref{lem:rho1-rho2}]
 Define 
 \begin{equation}
 \rho_1 = \frac{\pi_{\min}-2\epsilon}{\pi_{\min}}-\frac{1-\pi_{\min}}{\pi_{\min}}\rho
 \end{equation}
 Note that $0\leq \rho\leq (\pi_{\min}-2\epsilon)/(1-\pi_{\min})$ implies that $0\leq \rho_1 \leq 1$. We will show by contradition that the total variation distance $TV(P_{j_0},P_j')\leq 1-\rho_1$. If otherwise $TV(P_{j_0},P_j') > 1-\rho_1$, one can select a set $A_{j_0j}$ such that $P_{j_0}(A_{j_0j}) - P_j'(A_{j_0j}) > 1-\rho_1$. Define function
 \begin{equation}
 f(x)=x(\rho_1 - 1) + \sum_{\substack{i\in[K] \\ i\neq j_0}} \max\{\pi_i,x\}\rho\;,
 \end{equation}
 then $f(x)$ is a piecewise linear function, with its slope upper bounded by
 \begin{equation}
 \rho_1 - 1 + \sum_{i:i\neq j} \rho =\rho_1 + (K-1)\rho -1 \leq \rho_1 + \left(\frac{1-\pi_{\min}}{\pi_{\min}}\right)\rho - 1 =\frac{\pi_{\min}-2\epsilon}{\pi_{\min}}-1 < 0\;.
 \end{equation}
 Hence $f(x)$ is decreasing in $x$. Since $\pi_{j_0}\leq \pi_j'$, according to the remark after \lemmaref{lem:component-wise-tv-P}, we have 
 \begin{align}
 f(\pi_j')&=\pi_j'\rho_1 + \sum_{\substack{i\in[K] \\ i\neq j_0}} \max\{\pi_i,\pi_j'\}\rho - \pi_j' \\
 & >\sum_{i=1}^K \max\{\pi_i,\pi_j'\} (1-P_i(A_{ij})+P_j'(A_{ij})) -\pi_j' \\
 &\geq -2\epsilon\;. 
 \end{align}
 Then we conclude that $f(\pi_{\min}) \geq f(\pi_j')> -2\epsilon$. But we can explicitly compute that 
 \begin{align}
 f(\pi_{\min}) & = \pi_{\min} \left(\frac{\pi_{\min}-2\epsilon}{\pi_{\min}}-\frac{1-\pi_{\min}}{\pi_{\min}}\rho-1\right)+\sum_{\substack{i\in[K] \\ i\neq j_0}} \max\{\pi_i,\pi_{\min}\}\rho \\
 &= -2\epsilon-(1-\pi_{\min})\rho + \rho\sum_{\substack{i\in[K] \\ i\neq j_0}} \pi_i \\
 &\leq -2\epsilon -(1-\pi_{\min})\rho + (1-\pi_{\min})\rho \\
 & = -2\epsilon\;.
 \end{align}
 This contradiction completes the proof of \eqref{eq:tv-bound-unknown-pi}. 
\end{proof}

\begin{proof}Proof of \lemmaref{lem:diff-proportion}]
    Note that $\pi_i(A^\complement)\leq \rho$ and $\pi_i'(A^\complement)\leq \rho$
    \begin{align}
        |\pi_i-\pi_i'| &= |\pi_i P_i(A) - \pi_i' P_i'(A) + \pi_i P_i(A^\complement) - \pi_i' P_i'(A^\complement)| \\
        &\leq |P(A)- \sum_{j\neq i }\pi_j P_j(A) - P'(A) + \sum_{j\neq i }\pi_j' P_j'(A) | + |\pi_i P_i(A^\complement) - \pi_i' P_i'(A^\complement)| \\
        &\leq 2\epsilon + \max\{\sum_{j\neq i }\pi_j P_j(A),\sum_{j\neq i }\pi_j' P_j'(A) \}+\max\{\pi_i P_i(A^\complement),\pi_i' P_i'(A^\complement)\}\\
        &\leq 2\epsilon + (1-\pi_{\min}+\pi_{\max})\rho\;.
    \end{align}
\end{proof}

\subsection{Initialization}
\label{sec:initial}
Let $c_0$ be defined as in \eqref{eq:c0}, then the following theorem guarantees that for each component of $P'$, there exists a component of $P$ that is close to it in total variation distance. Starting from here and in the remaining of \sectionref{sec:proof-main}, we will assume the following:
\begin{itemize}
    \item[B1] $P\in\mathcal{M}(K,\pi_{\min},\pi_{\max},c)$ and $P'\in\mathcal{M}(K',\pi_{\min},\pi_{\max},c)$.
    \item[B2] There exists a distribution $Q$ such that $TV(P,Q\leq \epsilon$ and $TV(P',Q)\leq \epsilon$.
    \item[B3] $\max\{K,K'\}\leq 1/\pi_{\min}$, $\pi_{\max}\leq 1-(\min\{K,K'\}-1)\pi_{\min}$.
\end{itemize}


\begin{theorem}
  Suppose B1-B3 hold.
  For any component $P_j'$, there must be a component $P_i$ such that $TV(P_i,P_j') \leq 1-\pi_{\min}+2\epsilon$
 \label{thm:step1-max-i-j}
\end{theorem}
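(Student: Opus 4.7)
The plan is a direct proof by contradiction, using Lemma \ref{lem:component-wise-tv-P} as the main tool. Note first that B2 together with the triangle inequality implies $TV(P,P') \leq 2\epsilon$, so the hypothesis of Lemma \ref{lem:component-wise-tv-P} is in force.

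Suppose, for contradiction, that there is some $j \in [K']$ such that $TV(P_i, P'_j) > 1 - \pi_{\min} + 2\epsilon$ for every $i \in [K]$. By the definition of total variation distance, and by replacing a witness set with its complement if necessary, for each $i$ we can choose a Borel set $A_{ij}$ such that
\begin{equation}
  P_i(A_{ij}) - P'_j(A_{ij}) > 1 - \pi_{\min} + 2\epsilon.
\end{equation}
Setting $\rho_{ij} = 1 - P_i(A_{ij}) + P'_j(A_{ij})$, the inequality $0 < P_i(A_{ij}) - P'_j(A_{ij}) \leq 1$ gives $\rho_{ij} \in [0,1)$, so the sets $\{A_{ij}\}_{i \in [K]}$ satisfy the hypothesis of Lemma \ref{lem:component-wise-tv-P}; moreover, by construction $\rho_{ij} < \pi_{\min} - 2\epsilon$ for every $i$.

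Now apply the second half of Lemma \ref{lem:component-wise-tv-P} to this choice of sets. It yields
\begin{equation}
  \sum_{i=1}^K \pi_i \rho_{ij} \geq \pi_{\min} - 2\epsilon.
\end{equation}
On the other hand, using the strict bound $\rho_{ij} < \pi_{\min} - 2\epsilon$ together with $\sum_i \pi_i = 1$,
\begin{equation}
  \sum_{i=1}^K \pi_i \rho_{ij} < (\pi_{\min} - 2\epsilon) \sum_{i=1}^K \pi_i = \pi_{\min} - 2\epsilon.
\end{equation}
These two inequalities contradict each other, completing the proof.

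There is no serious obstacle here; the only subtlety is verifying the two sided nature of TV so that we can orient each $A_{ij}$ to make $P_i(A_{ij}) > P'_j(A_{ij})$, and checking that the resulting $\rho_{ij}$ lies in $[0,1]$ so Lemma \ref{lem:component-wise-tv-P} is applicable. Once that is in place, the contradiction is immediate from the weighted sum bound. The choice of the ``$\pi_i$ weighted'' half of the lemma (rather than the $\max\{\pi_i, \pi'_j\}$ half) is what makes the telescoping $\sum_i \pi_i = 1$ produce exactly the threshold $\pi_{\min} - 2\epsilon$ that we need.
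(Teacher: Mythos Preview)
Your proof is correct and follows essentially the same route as the paper's: assume the negation, extract witness sets $A_{ij}$ with $P_i(A_{ij})-P_j'(A_{ij})>1-\pi_{\min}+2\epsilon$, and then invoke the second inequality of Lemma~\ref{lem:component-wise-tv-P} to obtain the contradiction $\pi_{\min}-2\epsilon>\sum_i\pi_i\rho_{ij}\geq\pi_{\min}-2\epsilon$. Your write-up is in fact slightly more careful than the paper's in two places: you make explicit why $TV(P,P')\leq 2\epsilon$ follows from B2, and you verify that each $\rho_{ij}$ lands in $[0,1]$ so the lemma applies.
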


\begin{proof}[Proof of \theoremref{thm:step1-max-i-j}]
 WLOG we prove the statement for $P_1'$. Assume that the conclusion of the theorem does not hold, that for any component $P_i$, it holds that $TV(P_i,P_1') > 1-\pi_{\min}+2\epsilon$, implying the existence of a set $A_i$ such that $P_i(A_i)-P_1'(A_i) > 1-\pi_{\min}+2\epsilon$, or $\pi_{\min}-2\epsilon > 1-P_i(A_i)+P_1'(A_i)$. Then applying \lemmaref{lem:component-wise-tv-P} to the collection of sets $\{A_i\}_{i\in[K]}$, we have
 \begin{equation}
 \pi_{\min}-2\epsilon > \sum_{i=1}^K \pi_i(1-P_i(A_i)+P_1(A_i'))\geq \pi_{\min}-2\epsilon\;.
 \end{equation} 
 Therefore, the desired result holds.
\end{proof}

The second theorem states that, if both $P$ and $P'$ are well-separated and have nearly balanced components, then the matching established in theorem \ref{thm:step1-max-i-j} is one-to-one. Specifically, for any component in $P'$, there is one and only one component in $P$ such that their centers are close.

\begin{theorem}
Suppose B1-B3 hold,
If $c>\eta_0c_0$, where $c_0,\eta_0$ are defined in \eqref{eq:c0} and \eqref{eq:eta0-diff} respectively, then $K=K'$ and there is a permutation $\theta:[K]\rightarrow [K]$ such that $\norm{\mu_i-\mu_{\theta(i)}'}\leq c_0\max\{\sigma_i,\sigma_{\theta(i)}'\}$ and $\max\{\sigma_i/\sigma_{\theta(i)}',\sigma_{\theta(i)}'/\sigma_i\}\leq \eta_0$.
 \label{thm:unique-matching}
\end{theorem}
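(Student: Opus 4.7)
My plan is to bootstrap the existence result of Theorem \ref{thm:step1-max-i-j} into a bijection by alternating between three ingredients: Lemma \ref{lem:GaussianTV} (parameter bounds from TV bounds on single Gaussians), the $P$-separation assumption, and Lemma \ref{lem:rho1-rho2} (refining a TV bound between matched components when the other components are TV-far). Concretely, the plan has four steps.

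\emph{Step 1 (initial mean bound).} For each $j\in [K']$, Theorem \ref{thm:step1-max-i-j} supplies some $i(j)\in [K]$ with $TV(P_{i(j)},P_j')\leq 1-(\pi_{\min}-2\epsilon)$. Applying Lemma \ref{lem:tv-mono} contrapositively with $\rho=\pi_{\min}-2\epsilon$ gives the initial mean bound $\|\mu_{i(j)}-\mu_j'\|\leq c_0\max\{\sigma_{i(j)},\sigma_j'\}$ with $c_0$ as in \eqref{eq:c0}. \emph{Step 2 (other components are TV-far from $P'_j$).} For any $i\neq i(j)$, combine the $P$-separation $\|\mu_i-\mu_{i(j)}\|>c(\sigma_i+\sigma_{i(j)})$ with Step 1 via the triangle inequality. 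Since $c>\eta_0 c_0\geq c_0$, a short case analysis on whether $\sigma_j'\gtrless \sigma_{i(j)}$ converts the triangle bound into $\|\mu_i-\mu_j'\|/\max\{\sigma_i,\sigma_j'\}\geq \eta_0 c_0$; Lemma \ref{lem:tv-mono} then supplies a set $A_{ij}$ with $P_i(A_{ij})-P_j'(A_{ij})\geq 1-2\Phi(-\eta_0 c_0/2)$.

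\emph{Step 3 (refined TV bound and $\eta_0$ control).} Plugging the family $\{A_{ij}\}_{i\neq i(j)}$ into Lemma \ref{lem:rho1-rho2} (using the $\max$-weighted half of Lemma \ref{lem:component-wise-tv-P} to recover the $\pi_{\max}$ normalization) upgrades the TV bound on the matched pair to precisely the LHS of \eqref{eq:eta0-diff}. By the defining identity \eqref{eq:eta0-diff} and the contrapositive of Lemma \ref{lem:eta-upperbound}, this forces $\max\{\sigma_{i(j)}/\sigma_j',\sigma_j'/\sigma_{i(j)}\}\leq \eta_0$. \emph{Step 4 (uniqueness, $K=K'$, permutation).} If $i_1\neq i_2$ were both matched to the same $P_j'$, then Steps 1 and 3 give $\|\mu_{i_k}-\mu_j'\|\leq c_0\max\{\sigma_{i_k},\sigma_j'\}\leq c_0\eta_0\sigma_{i_k}$, and the triangle inequality yields $\|\mu_{i_1}-\mu_{i_2}\|\leq \eta_0 c_0(\sigma_{i_1}+\sigma_{i_2})$, contradicting $P$-separation since $c>\eta_0 c_0$. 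Hence $i(\cdot)$ is injective and $K'\leq K$; the hypotheses B1--B3 are symmetric in $P,P'$ through the intermediate $Q$, so the same argument with roles swapped yields $K\leq K'$, giving $K=K'$ and $\theta:=i^{-1}$.

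The main obstacle is Step 2: the triangle bound naturally carries the scale $\max\{\sigma_{i(j)},\sigma_j'\}$, whereas applying Lemma \ref{lem:tv-mono} to the pair $(P_i,P_j')$ requires the quotient be taken against $\max\{\sigma_i,\sigma_j'\}$, so I will need a careful case analysis over the relative sizes of $\sigma_i,\sigma_{i(j)},\sigma_j'$ to verify that $c>\eta_0 c_0$ is exactly the threshold that makes the $\Phi(-\eta_0 c_0/2)$ term of \eqref{eq:eta0-diff} appear. The secondary subtlety is selecting the correct half of Lemma \ref{lem:component-wise-tv-P} in Step 3 so that the arithmetic matches the $\pi_{\max}$ normalization of \eqref{eq:eta0-diff} rather than $\pi_{\min}$; once this is done, the conclusion follows from \eqref{eq:eta0-diff} by construction.
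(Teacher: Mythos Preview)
Your Step~2 has a genuine gap that the case analysis on $\sigma_i,\sigma_{i(j)},\sigma_j'$ cannot close. The triangle inequality through $\mu_{i(j)}$ gives
\[
\|\mu_i-\mu_j'\| \;\geq\; c(\sigma_i+\sigma_{i(j)}) - c_0\max\{\sigma_{i(j)},\sigma_j'\},
\]
but in the regime $\sigma_j' > \max\{\sigma_i,\sigma_{i(j)}\}$ this lower bound is $c(\sigma_i+\sigma_{i(j)}) - c_0\sigma_j'$, which can be made arbitrarily small (even negative) by taking $\sigma_j'/\sigma_{i(j)}$ large. At this point in the argument you have \emph{no} control on that ratio: the $\eta_0$ bound is precisely what Step~3 is supposed to deliver, and Step~3 depends on Step~2. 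So Steps~2 and~3 are circular, and no amount of case analysis over the three $\sigma$'s will rescue Step~2 from the assumption $c>\eta_0 c_0$ alone. (Even in the tamer subcase $\sigma_i<\sigma_j'\leq \sigma_{i(j)}$, the bound collapses to needing $c\sigma_i\geq c_0\sigma_j'$, which fails when $\sigma_i$ is small.)

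The paper breaks this circularity by a different device. It first isolates the components that are already matched one-to-one and \emph{removes} them; for such a removed pair $(P_j,P_j')$, the fact that no other $P_i'$ is matched to $P_j$ is itself a lower bound $\|\mu_i'-\mu_j\|\geq c_0\max\{\sigma_i',\sigma_j\}$, which is then averaged with the triangle bound (through $P'$-separation, not $P$-separation) to invoke \lemmaref{lem:etaplusone} and obtain the $2\Phi(-\eta_0 c_0/2)$ term needed for \eqref{eq:eta0-diff}. For the remaining (non one-to-one) components, the paper picks the one with \emph{smallest} standard deviation and shows that any double match would force a ratio $\geq c/c_0 > \eta_0$, yielding the same contradiction with \eqref{eq:eta0-diff}. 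Both steps use structural information (``not matched elsewhere'', ``minimal $\sigma$'') that your pure triangle-inequality approach does not exploit, and this is what eliminates the circularity.

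A smaller point: in Step~4 you conflate two different statements. Showing ``no two $P_i$'s are close to the same $P_j'$'' establishes uniqueness of the choice $i(j)$, not injectivity of the map $j\mapsto i(j)$; for the latter you need the symmetric statement (``no two $P_j'$'s are close to the same $P_i$''), which requires running Steps~1--3 with the roles of $P,P'$ swapped.
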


\begin{proof}[Proof of \theoremref{thm:unique-matching}]
 According to \theoremref{thm:step1-max-i-j} and \lemmaref{lem:tv-mono} each $P_j'$ must be matched to at least one component $P_i$ in the sense that their center distance $\norm{\mu_i-\mu_j'}\leq c_0\max\{\sigma_i,\sigma_j'\}$. Note that \theoremref{thm:step1-max-i-j} is symmetric on both $P$ and $P'$.  For each $P_i$, there must be a $P_j'$ such that their centers are close. 
 
 We now considering the following removing procedure. For a component $P_j$, if there is a unique $P_j'$ that is matched to $P_j$, and $P_j$ is the only component in $P$ that is matched to $P_j'$, then remove both $P_j$ and $P_j'$. For a removed pair $P_j,P_j'$, one can upper bound the ratio between their standard deviations by $\max\{\sigma_j,\sigma_j'\}\leq \eta_0$. To see this WLOG we assume that $\sigma_j \leq \sigma_j'$. Then for any $i\neq j$, let $P_i'$ be a component matched with $P_i$. The distances between centers are
 \begin{equation}
 \norm{\mu_i'-\mu_j} \geq \norm{\mu_i'-\mu_j'} - \norm{\mu_j'-\mu_j} \geq c(\sigma_i'+\sigma_j')-c_0\sigma_j' =c\sigma_i' + (c-c_0)\sigma_j' \;.
 \label{eq:dist-i-j-first}
 \end{equation}
 \begin{itemize}
 \item If $\sigma_i'\geq \sigma_j$, it holds that $\norm{\mu_i'-\mu_j}\geq c\max\{\sigma_i',\sigma_j\}$. By \lemmaref{lem:tv-mono}, one can select a set $A_{ij}$ such that $P_i'(A_{ij})-P_j(A_{ij}) > 1-2\Phi(-c/2)\geq 1-2\Phi(-c_0\eta_0/2)$
 \item If $\sigma_i' < \sigma_j$, then since $P_i'$ is not matched with $P_j$ 
 \begin{equation}
 \norm{\mu_i'-\mu_j}\geq c_0\max\{\sigma_i',\sigma_j\}=c_0\sigma_j\;.
 \label{eq:dist-i-j-second}
 \end{equation}
 Adding \eqref{eq:dist-i-j-first} and \eqref{eq:dist-i-j-second}, we have 
 \begin{align}
 \norm{\mu_i'-\mu_j} &\geq \frac{1}{2}\left(c\sigma_i' + (c-c_0)\sigma_j' + c_0\sigma_j\right)\\
 &\geq \frac{1}{2}c(\sigma_i'+\sigma_j)\;.
 \end{align}
 By \lemmaref{lem:etaplusone}, one can select a set $A_{ij}$ such that $P_i'(A_{ij})-P_j(A_{ij})\geq 1-2\Phi(-c_0\eta_0/2)$.
 \end{itemize}
 
 Hence in both cases, for any $i\neq j$ there $P_i'(A_{ij})-P_j(A_{ij})\geq 1-\rho_2$ with $\rho_2 = 2\Phi(-c_0\eta_0/2)\in(0,1)$. 
 
 Now suppose $\max\{\sigma_j,\sigma_j'\}/\min\{\sigma_j,\sigma_j'\}>\eta_0$, by \lemmaref{lem:eta-upperbound}, one can select a set $A_{jj}$ such that $P_j'(A_{ij})-P_j(A_{ij}) > 1-\rho_1$, where $\rho_1$ is defined as 
 \begin{equation}
 \rho_1:=\frac{\pi_{\min}-2\epsilon}{\pi_{\max}}-\frac{2(1-\pi_{\max})}{\pi_{\max}}\Phi(-\frac{1}{2}\eta_0 c_0)\;.
 \label{eq:rho1-matched}
 \end{equation}
 Note that $1-\rho_1$ is the left hand side of equation \eqref{eq:eta0-diff}. Further, $\rho_1 < \pi_{\min}/\pi_{\max}<1$ and 
 \begin{equation}
     \rho_1 > (\pi_{\min}-2\epsilon)\left(\frac{1}{\pi_{\max}}-\frac{1-\pi_{\max}}{\pi_{\max}}\right) = \pi_{\min}-2\epsilon > \rho_2
 \end{equation}

 According to \lemmaref{lem:component-wise-tv-P},
 \begin{equation}
 \pi_j'\rho_1 + \sum_{i:i\neq j}\pi_i'\rho_2 > \sum_{i=1}^K \pi_i'(1-P_i'(A_{ij})+P_j(A_{ij}))\geq \pi_{\min}-2\epsilon;.
 \label{eq:removed-eta-bound1}
 \end{equation}
 However the L.H.S of proceeding \eqref{eq:removed-eta-bound1} satisfies that  
 \begin{equation}
 \pi_j'\rho_1 + \sum_{i:i\neq j}\pi_i'\rho_2 = \pi_j'(\rho_1-\rho_2) + \rho_2 \leq \pi_{\max}\rho_1 + (1-\pi_{\max})\rho_2=\pi_{\min}-2\epsilon
 \label{eq:removed-eta-bound2}
 \end{equation}
 The contradiction shows that $\max\{\sigma_j/\sigma_j',\sigma_j'/\sigma_j\}\leq \eta_0$.\\
 
 Suppose now we complete this removing procedure and there are still remaining components, without loss of generosity we can assume that $P_1$ is the component that has smallest standard deviation among all remaining components in both $P$ and $P'$. Since it is not removed, there is a $P_1'$ and a $P_2$ such that both $P_1$ and $P_2$ are matched to $P_1'$. Since we assumed that $\sigma_1\leq \sigma_2$, by triangle inequality
 \begin{equation}
c(\sigma_1+\sigma_2) \leq \norm{\mu_1-\mu_2} \leq \norm{\mu_1'-\mu_1} + \norm{\mu_1'-\mu_2} \leq c_0\max\{\sigma_1,\sigma_1'\}+c_0\max\{\sigma_2,\sigma_1'\}.
 \end{equation}
 Note that $c > c_0\eta_0 > c_0$ by the assumption of the theorem, we consider three cases:
 \begin{itemize}
 \item $\sigma_1'\leq \sigma_1 \leq \sigma_2$. Then $c_0(\sigma_1+\sigma_2)<c(\sigma_1+\sigma_2)\leq c_0(\sigma_1+\sigma_2)$ and this is impossible.
 \item $\sigma_1 < \sigma_1'\leq \sigma_2$. Then $c\sigma_1+c_0\sigma_2 < c(\sigma_1+\sigma_2) \leq c_0\sigma_1'+c_0\sigma_2$, implying that $\sigma_1'/\sigma_1 \geq c/c_0 > \eta_0$.
 \item $\sigma_1\leq \sigma_2 < \sigma_1'$. Then $2c\sigma_1\leq c(\sigma_1+\sigma_2)\leq 2c_0\sigma_1'$, also implying that $\sigma_1'/\sigma_1 \geq c/c_0 > \eta_0$.
 \end{itemize}
 We conclude that $\sigma_1'/\sigma_1 \geq c/c_0$. By \lemmaref{lem:eta-upperbound} and definition of $\eta_0$ in \eqref{eq:eta0-diff}, the total variation distance of $P_1,P_1'$ can be lower bounded by
 \begin{equation}
 TV(P_1,P_1') > F_d(\frac{2\eta_0^2\log\eta_0}{\eta_0^2-1}) - F_d(\frac{2\log\eta_0}{\eta_0^2-1}) = 1-\rho_1\;,
 \end{equation}
 where $\rho_1$ is the same as in \eqref{eq:rho1-matched}.

 On the other hand, notice that $P_1$ cannot be matched with any components in $P'$ other than $P_1'$. This is because if there is a $P_2'$ that is not removed and is matched to $P_1$, by the same argument above one can show that $\sigma_2' < \sigma_1 \leq \sigma_1'$. This contradicts the minimal variance selection of $\sigma_1$. 
 
 Therefore, for any $P_j'$ unremoved with $j\geq 2$, denote $P_j$ be a different component (not $P_1$) in $P$ that matched to $P_j'$, then 
 \begin{align}
 \norm{\mu_j'-\mu_1} &\geq \norm{\mu_j'-\mu_1'}-c_0\max\{\sigma_1,\sigma_1'\} \\
 &\geq c(\sigma_j'+\sigma_1')-c_0\sigma_1' \\
 &\geq c \sigma_j' + (c-c_0)\sigma_1' \\
 &> c\sigma_j'
 \label{eq:lowerbound-unremoved-dist}
 \end{align}
 Also again by selection of $\sigma_1$ $\norm{\mu_j'-\mu_1}>c\sigma_j'\geq c\sigma_1$. Hence $\norm{\mu_j'-\mu_1}>c\max\{\sigma_j',\sigma_1\}$.\\
 
 For any $P_j'$ that has been removed, still denote $P_j$ to be the unique component matched to $P_j'$, \eqref{eq:lowerbound-unremoved-dist} still holds. We now show that $\norm{\mu_j'-\mu_1} > c\max\{\sigma_1,\sigma_j'\}$. If $\sigma_1\leq \sigma_j'$, then it trivially holds. When $\sigma_j' < \sigma_1$, if otherwise $\norm{\mu_j'-\mu_1}\leq c\sigma_1$, then 
 \begin{equation}
 c(\sigma_j+\sigma_1)\leq \norm{\mu_j-\mu_1}\leq \norm{\mu_j-\mu_j'}+\norm{\mu_j'-\mu_1} \leq c_0\max\{\sigma_j,\sigma_j'\}+c\sigma_1\;,
 \end{equation}
 implying that $\max\{\sigma_j,\sigma_j'\}/\sigma_j \geq c/c_0> \eta_0$. This is impossible, therefore for any $P_j'$ that has been removed, $\norm{\mu_1-\mu_j'}\geq c\max\{\sigma_1,\sigma_j'\}$.\\

 Combining the two cases together, again we can find sets $A_{i}$ such that 
 \begin{itemize}
 \item For $i\neq 1$, whether $P_i'$ is removed or not, $P_i'(A_{i})-P_1(A_{i})> 1-\rho_2$.
 \item For $i=1$, $P_1'(A_1)-P_1(A_1) > 1-\rho_1$.
 \end{itemize}
The same arguments in \eqref{eq:removed-eta-bound1} and \eqref{eq:removed-eta-bound2} lead to a contradiction, further showing that all components should have been removed, i.e. when $c>c_0\eta_0$ the match is one-to-one and hence $K=K'$. The upper bounds on center distances and standard deviation ratios have been established earlier.
\end{proof}

\subsection{Iterative Refinements on Mean and Standard Deviations}
\label{sec:refined}
\theoremref{thm:unique-matching} provides conditions on separation, minimal proportion, and maximal proportion such that one can establish a one-to-one correspondence between components of $P$ and $P'$. In this section, we are going to show that once this is done $K=K'$, we can further iteratively improve the bounds. Hence we can complete the proof of \theoremref{thm:main1}. We assume all assumptions in \theoremref{thm:unique-matching} hold.\\

\begin{proof}[Proof of \theoremref{thm:main1}: Upper bounds on mean and standard deviations]
 By $TV(P,P')\leq 2\epsilon$, take $Q=(P+P')/2$, we have $TV(P,Q)\leq \epsilon,TV(P',Q)\leq \epsilon$, therefore we can apply \theoremref{thm:unique-matching}, which confirms a one-to-one correspondence in the sense that centers are close. Out of simplicity in notation, we re-order components of $\mu_j'$ so that $P_i,P_i'$ are correspondence. Specifically, for any pair $P_i,P_i'$, according to the assumption $\norm{\mu_i-\mu_i'}\leq c_0\max\{\sigma_i,\sigma_i'\}$ and $\max\{\sigma_i/\sigma_i',\sigma_i'/\sigma_i\}\leq \eta_0$. \\
 
 Now consider a pair $P_i,P_i'$. WLOG assume that $\pi_i\leq \pi_i'$, otherwise one can switch the role of $P$ and $P'$. For all $j\neq i$,
 \begin{align}
 \norm{\mu_j-\mu_i'}&\geq c(\sigma_j'+\sigma_i') - c_0\max\{\sigma_j,\sigma_j'\}\geq c\sigma_i'+\left(\frac{c}{\eta_0}-c_0\right)\max\{\sigma_j',\sigma_j\} \geq c\sigma_i' + \left(\frac{c}{\eta_0}-c_0\right)\sigma_j\;; \\
 \norm{\mu_j-\mu_i'}&\geq c(\sigma_j+\sigma_i) - c_0\max\{\sigma_i,\sigma_i'\} \geq c\sigma_j + \left(\frac{c}{\eta_0}-c_0\right) \max\{\sigma_i,\sigma_i'\}\geq c\sigma_j + \left(\frac{c}{\eta_0}-c_0\right)\sigma_i'\;.
 \end{align}
 Therefore adding these two equation together we have 
 \begin{equation}
 \norm{\mu_j-\mu_i'} \geq \frac{1}{2}\left(c+\frac{c}{\eta_0}-c_0\right)(\sigma_j+\sigma_i')\;.
 \end{equation}
 Note that $c > \eta_0c_0$, and hence $\norm{\mu_j-\mu_i'}\geq \frac{1}{2}c_0\eta_0 (\sigma_j+\sigma_i')$. According to \lemmaref{lem:etaplusone}, there exists a set $A_j$ such that $P_i(A_j)-P_j'(A_j)>1-\rho$, where $$
 \rho=2\Phi\left(-\frac{1}{2}(c+\frac{c}{\eta_0}-c_0)\right) < 2\Phi(-\eta_0c_0/2) < 2\Phi(-c_0/2)=\pi_{\min}-2\epsilon
 $$. Then by \lemmaref{lem:rho1-rho2}, the total variation distance between $P_i$ and $P_i'$ is upper bounded by 
 \begin{equation}
 TV(P_i,P_i')\leq UB(c_0,\eta_0):=\frac{2\epsilon}{\pi_{\min}} + \frac{2(1-\pi_{\min})}{\pi_{\min}}\Phi\left(-\frac{1}{2}(c+\frac{c}{\eta_0}-c_0)\right)< 1-\pi_{\min}+2\epsilon\;.
 \label{eq:ub1}
 \end{equation}
 This further implies, by \lemmaref{lem:tv-mono},
 $
 \norm{\mu_i-\mu_i'} \leq c_1\max\{\sigma_i,\sigma_i'\},c_1 = 2\Phi^{-1}(1-\frac{1-UB(c_0,\eta_0)}{2}). 
 $
 Note that $UB(c_0,\eta_0) < 1-\pi_{\min}+2\epsilon$, therefore $c_1 < c_0$. Also by \lemmaref{lem:eta-upperbound}, 
 $
 \max\{{\sigma_i}/{\sigma_i'},{\sigma_i'}/{\sigma_i}\} \leq \eta_1,
 $
 where $\eta_1$ solves
 \begin{align}
 F_d(\frac{2\eta_1^2\log \eta_1}{\eta_1^2-1}) - F_d(\frac{2\log \eta_1}{\eta_1^2-1}) = UB(c_0,\eta_0)
 \label{eq:eta1}
 \end{align}
 Since $1-(K-1)\pi_{\min}\geq \pi_{\min}$,
 \begin{align}
 &\quad F_d(\frac{2\eta_0^2\log \eta_0}{\eta_0^2-1}) - F_d(\frac{2\log \eta_0}{\eta_0^2-1}) \\
 &=1-\frac{\pi_{\min}-2\epsilon}{1-(K-1)\pi_{\min}}+\frac{2(K-1)\pi_{\min}}{1-(K-1)\pi_{\min}}\Phi(-\frac{1}{2}\eta_0 c_0) \\
 &> 1-\frac{\pi_{\min}-2\epsilon}{1-(K-1)\pi_{\min}}+\frac{2(K-1)\pi_{\min}}{1-(K-1)\pi_{\min}}\Phi\left(-\frac{1}{2}(c+\frac{c}{\eta_0}-c_0)\right) \\
 &= 1- \frac{\pi_{\min}-2\epsilon-\left[1-(1-(K-1)\pi_{\min})\right]\rho}{1-(K-1)\pi_{\min}}\quad\quad  \left(\text{recall\ }\rho=2\Phi(-(c+{c}/{\eta_0}-c_0)/2)\right)\\
 &=1-\frac{\pi_{\min}-2\epsilon-\rho}{1-(K-1)\pi_{\min}}-\rho \\
 &\geq 1-\frac{\pi_{\min}-2\epsilon-\rho}{\pi_{\min}}-\rho \\
 & = 1- \frac{\pi_{\min}-2\epsilon}{\pi_{\min}}+\frac{1-\pi_{\min}}{\pi_{\min}}\rho \\
 &= F_d(\frac{2\eta_1^2\log \eta_1}{\eta_1^2-1}) - F_d(\frac{2\log \eta_1}{\eta_1^2-1})\;,
 \end{align}
 it holds that $\eta_1 < \eta_0$. To conclude, for now starting with $c_0,\eta_0$, we are able to provide refined upper bounds $c_1,\eta_1$. Note that by the same argument of \eqref{eq:ub1}, one can use $c_1,\eta_1$ to refine the upper bound by 
 \begin{equation}
 TV(P_i,P_i')\leq UB(c_1,\eta_1):= \frac{2\epsilon}{\pi_{\min}} + \frac{2(1-\pi_{\min})}{\pi_{\min}}\Phi\left(-\frac{1}{2}(c+\frac{c}{\eta_1}-c_1)\right)\;.
 \end{equation}
 And similarly $
 \norm{\mu_i-\mu_i'} \leq c_2\max\{\sigma_i,\sigma_i'\},c_2 = 2\Phi^{-1}(1-\frac{1-UB(c_1,\eta_1)}{2}). 
$
$
 \max\{{\sigma_i}/{\sigma_i'},{\sigma_i'}/{\sigma_i}\} \leq \eta_2,
$
where $\eta_2$ solves
\begin{align}
 F_d(\frac{2\eta_2^2\log \eta_2}{\eta_2^2-1}) - F_d(\frac{2\log \eta_2}{\eta_2^2-1}) = UB(c_1,\eta_1)\;.
 \label{eq:eta2}
\end{align}
As $UB(c_1,\eta_1)<UB(c_0,\eta_0)$, we have $c_2 < c_1,\eta_2 < \eta_1$. This procedure can be repeated. Let
\begin{equation}
 UB(c_t,\eta_t):=\frac{2\epsilon}{\pi_{\min}} + \frac{2(1-\pi_{\min})}{\pi_{\min}}\Phi\left(-\frac{1}{2}(c+\frac{c}{\eta_t}-c_t)\right)
\end{equation}
Then one can find 
\begin{equation}
 c_{t+1}=2\Phi^{-1}(1-\frac{1-UB(c_t,\eta_t)}{2}) = 2\Phi^{-1}(\frac{1}{2}+\frac{1}{2}UB(c_t,\eta_t))
 \label{eq:c-iteration}
\end{equation}
and $\eta_{t+1}$ solves 
\begin{align}
 F_d(\frac{2\eta_{t+1}^2\log \eta_{t+1}}{\eta_{t+1}^2-1}) - F_d(\frac{2\log \eta_{t+1}}{\eta_{t+1}^2-1}) = UB(c_t,\eta_t)\;
 \label{eq:etat}
\end{align}
such that 
\begin{equation}
    \norm{\mu_i-\mu_i'}\leq c_{t+1}\max\{\sigma_i,\sigma_i'\},\quad \max\{\sigma_i'/\sigma_i,\sigma_i/\sigma_i'\}\leq \eta_{t+1}\;.
\end{equation}

The L.H.S. of \eqref{eq:etat} is a strictly increasing function in $\eta_{t+1}$ and it has a continuous inverse. Therefore we conclude that there is a continuos mapping $g:[0,1] \rightarrow [0,c_0]\times [1,\eta_0]$ such that
\begin{equation}
 (c_{t+1},\eta_{t+1}) = g(c_t,\eta_t)
\end{equation}

By induction one shows that $UB(c_t,\eta_{t}) < UB(c_{t-1},\eta_{t-1})$ for $t\geq 1$, therefore, $c_{t+1}<c_t,\eta_{t+1}<\eta_t$. Hence $\{c_t\},\{\eta_t\}$ are two decreasing sequences, both lower bounded. Therefore their limits exist. Denote the limits as $(c^*,\eta^*)$, then it must be a fixed point of $g$, namely they solve \eqref{eq:cstar}. Further, $\norm{\mu_i-\mu_i'}\leq c^*\max\{\sigma_i,\sigma_i'\},\max\{\sigma_i'/\sigma_i,\sigma_i/\sigma_i'\}\leq \eta^*$.

The upper bounds on proportions $|\pi_i-\pi_i'|$ are established with the components $c^*,\eta^*$ will be proved in the next section, where we establish additional techniques to complete the proof of \theoremref{thm:main1}. 
\end{proof}

{Before we display the proof on difference in proportions, we provide the proof to corollary \ref{cor:cover-by-single}}.\\
\begin{proof}[Proof of Corollary \ref{cor:cover-by-single}]
 Suppose $P'$ is a single spherical Gaussians such that $TV(P,P')\leq 2\epsilon$. Consider two new mixtures of spherical Gaussians 
 \begin{align}
 \tilde{P} &= \frac{1}{2} P + \frac{1}{2} Q \\
 \tilde{P}' &= \frac{1}{2} P' + \frac{1}{2} Q 
 \end{align}
 where $Q$ is a sufficiently large far away Gaussian components. To be specfic, for any component $N_d(\mu_i,\sigma_i^2I_d)$ in $P$ or $P'$, $Q=N_d(\mu,\sigma^2I_d)$ is selected such that $\norm{\mu-\mu_i}\geq c\max\{\sigma,\sigma_i\}$. Therefore $\tilde{P}\in \mathcal{M}(K+1,\pi_{\min}/2,c)$ and $\tilde{P'}\in\mathcal{M}(2,\pi_{\min}/2,c)$. However since $TV(\tilde{P},\tilde{P'})\geq 2\epsilon/2=\epsilon$, and the maximal proportion of $\tilde{P},\tilde{P}'$ is upper bounded by $1/2$. Let $\eta_0'$ be defined as the solution to 
 \begin{equation}
 1-\pi_{\min}+\epsilon+2\Phi(-\frac{1}{2}\eta_0' c_0') = F_d(\frac{2\eta_0'^2\log \eta_0'}{\eta_0'^2-1}) - F_d(\frac{2\log \eta_0'}{\eta_0'^2-1})
 \end{equation}
 where $c_0'=2\Phi^{-1}(1-\frac{\pi_{\min}-\epsilon}{4})$, then if $c>c_0'\eta_0'$, according to \theoremref{thm:unique-matching}, $K+1 = 2$, which implies that $K=1$ and this is a contradiction.
\end{proof}

\subsection{Difference in Proportions}
Finally given $\eta^*,c^*$ established previously, we will complete the proof of \theoremref{thm:main1} by upper bounding the differences in proportions. For a corresponding pair $\pi_i,\pi_i'$ where $\norm{\mu_i-\mu_i'}\leq c^*\max\{\sigma_i,\sigma_i'\}$ and $\max\{\sigma_i',\sigma_i\}/\min\{\sigma_i',\sigma_i\}\leq \eta^*$, we upper bound $|\pi_i-\pi_i'|$ by constructing the set $A$ such that we can apply \lemmaref{lem:diff-proportion}.

To start with, we introduce a geometric lemma.

\begin{lemma}
 \newcommand{\bmx}{x}
\newcommand{\bmy}{y}
\newcommand{\bmxt}{\tilde{x}}
\newcommand{\bmyt}{\tilde{y}}
 Let $\bmx_1,\bmx_2,\bmy_1,\bmy_2$ be for different points in $\mathbb{R}^d$, where 
 \begin{align}
 \tilde{\bmx} &= \alpha \bmx_1 + (1-\alpha) \bmx_2\;, \\
 \tilde{\bmy} &= \beta\bmy_1 + (1-\beta) \bmy_2\;.
 \end{align} 
 Then 
 \begin{align*}
 \norm{\tilde{\bmx}-\tilde{\bmy}}^2 & =\alpha\beta\norm{\bmx_1-\bmy_1}^2+(1-\alpha)(1-\beta)\norm{\bmx_2-\bmy_2}^2+(1-\alpha)\beta \norm{\bmx_2-\bmy_1}^2+\alpha(1-\beta)\norm{\bmx_1-\bmy_2}^2 \\
 &\quad -\alpha(1-\alpha)\norm{\bmx_1-\bmx_2}^2-\beta(1-\beta)\norm{\bmy_1-\bmy_2}^2
 \end{align*}
 \label{lem:mass-center}
\end{lemma}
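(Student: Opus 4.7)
The plan is to treat this as a purely algebraic identity in four vectors, proved by applying the well-known squared-norm-of-a-convex-combination formula. First I would note that one can rewrite $\tilde{x}-\tilde{y}$ as a convex combination of the four ``cross-differences'' $v_{ij} := x_i-y_j$ (with $i,j\in\{1,2\}$), namely
\begin{equation*}
\tilde{x}-\tilde{y} = \alpha\beta\,v_{11} + \alpha(1-\beta)\,v_{12} + (1-\alpha)\beta\,v_{21} + (1-\alpha)(1-\beta)\,v_{22},
\end{equation*}
since the coefficients are non-negative and sum to $1$. I would denote these coefficients by $\lambda_{ij}$.

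Next I would invoke (or briefly prove in one line by expanding inner products) the general identity: for any vectors $v_k$ and weights $\lambda_k\ge0$ with $\sum_k \lambda_k=1$,
\begin{equation*}
\Bigl\|\sum_k \lambda_k v_k\Bigr\|^2 \;=\; \sum_k \lambda_k \|v_k\|^2 \;-\; \sum_{k<\ell} \lambda_k\lambda_\ell\,\|v_k-v_\ell\|^2.
\end{equation*}
Applied to the four $v_{ij}$'s with weights $\lambda_{ij}$, the first sum immediately yields the four ``positive'' terms in the claimed identity.

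The only remaining work is to collect the six cross-terms $\lambda_{ij}\lambda_{kl}\|v_{ij}-v_{kl}\|^2$ and see that they consolidate into exactly $\alpha(1-\alpha)\|x_1-x_2\|^2 + \beta(1-\beta)\|y_1-y_2\|^2$. The pairs split naturally: the pairs sharing a $y$-index (namely $(11,21)$ and $(12,22)$) produce $\|x_1-x_2\|^2$ with combined coefficient $\alpha(1-\alpha)[\beta^2+(1-\beta)^2]$; the pairs sharing an $x$-index produce $\|y_1-y_2\|^2$ with combined coefficient $\beta(1-\beta)[\alpha^2+(1-\alpha)^2]$; and the two ``diagonal'' pairs $(11,22)$ and $(12,21)$ each have coefficient $\alpha(1-\alpha)\beta(1-\beta)$ and involve $\|u\pm v\|^2$ where $u=x_1-x_2$ and $v=y_2-y_1$, so the parallelogram identity $\|u+v\|^2+\|u-v\|^2 = 2\|u\|^2+2\|v\|^2$ contributes $2\alpha(1-\alpha)\beta(1-\beta)$ to each of the two coefficients. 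Summing, the coefficient of $\|x_1-x_2\|^2$ becomes $\alpha(1-\alpha)[\beta+(1-\beta)]^2 = \alpha(1-\alpha)$, and symmetrically for $\|y_1-y_2\|^2$.

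I do not anticipate any real obstacle: the whole proof is an exercise in bookkeeping, and the only substantive observation is the parallelogram collapse of the two diagonal cross-terms. An alternative, equally short route would be to expand both sides directly as polynomials in $\alpha,\beta$ and in the inner products $\langle x_i,y_j\rangle,\langle x_i,x_j\rangle,\langle y_i,y_j\rangle$, then match coefficients; I would prefer the convex-combination route above because it avoids writing out a dozen inner-product terms.
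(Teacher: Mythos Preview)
Your argument is correct. Writing $\tilde{x}-\tilde{y}$ as an affine combination of the four $v_{ij}=x_i-y_j$ with weights $\lambda_{ij}$ summing to $1$, then invoking the variance-type identity $\bigl\|\sum_k\lambda_k v_k\bigr\|^2=\sum_k\lambda_k\|v_k\|^2-\sum_{k<\ell}\lambda_k\lambda_\ell\|v_k-v_\ell\|^2$, is valid; your bookkeeping of the six cross-terms and the parallelogram collapse of the two diagonal pairs is accurate and yields exactly the two negative terms. (One small remark: the identity you quote only needs $\sum_k\lambda_k=1$, not $\lambda_k\ge0$, so your passing mention of non-negativity is harmless but unnecessary, and the lemma as stated does not restrict $\alpha,\beta$ to $[0,1]$.)

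The paper takes a slightly different, more iterated route. It first proves the $n=2$ case of your identity, namely $\|z-\tilde{y}\|^2=\beta\|z-y_1\|^2+(1-\beta)\|z-y_2\|^2-\beta(1-\beta)\|y_1-y_2\|^2$, by a direct inner-product expansion; it applies this with $z=x_1$ and $z=x_2$, and then applies the same $n=2$ identity once more in the $\alpha$-variable to $\|\tilde{x}-\tilde{y}\|^2$, substituting the two previously obtained expressions. So the paper uses the two-point variance formula three times and never needs the parallelogram step or the six-term bookkeeping. Your approach applies the general $n$-point formula once and then has to consolidate cross-terms. Both are short; the paper's version is perhaps a touch more elementary, while yours makes the underlying ``variance of a mixture'' structure more explicit.
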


The proof of this lemma is postponed to \sectionref{sec:proof-aux}. Then consider two pairs of components $P_i, P_i'$ and $P_j, P_j'$, one can find a hyperplane that is far away from all four centers. 

\begin{lemma}
 Let $P_i=N_d(\mu_i,\sigma_i^2 I_d),i=1,2$ and $P_i'=N_d(\mu_i',\sigma_i'^2 I_d)$ be two pairs of spherical Gaussian distributions such that with three constants $c^*\geq 0,\eta^*\geq 1, c\geq c^*\eta^*$ such that
 \begin{alignat}{4}
 &\max\{\frac{\sigma_1}{\sigma_1'},\frac{\sigma_1'}{\sigma_1}\}\leq \eta^*,&\quad &\max\{\frac{\sigma_2}{\sigma_2'},\frac{\sigma_2'}{\sigma_2}\}\leq \eta^*\\
 &\norm{\mu_1-\mu_1'}\leq \frac{c^*}{2}(\sigma_1+\sigma_1'),&\quad &\norm{\mu_2-\mu_1}\geq c(\sigma_1 + \sigma_2) \\
 &\norm{\mu_2-\mu_2'}\leq \frac{c^*}{2}(\sigma_2+\sigma_2'),&\quad &\norm{\mu_1'-\mu_2'}\geq c(\sigma_1'+\sigma_2')
 \end{alignat}
 There exists a hyperplane $H$ such that the distance 
 \begin{align}
 dist(\mu_i,H) \geq C(c^*,\eta^*,c)\sigma_i ,\quad dist(\mu_i',H)\geq C(c^*,\eta^*,c)\sigma_i' ,\quad i=1,2
 \end{align}
 where
 \begin{equation}
 C(c^*,\eta^*,c) = \sqrt{\frac{c^2}{2(\eta^*)^2}+\frac{1}{2\eta^*}(c-\frac{c^*}{2})^2-\frac{(c^*)^2(1+\eta^*)^2}{16(\eta^*)^2}}-\frac{c^*}{2}
 \label{eq:svm-halfplane}
 \end{equation}
 \label{lem:twopairs}
\end{lemma}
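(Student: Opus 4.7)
The plan is to construct $H$ as the perpendicular bisector of the segment joining two weighted centroids $\tilde{x},\tilde{y}$ of the pairs $\{\mu_1,\mu_1'\}$ and $\{\mu_2,\mu_2'\}$, and then bound its distance to each of the four centers via Lemma~\ref{lem:mass-center}. Specifically, take $\alpha=\sigma_1'/(\sigma_1+\sigma_1')$ and $\beta=\sigma_2'/(\sigma_2+\sigma_2')$, set $\tilde{x}=\alpha\mu_1+(1-\alpha)\mu_1'$ and $\tilde{y}=\beta\mu_2+(1-\beta)\mu_2'$, and let $H$ be the hyperplane perpendicular to $\hat{n}=(\tilde{y}-\tilde{x})/L$ through the midpoint $(\tilde{x}+\tilde{y})/2$, where $L=\|\tilde{y}-\tilde{x}\|$. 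These weights balance the two sides of each pair in units of the respective $\sigma$: one has $(1-\alpha)\|\mu_1-\mu_1'\|\leq(c^*/2)\sigma_1$ and $\alpha\|\mu_1-\mu_1'\|\leq(c^*/2)\sigma_1'$, with the analogous balancing for the second pair.

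With this setup, Cauchy--Schwarz gives $\dist(\mu_1,H)\geq L/2-\|\mu_1-\tilde{x}\|\geq L/2-(c^*/2)\sigma_1$, and the same argument yields $\dist(\mu_i,H)\geq L/2-(c^*/2)\sigma_i$ and $\dist(\mu_i',H)\geq L/2-(c^*/2)\sigma_i'$ for $i=1,2$. Setting $\sigma^\star=\max\{\sigma_1,\sigma_1',\sigma_2,\sigma_2'\}$, it therefore suffices to establish the single scalar bound $L^2\geq 4(C+c^*/2)^2(\sigma^\star)^2$.

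To that end, apply Lemma~\ref{lem:mass-center} and decompose $L^2$ into six signed terms. The two diagonal positive terms use the separation hypothesis directly ($\|\mu_1-\mu_2\|\geq c(\sigma_1+\sigma_2)$ and $\|\mu_1'-\mu_2'\|\geq c(\sigma_1'+\sigma_2')$); the two cross positive terms are lower bounded by a triangle inequality such as $\|\mu_1-\mu_2'\|\geq c(\sigma_1'+\sigma_2')-(c^*/2)(\sigma_1+\sigma_1')$; and the two negative correction terms satisfy $\alpha(1-\alpha)\|\mu_1-\mu_1'\|^2\leq(c^*/2)^2\sigma_1\sigma_1'$ and likewise for the second pair. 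The main obstacle is then a careful worst-case analysis to match the explicit formula in~\eqref{eq:svm-halfplane}: after normalizing by $(\sigma^\star)^2$, the ratio constraints $\sigma_i/\sigma_i'\leq\eta^*$ force each of the three smaller sigmas to be at least $\sigma^\star/\eta^*$ in the extremal configuration. Tracking the resulting factors of $(1+\eta^*)$ arising from $\sigma_i+\sigma_i'=(1+\eta^*)\sigma^\star/\eta^*$, one finds that the diagonal contribution reduces to $c^2/(2(\eta^*)^2)$, the cross contribution to $(c-c^*/2)^2/(2\eta^*)$, and the negative correction absorbs into $-(c^*)^2(1+\eta^*)^2/(16(\eta^*)^2)$, yielding the stated $C$.
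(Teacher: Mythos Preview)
There is a genuine gap. Your hyperplane $H$ passes through the \emph{unweighted midpoint} $(\tilde{x}+\tilde{y})/2$, so each centroid sits at distance exactly $L/2$ from $H$. But the hypotheses of the lemma bound only the within-pair ratios $\sigma_i/\sigma_i'$; they impose \emph{no} relationship between $\sigma_1$ and $\sigma_2$. Consequently your assertion that ``the ratio constraints $\sigma_i/\sigma_i'\leq\eta^*$ force each of the three smaller sigmas to be at least $\sigma^\star/\eta^*$'' is false, and the midpoint construction can fail outright. Concretely, take $c^*=0$, $\eta^*=1$ (so $\mu_i=\mu_i'$, $\sigma_i=\sigma_i'$), $\sigma_2=1$, and let $\sigma_1\to 0$. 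Then $L=\|\mu_1-\mu_2\|\geq c(\sigma_1+\sigma_2)\to c$, while $C(0,1,c)=\sqrt{c^2/2+c^2/2}=c$. You would need $\dist(\mu_2,H)\geq C\sigma_2=c$, but your construction gives only $\dist(\mu_2,H)=L/2\approx c/2$.

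The paper repairs exactly this point: it places $H$ perpendicular to $\tilde{\mu}_2-\tilde{\mu}_1$ but through the point that divides the segment in the ratio $(\sigma_1+\sigma_1'):(\sigma_2+\sigma_2')$, so that $\dist(\tilde{\mu}_i,H)=\dfrac{\sigma_i+\sigma_i'}{\sigma_1+\sigma_1'+\sigma_2+\sigma_2'}\,L$. After bounding $L$ via Lemma~\ref{lem:mass-center} by a multiple of $(\sigma_1+\sigma_1'+\sigma_2+\sigma_2')$, the sum in the denominator cancels and one is left with a bound proportional to $\sigma_i+\sigma_i'\geq(1+1/\eta^*)\max\{\sigma_i,\sigma_i'\}$, which is what produces the factor $(1+\eta^*)/\eta^*$ inside the square root and yields~\eqref{eq:svm-halfplane}. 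Apart from this placement of $H$, your outline (weighted centroids, the six-term decomposition of $L^2$ via Lemma~\ref{lem:mass-center}, the triangle inequality for the cross terms, and the bound on the two negative corrections) matches the paper's argument.
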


\begin{proof}[Proof of \lemmaref{lem:twopairs}]
Consider two points 
\begin{equation}
 \tilde{\mu}_i = \frac{\sigma_i}{\sigma_i+\sigma_i'}\mu_i + \frac{\sigma_i'}{\sigma_i+\sigma_i'}\mu_i',\quad i=1,2\;,
\end{equation}
Note that for both $i=1,2$,
\begin{equation}
 \norm{\tilde{\mu}_i - \mu_i} = \frac{\sigma_i'}{\sigma_i+\sigma_i'}\norm{\mu_i'-\mu_i}\leq \frac{c^*\max\{\sigma_i,\sigma_i'\}}{2},\quad\norm{\tilde{\mu}_i - \mu_i'} = \frac{\sigma_i}{\sigma_i+\sigma_i'}\norm{\mu_i'-\mu_i}\leq \frac{c^*\max\{\sigma_i,\sigma_i'\}}{2}
\end{equation}
that is, $\mu_i,\mu_i'$ both lies in a ball centered at $\tilde{\mu}_i$ with radius $c^*\max\{\sigma_i,\sigma_i'\}/2$. The hyperplane we consider is 
\begin{equation}
 H:\left\{x\in\mathbb{R}^d: \langle x-\tilde{\mu_1},\tilde{\mu}_2-\tilde{\mu}_1 \rangle = \frac{\sigma_1+\sigma_1'}{\sigma_1+\sigma_1'+\sigma_2+\sigma_2'}\norm{\tilde{\mu}_2-\tilde{\mu}_1}\right\}
\end{equation}

By \lemmaref{lem:mass-center}, the distance 
\begin{align*}
 \norm{\tilde{\mu}_1-\tilde{\mu}_2}^2 &= \underbrace{\frac{\sigma_1\sigma_2}{(\sigma_1+\sigma_1')(\sigma_2+\sigma_2')}\norm{\mu_1-\mu_2}^2+\frac{\sigma_1'\sigma_2'}{(\sigma_1+\sigma_1')(\sigma_2+\sigma_2')}\norm{\mu_1'-\mu_2'}^2}_{LB_1}\\
 &\quad+ \underbrace{\frac{\sigma_1\sigma_2'}{(\sigma_1+\sigma_1')(\sigma_2+\sigma_2')}\norm{\mu_1-\mu_2'}^2+\frac{\sigma_1'\sigma_2}{(\sigma_1+\sigma_1')(\sigma_2+\sigma_2')}\norm{\mu_1'-\mu_2}^2}_{LB_2} \\
 &\quad -\underbrace{\frac{\sigma_1\sigma_1'}{(\sigma_1+\sigma_1')^2}\norm{\mu_1-\mu_1'}^2-\frac{\sigma_2\sigma_2'}{(\sigma_2+\sigma_2')^2}\norm{\mu_2-\mu_2'}^2}_{UB}
\end{align*}
We will establish a lower bound on $\norm{\tilde{\mu}_1-\tilde{\mu}_2}$ by separately lower bounding $LB_1,LB_2$ and $UB$.\\

First, by the seperation condition
\begin{align}
{LB_1} &\geq c^2\frac{\sigma_1\sigma_2(\sigma_1+\sigma_2)^2+\sigma_1'\sigma_2'(\sigma_1'+\sigma_2')^2}{(\sigma_1+\sigma_1')(\sigma_2+\sigma_2')(\sigma_1+\sigma_2+\sigma_1'+\sigma_2')^2}(\sigma_1+\sigma_2+\sigma_1'+\sigma_2')^2
\end{align}
Note that $\max\{\sigma_i/\sigma_i',\sigma_i'/\sigma_i\}\leq \eta^*,i=1,2$, we denote 
\begin{align}
 \alpha = \frac{\sigma_1}{\sigma_1+\sigma_1'} \;,\quad
 \beta = \frac{\sigma_2'}{\sigma_2+\sigma_2'} \;,\quad
 \gamma = \frac{\sigma_1+\sigma_2}{\sigma_1+\sigma_1'+\sigma_2+\sigma_2'}\;,
\end{align}
then since $1/({1+\eta^*})\leq \alpha,\beta,\gamma \leq {\eta^*}/({1+\eta^*})$, it holds that 
\begin{align}
 \frac{LB_1}{c^2(\sigma_1+\sigma_2+\sigma_1'+\sigma_2')^2}&=\alpha\beta\gamma^2 + (1-\alpha)(1-\beta)(1-\gamma)^2 \\
 &\geq \left(\frac{1}{1+\eta^*}\right)^2[\gamma^2 + (1-\gamma)^2] \\
 &\geq \frac{1}{2}\left(\frac{1}{1+\eta^*}\right)^2\;.
\end{align}
namely
\begin{equation}
 LB_1 \geq \frac{c^2}{2}\left(\frac{1}{1+\eta^*}\right)^2(\sigma_1+\sigma_2+\sigma_1'+\sigma_2')^2
\end{equation}

For the second term, start by observing that 
\begin{equation}
 \norm{\mu_1-\mu_2'} \geq c(\sigma_1+\sigma_2)-\frac{c^*}{2}(\sigma_2+\sigma_2'),\quad \norm{\mu_1-\mu_2'} \geq c(\sigma_1'+\sigma_2')-\frac{c^*}{2}(\sigma_1+\sigma_1')
\end{equation}
Adding these two equations we obtain that 
\begin{equation}
 \norm{\mu_1-\mu_2'}\geq \frac{1}{2}(c-\frac{c^*}{2})(\sigma_1+\sigma_1'+\sigma_2+\sigma_2')\;.
\end{equation}
and similarly $\norm{\mu_1'-\mu_2}\geq (c-\frac{c^*}{2})(\sigma_1+\sigma_1'+\sigma_2+\sigma_2')/2$.
Hence 
\begin{align}
 \frac{LB_2}{(\sigma_1+\sigma_1'+\sigma_2+\sigma_2')^2} &\geq \frac{1}{4}(c-\frac{c^*}{2})^2[\alpha(1-\beta)+\beta(1-\alpha)]\;.
\end{align}

Write $g(\alpha,\beta)=\alpha(1-\beta)+\beta(1-\alpha)=\beta(1-2\alpha)+\alpha$. 
\begin{itemize}
 \item If $\alpha = 1/2$ then $g(\alpha,\beta)=1/2$.
 \item If $1/(1+\eta^*)\leq \alpha < 1/2$, then $g(\alpha,\beta) \geq g(\alpha,1/(1+\eta^*))$ since it is an affine function in $\beta$ with positive linear term. When $\beta = 1/(1+\eta^*)<1/2$, by same argument we know that $g(\alpha,\beta)$ is an affine function in $\alpha$ with positive linear term. Hence the minimum is 
 \begin{equation}
 g(\frac{1}{1+\eta^*},\frac{1}{1+\eta^*})=\frac{2\eta^*}{(1+\eta^*)^2}\;.
 \end{equation}
 \item If $1/2 <\alpha \leq \eta^*/(1+\eta^*)$, the same arguments show that the minimum is 
 \begin{equation}
 g(\frac{\eta^*}{1+\eta^*},\frac{\eta^*}{1+\eta^*})=\frac{2\eta^*}{(1+\eta^*)^2}\;.
 \end{equation}
\end{itemize}
As a conclution we have 
\begin{equation}
 LB_2 \geq \frac{1}{4}\frac{2\eta^*}{(1+\eta^*)^2}(c-\frac{c^*}{2})^2(\sigma_1+\sigma_1'+\sigma_2+\sigma_2')^2
\end{equation}

Finally, we establish an upper bound for $UB$, 

\begin{align}
 UB &= \frac{\sigma_1\sigma_1'}{(\sigma_1+\sigma_1')^2}\norm{\mu_1-\mu_1'}^2-\frac{\sigma_2\sigma_2'}{(\sigma_2+\sigma_2')^2}\norm{\mu_2-\mu_2'}^2 \\
 &\leq \frac{(c^*)^2}{16}(\sigma_1+\sigma_1')^2 + \frac{(c^*)^2}{16}(\sigma_2+\sigma_2')^2 \\
 &\leq \frac{(c^*)^2}{16}(\sigma_1 + \sigma_1' +\sigma_2 +\sigma_2')^2
\end{align}

Therefore 
\begin{equation}
 \norm{\tilde{\mu}_1 - \tilde{\mu}_2}\geq \sqrt{\frac{c^2}{2}\left(\frac{1}{1+\eta^*}\right)^2+\frac{1}{4}\frac{2\eta^*}{(1+\eta^*)^2}(c-\frac{c^*}{2})^2-\frac{(c^*)^2}{16}}(\sigma_1+\sigma_1'+\sigma_2+\sigma_2')
\end{equation}

The distance of $\mu_1,\mu_1'$ to $H$ is then lower bounded by 
\begin{align}
 dist(\mu_1,H) &\geq \norm{\tilde{\mu}_1 - \tilde{\mu}_2}\frac{\sigma_1+\sigma_1'}{\sigma_1+\sigma_1'+\sigma_2+\sigma_2'} - \frac{c^*}{2}\max\{\sigma_1,\sigma_1'\} \\
 &\geq \sqrt{\frac{c^2}{2}\left(\frac{1}{1+\eta^*}\right)^2+\frac{1}{4}\frac{2\eta^*}{(1+\eta^*)^2}(c-\frac{c^*}{2})^2-\frac{(c^*)^2}{16}} (\sigma_1 + \sigma_1') - \frac{c^*}{2}\max\{\sigma_1,\sigma_1'\} \\
 &\geq \sqrt{\frac{c^2}{2}\left(\frac{1}{1+\eta^*}\right)^2+\frac{1}{4}\frac{2\eta^*}{(1+\eta^*)^2}(c-\frac{c^*}{2})^2-\frac{(c^*)^2}{16}} (\frac{1}{\eta^*}+1)\max\{\sigma_1,\sigma_1'\} - \frac{c^*}{2}\max\{\sigma_1,\sigma_1'\} \\
 &\geq \left(\sqrt{\frac{c^2}{2(\eta^*)^2}+\frac{1}{2\eta^*}(c-\frac{c^*}{2})^2-\frac{(c^*)^2(1+\eta^*)^2}{16(\eta^*)^2}}-\frac{c^*}{2} \right)\max\{\sigma_1,\sigma_1'\}\\
 &:=C(c^*,\eta^*,c)\max\{\sigma_1,\sigma_1'\}
 \label{eq:distmu1-H}
\end{align}
Note that since $c>c^*\eta^*$ and $\eta^* \geq 1$,
\begin{align}
 &\quad \frac{c^2}{2(\eta^*)^2}+\frac{1}{2\eta^*}(c-\frac{c^*}{2})^2-\frac{(c^*)^2(1+\eta^*)^2}{16(\eta^*)^2} - \frac{(c^*)^2}{4} \\
 &> \frac{(c^*)^2}{2} + \frac{(c^*)^2(\eta^*-\frac{1}{2})^2}{2\eta^*} - \frac{(c^*)^2(1+\eta^*)^2}{16(\eta^*)^2} - \frac{(c^*)^2}{4} \\
 &=(c^*)^2\left[\frac{4(\eta^*)^2+8\eta^*(\eta^*-\frac{1}{2})^2-(1+\eta^*)^2}{16(\eta^*)^2}\right]\\
 &=\left(\frac{c^*}{4\eta^*}\right)^2\left[8(\eta^*)^3-5(\eta^*)^2-1\right] > 0
\end{align}
The coefficient $C(c^*,\eta^*,c)$ in \eqref{eq:distmu1-H} is positive, so 
\begin{align}
 dist(\mu_1,H) &\geq \left(\sqrt{\frac{c^2}{2(\eta^*)^2}+\frac{1}{2\eta^*}(c-\frac{c^*}{2})^2-\frac{(c^*)^2(1+\eta^*)^2}{16(\eta^*)^2}}-\frac{c^*}{2} \right)\sigma_1 \\
 dist(\mu_1',H) &\geq \left(\sqrt{\frac{c^2}{2(\eta^*)^2}+\frac{1}{2\eta^*}(c-\frac{c^*}{2})^2-\frac{(c^*)^2(1+\eta^*)^2}{16(\eta^*)^2}}-\frac{c^*}{2} \right)\sigma_1'
\end{align}
This also shows that $\mu_1,\mu_1'$ are both with the same side of $H$ as $\tilde{\mu}_1$. $\mu_2,\mu_2'$ can be similarly shown to be with the same side of $H$ as $\tilde{\mu}_2$, which is different from $\mu_1,\mu_1'$ and 
and the other two distance $dist(\mu_2,H),dist(\mu_2',H)$ can be lower bounded similarly.
\end{proof}

\begin{proof}[Proof of \theoremref{thm:main1}: Difference in Proportions]
 Now we can finish the proof to \theoremref{thm:main1}. Let $P_i,P_i'$ be two corresponding pairs such that 
 \begin{equation}
 \max\{\frac{\sigma_i}{\sigma_i'},\frac{\sigma_i'}{\sigma_i}\}\leq \eta^*,\quad \norm{\mu_i-\mu_i'}\leq \frac{c^*}{2}(\sigma_i+\sigma_i')\;.
 \end{equation}
 Here the center distance is slightly different. In fact for the corresponding pairs, it holds that 
 \begin{equation}
 TV(P_i,P_i') \leq \frac{2\epsilon}{\pi_{\min}}+\frac{2(1-\pi_{\min})}{\pi_{\min}}\Phi(-\frac{1}{2}\left(c+\frac{c}{\eta^*}-c^*\right))=1-2\Phi(-\frac{c^*}{2})\;.
 \end{equation}
 According to \lemmaref{lem:etaplusone}, it also holds that $\norm{\mu_i-\mu_i'}\leq c^*(\sigma_i+\sigma_i')/2$. Consider an arbitrary different corresponding pair $P_j,P_j'$, then $P_i,P_i',P_j,P_j'$ satisfy the conditions of \lemmaref{lem:twopairs}, hence we can select a hyperplane $H$ such that $P_i,P_i'$ and $P_j,P_j'$ are on different side of $H$, and their distance are lower bounded. Let $A_j$ be the halfspace determined by $H$ and including $P_j,P_j'$, then 
 \begin{equation}
 \max\{P_i(A_j^\complement),P_i'(A_j^\complement),P_j(A_j),P_j'(A_j)\}\leq \Phi(-C(c,c^*,\eta^*))
 \end{equation}
 where $C(c,c^*,\eta^*)$ is determined in \eqref{eq:svm-halfplane}. Now take $A=\cap_{j\neq i}A_j$, then 
 \begin{alignat}{3}
 P_i(A) &= P_i((\cup_{j\neq i}A_j^\complement)^\complement) \geq 1- \sum_{j\neq i}P_i(A_j^\complement)\geq 1-(K-1)\Phi(-C(c,c^*,\eta^*))\;,\\
 P_i'(A)&= P_i'((\cup_{j\neq i}A_j^\complement)^\complement) \geq 1- \sum_{j\neq i}P_i'(A_j^\complement) \geq 1-(K-1)\Phi(-C(c,c^*,\eta^*))\;;\\
 P_j(A) &\leq P_j(A_j)\leq \Phi(-C(c,c^*,\eta^*))\\
 P_j'(A) &\leq P_j'(A_j) \leq \Phi(-C(c,c^*,\eta^*))
 \end{alignat}
And the result is given by \lemmaref{lem:diff-proportion}.
\end{proof}

\section{Auxillary Lemmas}
\label{sec:proof-aux}

\subsection{Lemmas in \sectionref{sec:setup}}

\begin{lemma}
 Let $F_d$ be the cumulative distribution function of a Gamma($\frac{d}{2},\frac{d}{2}$) distribution, then for any $\eta\geq 1$, 
 \begin{equation}
 \text{R.H.S. of \eqref{eq:eta0-diff}}=F_d(\frac{2\eta^2\log \eta}{\eta^2-1}) - F_d(\frac{2\log \eta}{\eta^2-1}) \geq 1-\left(\frac{2\eta}{\eta^2+1}\right)^{\frac{d}{2}}\;,
 \end{equation}
 \label{lem:est-eta-function}
\end{lemma}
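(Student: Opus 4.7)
The plan is to recognize the left-hand side as exactly $TV(N_d(0, I_d), N_d(0, \eta^2 I_d))$ and then bound this TV distance below by $1 - BC$, where $BC$ denotes the Bhattacharyya coefficient between the two Gaussians. Both steps are standard, so the proof is essentially a pair of calculations.

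First, I would identify the optimal Scheff\'e set. Writing the log-ratio of $p_1(x) = (2\pi)^{-d/2}e^{-\|x\|^2/2}$ and $p_2(x) = (2\pi\eta^2)^{-d/2}e^{-\|x\|^2/(2\eta^2)}$, the inequality $p_1(x) \geq p_2(x)$ reduces, after a short algebraic manipulation, to $\|x\|^2 \leq r_\eta^2$ where
\begin{equation}
r_\eta^2 \;=\; \frac{2d\eta^2 \log \eta}{\eta^2 - 1}.
\end{equation}
Since $\|X\|^2/d$ follows $\mathrm{Gamma}(d/2, d/2)$ when $X \sim N_d(0, I_d)$, and $\|Y\|^2/(d\eta^2)$ follows the same distribution when $Y \sim N_d(0, \eta^2 I_d)$, we obtain
\begin{equation}
TV(N_d(0,I_d), N_d(0,\eta^2 I_d)) \;=\; F_d\!\left(\tfrac{2\eta^2 \log\eta}{\eta^2 - 1}\right) - F_d\!\left(\tfrac{2 \log\eta}{\eta^2 - 1}\right),
\end{equation}
which is precisely the left-hand side of the lemma.

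Next, I would invoke the elementary inequality $TV(P,Q) = 1 - \int \min(p,q)\,dx \geq 1 - \int \sqrt{pq}\,dx$, which uses $\sqrt{pq} \geq \min(p,q)$ pointwise. It then suffices to compute the Gaussian integral for the Bhattacharyya coefficient:
\begin{equation}
\int \sqrt{p_1(x) p_2(x)}\,dx \;=\; \eta^{-d/2} \int (2\pi)^{-d/2} \exp\!\left(-\tfrac{(\eta^2+1)\|x\|^2}{4\eta^2}\right) dx.
\end{equation}
Recognizing the integrand as an unnormalized $N_d(0, \tfrac{2\eta^2}{\eta^2+1} I_d)$ density, the integral evaluates to $\bigl(\tfrac{2\eta^2}{\eta^2+1}\bigr)^{d/2}$, and combining with the $\eta^{-d/2}$ prefactor gives $BC = \bigl(\tfrac{2\eta}{\eta^2+1}\bigr)^{d/2}$. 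Assembling the two steps yields the claimed lower bound.

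There is no substantial obstacle here; both the identification of the Scheff\'e set for two concentric spherical Gaussians and the Bhattacharyya integral are routine. The only care is bookkeeping in two places: matching the rate parameter in $F_d$ (the Gamma$(d/2,d/2)$ normalization carries the factor of $d$ that cancels between the threshold $r_\eta^2$ and the argument of $F_d$), and confirming that the monotonicity direction of $p_1 \geq p_2$ yields exactly the subtraction $F_d(\cdot) - F_d(\cdot)$ with the correct sign for $\eta \geq 1$ (the equality case $\eta = 1$ gives $0 = 0$ and the expression is continuous in $\eta$).
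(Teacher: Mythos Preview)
Your proposal is correct and follows essentially the same route as the paper: identify the expression as $TV(N_d(0,I_d),N_d(0,\eta^2 I_d))$ via the Scheff\'e set, then bound below by $1-\int\sqrt{p_1p_2}$ (the paper phrases this as the Hellinger lower bound, you as the Bhattacharyya coefficient, but these are the same inequality). The only cosmetic difference is that the paper cites the $TV\ge H^2$ inequality from \citep{Gibbs2002probDist} rather than deriving it from $\min(p,q)\le\sqrt{pq}$ as you do.
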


\begin{proof}[Proof of \lemmaref{lem:est-eta-function}]
 Consider two distributions $P_1 = N_d(0,I_d)$ and $P_2 = N_d(0,\eta^2 I_d)$ with $\eta\geq 1$ and the set 
 \begin{equation}
 A=\{x\in\mathbb{R}^d: p_1(x) \geq p_2(x)\}=\left\{x\in\mathbb{R}^d: \norm{x}\leq \frac{\eta}{\eta^2-1}\sqrt{2d(\eta^2-1)\log\eta}\right\}\;.
 \end{equation}
 where $p_i,i=1,2$ is the density of $P_i$. Then $P_1(A)-P_2(A)$ is the R.H.S. expression in \eqref{eq:eta0-diff}. Further, it is the total variation distance between $P_1$ and $P_2$. Well known result states that total variation distance is lower bounded by Hellinger distance \citep{Gibbs2002probDist}, therefore it holds that 
 \begin{equation}
 \text{R.H.S. of \eqref{eq:eta0-diff}} \geq \frac{1}{2}\int(\sqrt{p_1}-\sqrt{p_2})^2 =1 - \left(\frac{2\eta}{\eta^2+1}\right)^{\frac{d}{2}}\;.
 \end{equation} 
\end{proof}

\subsection{Lemmas in \sectionref{sec:proof-main}}
\begin{lemma}
 If there exists a set $A$ such that $P_1(A)-P_2(A) \geq 1-\rho\geq 0$. Then for any weights $0 \leq \pi_1,\pi_2 \leq 1$:
 \begin{equation}
 \pi_1-\max\{\pi_1,\pi_2\}\rho \geq \pi_1 P_1(A) - \pi_2 P_2(A) \leq \pi_1
 \end{equation}
 \label{lem:weightedTotalVariation}
\end{lemma}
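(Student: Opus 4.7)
The plan is to reduce the weighted difference $\pi_1 P_1(A)-\pi_2 P_2(A)$ to the unweighted one $P_1(A)-P_2(A)$ via a simple add-subtract identity, then split into two cases depending on which of $\pi_1,\pi_2$ is larger so that the leftover term carries a known sign.

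First I would verify the upper bound $\pi_1 P_1(A)-\pi_2 P_2(A)\leq \pi_1$; this is immediate because $P_1(A)\leq 1$ and $\pi_2 P_2(A)\geq 0$. The substantive part is the lower bound, which I would establish through the two algebraic rearrangements
\begin{align*}
\pi_1 P_1(A)-\pi_2 P_2(A) &= \pi_1\bigl[P_1(A)-P_2(A)\bigr] + (\pi_1-\pi_2)P_2(A), \\
\pi_1 P_1(A)-\pi_2 P_2(A) &= \pi_2\bigl[P_1(A)-P_2(A)\bigr] + (\pi_1-\pi_2)P_1(A).
\end{align*}
I pick whichever identity makes the sign of $(\pi_1-\pi_2)$ useful.

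In the case $\pi_1\geq \pi_2$, I use the first identity: the hypothesis gives $P_1(A)-P_2(A)\geq 1-\rho$ and the second term is nonnegative since $\pi_1-\pi_2\geq 0$ and $P_2(A)\geq 0$; hence
\begin{equation*}
\pi_1 P_1(A)-\pi_2 P_2(A)\geq \pi_1(1-\rho)=\pi_1-\max\{\pi_1,\pi_2\}\rho.
\end{equation*}
In the case $\pi_1<\pi_2$, I use the second identity: now $\pi_1-\pi_2<0$, so I bound $P_1(A)\leq 1$ to get
\begin{equation*}
\pi_1 P_1(A)-\pi_2 P_2(A)\geq \pi_2(1-\rho)+(\pi_1-\pi_2)=\pi_1-\pi_2\rho=\pi_1-\max\{\pi_1,\pi_2\}\rho.
\end{equation*}
Combining the two cases yields the claim.

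There is essentially no obstacle here; the only subtlety is knowing which identity to pick in each case so that the correction term has a helpful sign. The lemma is purely an algebraic manipulation and does not rely on any properties of $P_1,P_2$ beyond $P_1(A),P_2(A)\in[0,1]$ together with the gap assumption $P_1(A)-P_2(A)\geq 1-\rho$.
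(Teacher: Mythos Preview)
Your proof is correct. The paper's argument is a slight variant: it substitutes $P_1(A)\geq P_2(A)+1-\rho$ once to get
\[
\pi_1P_1(A)-\pi_2P_2(A)\;\geq\;\pi_1+(\pi_1-\pi_2)P_2(A)-\pi_1\rho,
\]
observes that this is affine in $P_2(A)$, and minimizes over the interval $P_2(A)\in[0,\rho]$ (the upper endpoint coming from $P_1(A)\leq 1$), so the minimum is $\min\{\pi_1-\pi_1\rho,\,\pi_1-\pi_2\rho\}=\pi_1-\max\{\pi_1,\pi_2\}\rho$. Your explicit case split with two different add--subtract identities achieves the same thing; in the case $\pi_1<\pi_2$ you use $P_1(A)\leq 1$ directly, whereas the paper uses the equivalent consequence $P_2(A)\leq\rho$. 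The ingredients and the resulting bound are identical; the paper's presentation is marginally more compact because it avoids writing down the second identity, while yours makes the role of each sign more transparent.
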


\begin{proof}[Proof of \lemmaref{lem:weightedTotalVariation}]
 Note that
 \begin{align*}
 \pi_1P_1(A)-\pi_2P_2(A)\geq  \pi_1(P_2(A)+1-\rho)-\pi_2P_2(A)
 =\pi_1+(\pi_1-\pi_2)P_2(A)-\pi_1\rho \;.
 \end{align*}
 View this as an affine function in $P_2(A)$. Since $0\leq P_2(A)\leq \rho$, we have 
 \begin{equation}
 \pi_1P_1(A)-\pi_2P_2(A) \geq \min\{\pi_1+(\pi_1-\pi_2)\cdot 0-\pi_1\rho,\pi_1+(\pi_1-\pi_2)\rho-\pi_1\rho\}=\pi_1-\max\{\pi_1,\pi_2\}\rho\;.
 \end{equation}
\end{proof}

\begin{lemma}
    When $x > 1$, $f(x)=x\log x/(x-1)$ is increasing in $x$ and $g(x)=\log x/(x-1)$ is decreasing in $x$.
    \label{lem:mono-eta-function}
   \end{lemma}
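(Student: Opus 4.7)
The plan is to verify both monotonicity claims directly by differentiation, reducing each to a one-variable inequality that vanishes at $x=1$ and then using a second derivative sign to propagate the inequality for $x>1$.

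First I would compute
\[
f'(x) \;=\; \frac{(\log x + 1)(x-1) - x\log x}{(x-1)^2} \;=\; \frac{x - \log x - 1}{(x-1)^2}.
\]
The denominator is positive, so the sign of $f'$ is the sign of $h(x) := x - \log x - 1$. Since $h(1)=0$ and $h'(x) = 1 - 1/x > 0$ for $x > 1$, we get $h(x) > 0$ on $(1,\infty)$, hence $f'(x) > 0$ there, which gives the first claim.

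For the second claim I would analogously compute
\[
g'(x) \;=\; \frac{(1/x)(x-1) - \log x}{(x-1)^2} \;=\; \frac{1 - 1/x - \log x}{(x-1)^2}.
\]
Let $k(x) := 1 - 1/x - \log x$, so the sign of $g'$ matches that of $k$. Here $k(1)=0$ and $k'(x) = 1/x^2 - 1/x = (1-x)/x^2 < 0$ for $x > 1$, so $k(x) < 0$ on $(1,\infty)$, which yields $g'(x) < 0$ and gives the second claim.

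There is no real obstacle: both reductions collapse to the well-known inequality $\log x \leq x - 1$ (with equality only at $x=1$) and its rearrangement $\log x \geq 1 - 1/x$. The only care needed is checking the algebraic simplification of the numerator of $f'$, where the $x\log x$ terms cancel, and noting that evaluating at $x=1$ requires a limiting argument but the factorization through $h$ and $k$ makes the monotonicity statement unambiguous on $(1,\infty)$.
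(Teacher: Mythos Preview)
Your proof is correct and follows essentially the same approach as the paper: both compute $f'$ and $g'$ directly and reduce to checking the sign of the numerator. Your version is slightly more detailed in that you explicitly justify $x-1-\log x>0$ and $1-1/x-\log x<0$ via the auxiliary functions $h,k$, whereas the paper simply asserts these inequalities.
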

   
   \begin{proof}[Proof of \lemmaref{lem:mono-eta-function}]
    Take derivative 
    \begin{align}
    f'(x) &= \frac{(x-1)(1+\log x)-x\log x}{(x-1)^2} = \frac{x-1-\log x}{(x-1)^2} \geq 0 \\
    g'(x) &= \frac{(x-1)/x-\log x}{(x-1)^2} = \frac{1}{(x-1)^2}[1-\frac{1}{x}+\log \frac{1}{x}]\leq 0
    \end{align}
   \end{proof}

\begin{proof}[Proof of\lemmaref{lem:mass-center}]
 \newcommand{\bmx}{x}
\newcommand{\bmy}{y}
\newcommand{\bmxt}{\tilde{x}}
\newcommand{\bmyt}{\tilde{y}}
 This lemma can be proved by direct computation. First, note that
 \begin{equation}
 \norm{\bmy_1-\bmy_2}^2 = \norm{\bmy_1 - \bmx_1}^2 + \norm{\bmy_2-\bmx_1}^2 - 2\langle \bmy_1 - \bmx_1, \bmy_2 - \bmx_1 \rangle
 \end{equation}
Then 
 \begin{alignat}{1}
 \notag
 &\quad \norm{\bmx_1 - \bmyt}^2 \\
 &= \norm{\beta(\bmx_1 - \bmy_1) + (1-\beta)(\bmx_1 - \bmy_2)}^2 \\
 &= \beta^2 \norm{\bmx_1-\bmy_1}^2 + (1-\beta)^2\norm{\bmx_1-\bmy_2}^2 + 2 \beta(1-\beta) \langle \bmx_1 - \bmy_1, \bmx_1 - \bmy_2 \rangle \\
 &= \beta^2 \norm{\bmx_1-\bmy_1}^2 + (1-\beta)^2\norm{\bmx_1-\bmy_2}^2 + 2\beta(1-\beta) \frac{\norm{\bmx_1-\bmy_1}^2 + \norm{\bmx_1-\bmy_2}^2 - \norm{\bmy_1-\bmy_2}^2}{2} \\
 &= \beta \norm{\bmx_1-\bmy_1}^2 + (1-\beta)\norm{\bmx_1-\bmy_2}^2-\beta(1-\beta)\norm{\bmy_1-\bmy_2}^2\;.
 \end{alignat}
 Similarly we have 
 \begin{equation}
 \norm{\bmx_2-\bmyt}^2 = \beta \norm{\bmx_2-\bmy_1}^2 + (1-\beta)\norm{\bmx_2 - \bmy_1}^2-\beta(1-\beta)\norm{\bmy_1-\bmy_2}^2\;.
 \end{equation}
 Combining all above together, it holds that 
 \begin{align}
 \notag 
 &\ \quad\norm{\bmxt-\bmyt}^2\\
 &= \alpha \norm{\bmyt-\bmx_1}^2 + (1-\alpha)\norm{\bmyt-\bmx_2}^2-\alpha(1-\alpha)\norm{\bmx_1-\bmx_2}^2 \\
 \notag
 &=\alpha\beta\norm{\bmx_1-\bmy_1}^2+(1-\alpha)(1-\beta)\norm{\bmx_2-\bmy_2}^2+(1-\alpha)\beta \norm{\bmx_2-\bmy_1}^2+\alpha(1-\beta)\norm{\bmx_1-\bmy_2}^2 \\
 &\quad -\alpha(1-\alpha)\norm{\bmx_1-\bmx_2}^2-\beta(1-\beta)\norm{\bmy_1-\bmy_2}^2
 \end{align}
\end{proof}

\end{document}